\newcommand{\PAR}[1]{\vskip4pt \noindent{\bf #1~}}
\newcommand{\M}[1]{\mathtt{#1}}
\newcommand{\V}[1]{\mathbf{#1}}
\def\sfc{{\texttt{5pt}}\xspace}
\def\sp{{\texttt{P3P}}\xspace}
\def\sft{{\texttt{5pt+P3P}}\xspace}
\def\sftENM{{\texttt{5pt+P3P+ENM}}\xspace}
\def\sftRCENM{{\texttt{5pt+P3P+R+F+ENM}}\xspace}
\def\sftRC{{\texttt{5pt+P3P+R+F}}\xspace}
\def\sftm{{\texttt{4p3v(M)}}\xspace}
\def\sftmENM{{\texttt{4p3v(M)+ENM}}\xspace}
\def\sftmR{{\texttt{4p3v(M)+R}}\xspace}
\def\sftmRC{{\texttt{4p3v(M)+R+F}}\xspace}
\def\sftmRCENM{{\texttt{4p3v(M)+R+F+ENM}}\xspace}
\def\sftmd{{\texttt{4p3v(M$\pm \delta$)}}\xspace}
\def\sftmdENM{{\texttt{4p3v(M$\pm \delta$)+ENM}}\xspace}
\def\sftmdR{{\texttt{4p3v(M$\pm \delta$)+R}}\xspace}
\def\sftmdRC{{\texttt{4p3v(M$\pm \delta$)+R+F}}\xspace}
\def\sftmdRCENM{{\texttt{4p3v(M$\pm \delta$)+R+F+ENM}}\xspace}
\def\sfhc{{\texttt{4p3v(HC)}}\xspace}
\def\sfa{{\texttt{4p3v(A)}}\xspace}
\def\sfaf{{\texttt{4p3v(A)}}\xspace}
\def\sfafENM{{\texttt{4p3v(A)+ENM}}\xspace}
\def\sfafR{{\texttt{4p3v(A)+R}}\xspace}
\def\sfafRC{{\texttt{4p3v(A)+R+F}}\xspace}
\def\sfafRCENM{{\texttt{4p3v(A)+R+F+ENM}}\xspace}
\newtheorem{theorem}{Theorem}
\newtheorem{lemma}[theorem]{Lemma}
\pgfplotsset{
compat=1.9,
legend image code/.code={
\draw[mark repeat=4,mark phase=1]
plot coordinates {
(0cm,0.02cm)
(0.1cm,0.02cm)        
(0.2cm,0.02cm)
(0.3cm,0.02cm)       
%(0.4cm,0.02cm)      
};%
}
}
\definecolor{Seaborn1}{HTML}{1f77b4}
\definecolor{Seaborn2}{HTML}{ff7f0e}
\definecolor{Seaborn3}{HTML}{2ca02c}
\definecolor{Seaborn4}{HTML}{d62728}
\definecolor{Seaborn5}{HTML}{9467bd}
\definecolor{Seaborn6}{HTML}{8c564b}
\definecolor{Seaborn7}{HTML}{e377c2}
\definecolor{Seaborn8}{HTML}{7f7f7f}
\definecolor{Seaborn9}{HTML}{bcbd22}
\definecolor{Seaborn10}{HTML}{17becf}
\pgfplotsset{width=10cm,compat=1.9}
\tikzstyle{bedge}=[line width=0.4pt, color=blue]
\tikzstyle{redge}=[line width=0.4pt, color=red]
\tikzstyle{gedge}=[line width=0.4pt, color=green]
\tikzstyle{bsvertex}=[circle, draw=black, fill=cyan, inner sep=0pt, minimum size=0.4pt]
\tikzstyle{rsvertex}=[circle, draw=black, fill=red, inner sep=0pt, minimum size=0.2pt]
\newcommand{\iccvnew}{
\begin{tikzpicture}
    %\node[anchor=south west,inner sep=0] (image) at (0,0) {\includegraphics[width=0.47\columnwidth]{figures/new_teaser.png}};
    \node[anchor=south west,inner sep=0] (image) at (0,0) {\includegraphics[width=0.85\columnwidth]{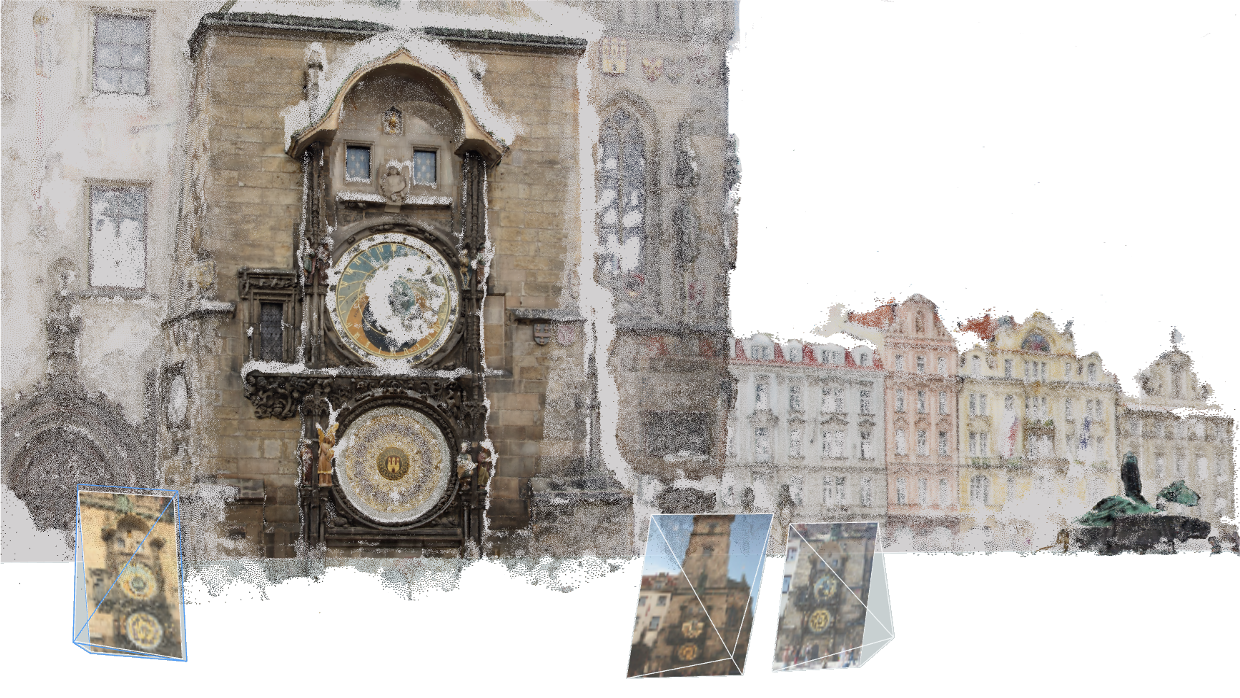}};
    \begin{scope}[x={(image.south east)},y={(image.north west)}]
        
        \draw[redge] (0.4521, 0.815) -- (0.06, 0.08);
        \draw[redge] (0.314, 0.627) -- (0.06, 0.08);
        \draw[redge] (0.39, 0.403) -- (0.06, 0.08);
        \draw[redge] (0.3321, 0.885) -- (0.06, 0.08);

        \draw[gedge] (0.4521, 0.815) -- (0.7202, 0.085);
        \draw[gedge] (0.314, 0.627) -- (0.7202, 0.085);
        \draw[gedge] (0.39, 0.403) -- (0.7202, 0.085);
        \draw[gedge] (0.3321, 0.885) -- (0.7202, 0.085);

        \draw[bedge] (0.4521, 0.815) -- (0.59, 0.06);
        \draw[bedge] (0.314, 0.627) -- (0.59, 0.06);
        \draw[bedge] (0.39, 0.403) -- (0.59, 0.06);
        \draw[bedge] (0.3321, 0.885) -- (0.59, 0.06);

        \draw[->, orange, thick, dashed] plot [smooth, tension=1] coordinates {(0.17, 0.05) (0.3, -0.02) (0.49, 0.04)};
        \node at (0.32, -0.07) {$\texttt{\scriptsize Approx. geometry}$};

        \draw[->, orange, thick, dashed] plot [smooth, tension=1] coordinates {(0.7302, 0.085) (0.7502, 0.425) (0.6202, 0.685)};
        \node at (0.7102, 0.675) {$\texttt{\scriptsize P3P}$};

    \end{scope}
\end{tikzpicture}
}
\definecolor{cvprblue}{rgb}{0.21,0.49,0.74}
\title{
Practical solutions to the relative pose of three calibrated cameras}
\author{Charalambos Tzamos$^{\textrm{1,}\ast}$ \quad
Viktor Kocur$^{\textrm{2},\ast}$ \quad
Yaqing Ding$^\textrm{1}$ \quad
Daniel Barath$^\textrm{3}$ \quad
Zuzana Berger Haladová$^\textrm{2}$\\
Torsten Sattler$^\textrm{4}$ \quad
Zuzana Kukelova$^\textrm{1}$\vspace{10pt}\\
$^\textrm{1}$Visual Recognition Group, Faculty of Electrical Engineering, Czech Technical University in Prague\\
%{\tt\small \{tzamocha, yaqing.ding, kukelzuz\}@fel.cvut.cz} \\
$^\textrm{2}$Faculty of Mathematics, Physics and Informatics, Comenius University in Bratislava\\
%{\tt\small \{viktor.kocur, haladova\}@fmph.uniba.sk} \\
$^\textrm{3}$ETH Zürich (Zurich), HUN-REN SZTAKI (Budapest) \\
%{\tt\small danielbela.barath@inf.ethz.ch} \\
$^\textrm{4}$Czech Institute of Informatics, Robotics and Cybernetics, Czech Technical University in Prague\\
%{\tt\small torsten.sattler@cvut.cz}
}
\definecolor{best}{rgb}{1.0, 0.6, 0.6}    % light red
\definecolor{second}{rgb}{1.0, 0.8, 0.6}  % light orange
\definecolor{third}{rgb}{1.0, 1.0, 0.6}  % light yellow
\LetLtxMacro{\OriginalCellColor}{\cellcolor}
\RenewDocumentCommand\cellcolor{m m}{%
  \OriginalCellColor{#1}%  color the cell
  #2%                    then typeset the “content” you passed
}
\begin{document}

\maketitle
\begin{abstract}
 We study the challenging problem of estimating the relative pose of three calibrated cameras from four point correspondences.
 We propose novel efficient solutions to this problem that are based on the simple idea of using four correspondences to estimate an approximate geometry of the first two views. We model this geometry either as an affine or a fully perspective geometry estimated using one additional approximate correspondence.
We generate such an approximate correspondence using a very simple and efficient strategy, where the new point is the mean point of three corresponding input points. 
The new solvers are efficient and easy to implement, since they are based on existing efficient minimal solvers, i.e., the 4-point affine fundamental matrix, the well-known 5-point relative pose solver, and the \texttt{P3P} solver. Extensive experiments on real data show that the proposed solvers, when properly coupled with local optimization, achieve state-of-the-art results, with the novel solver based on approximate mean-point correspondences being more robust and accurate than the affine-based solver. 
\end{abstract}
\vspace{-5.8mm}%
\section{Introduction}
\label{sec:intro}
\makeatletter
\def\blfootnote{\gdef\@thefnmark{}\@footnotetext}
\makeatother
\blfootnote{$^\ast$ Equal contribution}
Camera geometry estimation is crucial in many computer vision applications, \eg, visual navigation~\cite{DBLP:journals/ram/ScaramuzzaF11}, Structure-from-Motion~\cite{Snavely-IJCV-2008},
augmented 
reality~\cite{Castle08ISWC}, self-driving cars~\cite{hane20173d}, 
and 
visual localization~\cite{Sattler16PAMI}.
Due to
noise and outliers in  input correspondences, 
the predominant way for camera geometry estimation is to use a hypothesis-and-test framework, \eg,  RANSAC~\cite{Fischler-Bolles-ACM-1981,Chum-2003, DBLP:journals/pami/RaguramCPMF13, barath2017graph}. 
For RANSAC-like methods, using as few (ideally the minimal number of) correspondences as possible for estimation 
is important since the number of RANSAC iterations (and  thus its run-time)
grows exponentially with the number of correspondences required for the model estimation.

\begin{figure}[t]
     \centering
     \iccvnew
     \caption{
   Visualization of the four-points-in-three-views (4p3v) problem and our solution %
   based on %
   using four correspondences to efficiently estimate an approximate geometry of the first two views and then register the third view using a \texttt{P3P} solver~\cite{lambda-twist}.
      }\vspace{-12pt}
     \label{fig:teaser}
\end{figure}

Minimal camera geometry problems often result in complex systems of 
polynomial
equations.
Efficient algebraic methods helped solve many previously unsolved 
problems
~\cite{Stewenius-CVPR-2005,bujnak_cvpr2008,larsson2019revisiting,DBLP:conf/cvpr/KukelovaP07,kukelova2013real,Stewenius-ISPRS-2006}.
Still,  
they fail to generate efficient and/or numerically stable solutions for some configurations.
In this paper, 
we study one such challenging problem: 
Estimating the relative pose of three calibrated cameras. 
This problem has received attention
for a long time~\cite{holt1995,Quan2006,leonardos_cvpr2015,Martyushev16,Aholt2014}. 
However, due to its complexity, it is still not considered fully solved. There are no efficient and practical solutions for most of the minimal configurations of point and/or line correspondences~\cite{Kileel2017}.
One such configuration that is particularly interesting is the notoriously difficult configuration of four points in three calibrated views~\cite{Quan2006, Nister-5pt-PAMI-2004}, known as the 4p3v problem.  

State-of-the-art algebraic and numerical methods are known to fail in generating efficient and numerically stable solutions to the 4p3v problem.  
The existing methods for solving this problem are only approximate~\cite{Hruby_cvpr2022,DBLP:journals/ijcv/NisterS06}.
By solving only for one (or a few) solutions from the 272 solutions 
of the 4p3v problem~\cite{Hruby_cvpr2022}, and by discretely sampling the space of potential solutions~\cite{DBLP:journals/ijcv/NisterS06}, the existing 4p3v methods can often fail, \ie, the returned solution can be, in general, arbitrarily far from the geometrically correct solution. 
To decrease the failure rate, both methods~\cite{Hruby_cvpr2022,DBLP:journals/ijcv/NisterS06} require a lot of tuning and are challenging to re-implement.\footnote{%
There is no publicly available implementation for~\cite{DBLP:journals/ijcv/NisterS06}. 
Implementation of~\cite{Hruby_cvpr2022} %
is quite complex and requires a non-negligible effort to run.}

In contrast to complex solutions to the configuration of four points in three views, there is a simple and efficient solution for the configuration where five correspondences are detected in the first two views and three of these points are also visible in the third view.  
In this case, the \sft solver first estimates the relative pose of two 
cameras from five correspondences using an efficient \texttt{5pt} solver~\cite{Nister-5pt-PAMI-2004}, and then registers the third camera using a \texttt{P3P} solver~\cite{lambda-twist}.

Following the idea of first estimating the relative pose of two cameras, 
we propose two novel approaches 
for solving
the 4p3v problem. 
Our solutions are based on the simple idea
of using four correspondences to estimate an approximate geometry of the first two views and then registering the third camera using a \texttt{P3P} solver~\cite{lambda-twist}.

In the first
approach, 
we approximate the geometry of the first two views using affine cameras. The approximation using affine cameras was shown to provide good accuracy for relative pose estimation if coupled with geometry refitting and local optimization inside RANSAC~\cite{pritts_ivcnz13}. We denote the proposed affine-based 4p3v solver as \sfaf.

In the second approach, we use a %
less restrictive approximation.
In this case,
%Here, 
we estimate the full perspective geometry, \ie, the full 5DoF essential matrix, using four %original
input
correspondences and one additional approximate 
correspondence in two views.
We generate the approximate correspondence using the locations of three of the 
%original 
four input 
correspondences. %
Under the assumption of a para-perspective projection, \ie, of affine geometry, the mean point of three 3D points is projected to the mean points in both images~\cite{Zhang2014}.
Thus,
the new correspondence is generated as the correspondence between the mean points of three corresponding points detected in two views. 
This approximate mean-point correspondence can also be seen as a correspondence under the $1^{st}$-order approximation of the homography defined by the plane passing through the three %
corresponding 3D points. We denote the proposed %
mean point-based 4p3v solver as \sftm.

While the proposed \sftm solver returns significantly more accurate poses than the \sfaf solver, both solvers may not be accurate enough if not properly treated inside RANSAC.
Thus, in this paper, we propose several ways to improve the accuracy of the proposed approximate solvers.
(i) To compensate for noise in the mean point correspondences, we introduce the \sftmd solver that generates two additional correspondences in the vicinity of the mean point, and in the first step calls the \texttt{5pt} solver~\cite{Nister-5pt-PAMI-2004} three times. 
(ii) To improve the approximate geometry estimated between the first two views, we refit this geometry using a non-minimal relative pose solver and inliers obtained from the approximate geometry (ENM). 
(iii) We use the fourth correspondence in the third view to filter out geometrically infeasible solutions (+F) and to refine the solutions 
on four input correspondences in three views
using just a few iterations of Levenberg-Marquardt (LM) refinement (+R). 
While conceptually %very 
simple and efficient, the novel solvers achieve state-of-the-art results 
on real data.

\noindent The contributions of the paper are as follows: 
\begin{itemize}
\item We propose two groups of novel solutions for the well-known and challenging 4p3v problem: %
\sfa- and \sftm-based solvers. 
 These solutions are based on the simple idea of using four correspondences to estimate an approximate geometry in the first two views. 
 Compared to state-of-the-art 4p3v solvers~\cite{Hruby_cvpr2022,DBLP:journals/ijcv/NisterS06}, which are non-trivial and difficult to re-implement such that they are numerically stable and fast, our new solutions
can be easily implemented using existing efficient implementations of the linear \texttt{4pt} affine fundamental matrix solver~\cite{hartley2006multiple}, the \texttt{5pt} solver~\cite{Nister-5pt-PAMI-2004} and the \texttt{P3P} solver~\cite{lambda-twist}. 
The source code is available at \ \href{https://github.com/kocurvik/threeview}{https://github.com/kocurvik/threeview} \ and \ \href{https://doi.org/10.5281/zenodo.16599943}{https://doi.org/10.5281/zenodo.16599943}. The data used are available and can be accessed at \ \href{https://doi.org/10.5281/zenodo.16603086}{https://doi.org/10.5281/zenodo.16603086}.
 \item We present several ways of improving the accuracy and the speed of the proposed solvers, as well as a strategy for efficiently using these approximate solvers inside a RANSAC-style paradigm. 
We show that the new solvers achieve state-of-the-art results in terms of accuracy on real data. While both \sfaf- and \sftm-based solvers achieve comparable results when coupled with the suggested non-minimal geometry refitting and local optimization, %
our \sftm-based solvers are more robust to the scene geometry and RANSAC inlier thresholds.
\item To 
our knowledge, we are the first to extensively evaluate solutions to the 4p3v problem on a large variety of real-world scenes and within state-of-the-art RANSAC frameworks, and to compare them to the baseline %
\sft solver.
We report results on 3 datasets, consisting of 18 different scenes and altogether 90,000 camera triplets.
\end{itemize}

\section{Related work}
Estimating the relative pose of three cameras from a minimal number of point and line  correspondences 
is known as an extremely challenging problem. 
For three uncalibrated cameras, 6 point correspondences are necessary to estimate the trifocal tensor, with a solution known for a long time~\cite{Quan_pami95,Torr97a}. 
 Solutions to three minimal combinations of points and lines are presented in~\cite{Oskarsson_bmvc2004}. 
The
configuration of 9 lines is
more challenging and was solved only recently 
~\cite{larsson2017efficient}.  
Yet,
the
solver is far from practical
and
runs 
17.8s.

For calibrated cameras, the configuration that attracts most of the attention is the configuration of four points in three views (the 4p3v problem). Note that this is not a minimal configuration since it generates 12 constraints for 11 degrees-of-freedom (DoF).
The 4p3v problem is known to be extremely difficult to solve.
Several papers present mostly theoretical results~\cite{leonardos_cvpr2015,Martyushev16,Aholt2014}.
For four triplets of exact points without noise, it is shown that the 4p3v problem has, in general, a unique solution~\cite{holt1995,Quan2006}.

To the best of our knowledge, there are only two reasonably efficient solutions to the 4p3v problem reported in the literature.
The first solver~\cite{DBLP:journals/ijcv/NisterS06} is based on a similar idea as our solvers, \ie, to first estimate the relative pose of two cameras and then register the third camera using a \texttt{P3P} solver~\cite{lambda-twist}. 
To compute the pose of the first two cameras using information only from four point correspondences, the solver needs  additional information about the position of one epipole.  
The paper shows that the four point correspondences between two calibrated views constrain the epipole in each image to lie on a curve of degree ten.
Thus, the solver performs one-dimensional exhaustive search and sweeps a $10^{th}$-degree curve of possible epipoles.
For each potential epipole, it computes the relative pose of two cameras, registers the third camera using three triangulated points, and finally extracts the solution minimizing the reprojection error of the fourth point in the third view. 
Evaluation of the solver on one potential epipole is fast.  
Yet, %However, 
in contrast to our proposed \sftm-based solvers, the error of the sampled epipole for one fixed point on the curve is not bounded since the true epipole can lie anywhere on the curve, and in many scenarios it is very far from the image center (\eg, outside the image).
Thus, to obtain reasonably accurate and stable results, usually 1,000 candidates need to be evaluated. 
Even then, %
refinement at multiple local minima is required to improve the accuracy. The runtimes reported for this solver were $1-12ms$ depending on the number of points searched. On a small number of synthetic experiments, the paper shows that the translation error returned by the proposed solver is usually $1-10\deg$ higher than the error returned by the \sft solver.
The literature does not compare against~\cite{DBLP:journals/ijcv/NisterS06} as there is no publicly available implementation %
and it is hard to re-implement. 

The second efficient solver to the 4p3v problem was published only recently~\cite{Hruby_cvpr2022}. In this paper, the authors first transform the 4p3v problem into a minimal problem by considering a line passing through the last correspondence in the third view.
The resulting system of equations is solved using an efficient Homotopy continuation (HC) method~\cite{Fabbri_CVPR2020,SommeseAndrewJ2005Tnso}. 
To avoid computing large numbers of spurious solutions, an MLP-based classifier is trained. For a given problem $p$, it selects one or several starting problem-solution pairs (so-called anchors), such that the geometrically meaningful/correct solution of $p$ can be obtained by HC starting from this anchor. This strategy is fast, running $16.3\mu s$ on average per solution. 
However, it has a high failure rate. The success rate of the 4p3v solver reported in~\cite{Hruby_cvpr2022} %
on two test %
datasets and data without noise is $26.3\%$.  
\cite{Hruby_cvpr2022} 
 do not show results %
for a real scenario, \ie, a RANSAC-like framework with noisy data. 
Providing such an evaluation, we show that our much simpler solvers outperform~\cite{Hruby_cvpr2022}. 
Solutions to the 4p3v problem for orthographic and 
para-perspective views were presented in~\cite{xu_ortho4p3v,Higgins-4p3v91}. In~\cite{Higgins-4p3v91}, the author suggested an iterative approach for updating to perspective views, but  
reported results only on a few synthetic instances. %
According to our 
extensive 
experiments, the update does not work on real data with general perspective cameras. This solver is returning large errors even after incorporating it into RANSAC with local optimization.  We see two main reasons: (i) The set of inliers that satisfy an approximate para-perspective camera model in all three images is usually quite small, and (ii) the %
estimated
model 
is often very far from the perspective optimum, and thus %
local optimization does not converge to a good solution.
In~\cite{Duff_PL1P,Kileel2017,duff2019plmp}, the authors aim to classify and derive 
the number of solutions for different minimal configurations of points and lines in three calibrated views.
Solutions to two minimal configurations
combining
points and lines 
were proposed in~\cite{Fabbri_CVPR2020} %
and solved using a HC %
method~\cite{SommeseAndrewJ2005Tnso}. 
Due to their complexity, 
the solvers are not practical. %
A GPU HC method
was also used to solve minimal problems of four points/six lines in three views %
for a generalized 
camera in~\cite{Ding_2023_ICCV} and of four points in three cameras with an unknown shared focal length (4p3vf) in~\cite{Chien_2022_CVPR,Cin_2024_CVPR}.
Efficient GPU implementations 
of the 4p3vf solvers
run $16.7ms$ to $154ms$. These times are still too slow for practical applications.

An approximation of perspective cameras using affine ones was used in several papers on camera geometry estimation~\cite{Collins2014InfinitesimalPP}. 
%In~\cite{oberkampf1996} and~\cite{Horaud97}, the authors propose iterative approaches to the absolute camera pose estimation problem, 
\cite{oberkampf1996} and~\cite{Horaud97} propose iterative approaches to the absolute camera pose estimation problem, 
\ie, the PnP problem.  
Both methods first compute the pose for an affine camera 
%(weak-perspective 
(weak-
or para-perspective). Then the error induced by the affine camera approximation is used to adjust the constraints on the pose and recompute it. 
%However, 
These methods do not guarantee convergence to the perspective model solution.
In~\cite{pritts_ivcnz13}, affine cameras are used to efficiently solve the relative pose problem of two uncalibrated cameras from two affine correspondences. 
%(AC). 
These affine correspondences are %either 
transformed to six point or two ellipse correspondences and %subsequently 
then
used to estimate the affine fundamental matrix. 
The paper also proposes a RANSAC framework that uses local optimization~\cite{lebedaLO} to estimate the full fundamental matrix using inliers from the approximate affine model.

Points that are sampled based on feature geometry to generate point correspondences from affine or scale- and orientation-invariant feature correspondences were also  used %
in~\cite{perd2006epipolar,BEEP,pritts-cvpr2018}.
In contrast,
our M-based solvers %
use 
only point correspondences, without associated feature geometry, to generate an additional point correspondence. %

\section{Estimating the relative pose of three cameras}
\label{sec:solvers}
In this section, we describe different solutions for estimating the relative pose of three calibrated cameras. 
We start with a baseline solution for the minimal configuration of three points visible in all three cameras and two additional points visible in two of the three cameras (the (5,5,3) configuration). 
Next, we present our novel solutions for the configuration of four points visible in all three cameras (the (4,4,4) configuration). 
This configuration generates an over-constrained problem. In this case, we have one more constraint than DoF. 
A minimal solution would need to drop one constraint, \eg, %
by considering only a line passing through one of the points in the third view~\cite{Hruby_cvpr2022} or by considering a ``half" point correspondence.
 Since, in practice, we always have full correspondences and sampling one less point in one view leads to an under-constrained problem, the (4,4,4) configuration, is usually considered  ``minimal".

\PAR{5pt+P3P solver:}
The \sft  solver first estimates the relative pose of two cameras from 5 image point correspondences using the efficient \texttt{5pt} solver~\cite{Nister-5pt-PAMI-2004}. 
Next, the three points %
visible in all three views are triangulated. Finally, the third camera is registered using the three 2D-3D point correspondences and the well-known efficient \texttt{P3P} solver~\cite{lambda-twist}. 
 This straightforward solver, which is based on existing efficient solvers~\cite{Nister-5pt-PAMI-2004,lambda-twist}, was discussed in several works~\cite{Duff_PL1P,Nister-5pt-PAMI-2004,Rodehorst2017,DBLP:journals/ijcv/NisterS06}. 
 % Nister \etal~
 \cite{DBLP:journals/ijcv/NisterS06} showed that the \sft solver performs better than their dedicated 4p3v solver on synthetic data.
Yet, %
the most recent works~\cite{Fabbri_CVPR2020,Hruby_cvpr2022} that study the three view relative pose problem %
do not discuss the \sft solver %
and do not use it as a baseline for comparison.  
 To the best of our knowledge, the performance of this solver on real data and within state-of-the-art RANSAC frameworks in the context of the 4p3v problem has not been extensively studied. %
 Our paper fills 
 % We fill 
 this gap in the literature.

\subsection{Approximate solutions to the 4p3v problem}
In contrast to the (5,5,3) configuration, the (4,4,4) configuration, \ie, the configuration of four points in three views, leads to significantly more complex equations. State-of-the-art algebraic and numerical methods are known to fail in generating efficient and numerically stable solutions to these equations. 
Thus, %
solutions to the 4p3v problem require some approximations to be practical in real-world applications. 
This section introduces several practical approximate solutions to the 4p3v problem. 
Similarly to the \sft solver, we decompose the problem into the problem of first estimating the relative pose of two cameras and then registering the third camera with an efficient \texttt{P3P} solver~\cite{lambda-twist}. The idea is to assume an approximate geometry only in the first two views and use four input point correspondences to efficiently estimate this geometry. Next we describe two groups of such solvers.

 \PAR{4p3v(A) solver:}
The first group of solvers %
approximates the geometry in the first two views using affine cameras. Although the approximation can be quite far from the correct geometry, %
\cite{pritts_ivcnz13} showed that an approximate  affine camera model provides %
good accuracy for relative pose estimation if coupled with geometry refitting (using the full 7DoF fundamental matrix) and local optimization inside RANSAC. 
The proposed \sfaf solver first uses four point correspondences in two views to efficiently estimate the affine fundamental matrix $\M F_\M{A}$~\cite{hartley2006multiple} and then registers the third camera using three triangulated points and a \texttt{P3P} solver~\cite{lambda-twist}.

 \PAR{4p3v(M) solver:}
The affine camera model used in the \sfa-based solvers can be quite imprecise. This motivates us to introduce solvers that are based on a less restrictive approximation. 
In this case, we approximate only one point correspondence in two views. 
Under the assumption of a para-perspective projection, \ie, of affine geometry, it is known that the mean point of three 3D points is projected to the mean points of their projections in both images~\cite{Zhang2014}.
Thus, we generate one new approximate correspondence in two views as the correspondence between the mean points of three corresponding points detected in these views. 
Let $\V m^l$ be the mean point 
of three points $\left\{\V x_i^l,\V x_j^l,\V x_k^l\right\}, \; i,j,k \in \left\{1,\dots, 4\right\}$, in  the view $l \in \left\{1,2\right\}$, then $\V m^1 \leftrightarrow \V m^2$ is our new approximate correspondence.
% With this additional fifth correspondence, t
The new \sftm solver solves the 4p3v problem by first estimating the full 5DoF essential matrix from four original correspondences $\V x_i^1 \leftrightarrow \V x_i^2,\;  i = 1,\dots, 4$  and the $5^{th}$ correspondence  $\V m^1 \leftrightarrow \V m^2$. This is done using the efficient \texttt{5pt} relative pose solver~\cite{Nister-5pt-PAMI-2004}. 
As with previous solvers, %
the third camera is registered using a  \texttt{P3P} solver~\cite{lambda-twist}. 
In short, %
the \sftm solver solves the 4p3v problem using the \sft solver and one approximate correspondence.

As we show on large amounts of synthetic and real data (see Sec.~\ref{sec:experiments} and Supp.~mat.~(SM)), the  \sftm solver provides much more accurate estimates than the \sfaf solver. %
Moreover, even in the basic form (without the modifications presented in the next section), on many scenes it performs comparably to the state-of-the-art HC solver~\cite{Hruby_cvpr2022}.
This can be attributed to several facts and observations:
 (1)  The $\V m^1 \leftrightarrow \V m^2$ correspondence does not need to be seen as a correspondence of points that are projections of the mean point of three 3D points  $\V X_i, \V X_j,$ and $\V X_k$ (in which case both $\V m^1$ and $\V m^2$ would have some error).  
 We can look at this correspondence as a correspondence in which we fix a point in one view, \eg, $\V m^1$, and generate a corresponding point in the second view. 
 In this case, to generate a good correspondence, we only require %that 
 the point in the second view to be reasonably close to the epipolar line defined by the mean point $\V m^1$ in the first view, \ie, the 2D point does not need to correspond to one particular 3D point with a given depth.
(2) 
% \newtext{
{
 The ray from the center of the first camera through the mean point $\V m^1$  intersects the plane of $\V X_i, \V X_j,$ and $\V X_k$ at  point $\V M$, which lies inside the triangle formed by these 3D points. The projection of $\V M$ into the second camera lies inside the triangle formed by $\left\{\V x_i^2,\V x_j^2,\V x_k^2\right\}$. By construction, this projection lies on the epipolar line $\V E \V m^1$. 
 Thus the epipolar line defined by  $\V m^1$  passes through the triangle defined by $\left\{\V x_i^2,\V x_j^2,\V x_k^2\right\}$. 
 Consequently, the maximum distance of $\V m^2$ in the second image from the epipolar line is bounded by the maximum distance of $\V m^2$ from $\left\{\V x_i^2,\V x_j^2,\V x_k^2\right\}$.  For a formal lemma and proof, 
 %and a visualization, 
 see SM.}
%
% The epipolar line defined by 
% $\V m^1$ 
% passes through the triangle defined by the corresponding three points $\left\{\V x_i^2,\V x_j^2,\V x_k^2\right\}$ in the second image. 
% Thus, the maximum distance of $\V m^2$ in the second image from the epipolar line is bounded by the maximum distance of $\V m^2$ from $\left\{\V x_i^2,\V x_j^2,\V x_k^2\right\}$. 
%
% \newtext{The camera centers $\V C^1$ and $\V C^2$, along with the 3D points $\V X_i, \V X_j,$ and $\V X_k$, form tetrahedra $T^1$ and $T^2$, respectively. 
% The projections $\left\{\V x_i^1,\V x_j^1,\V x_k^1\right\}$ and $\left\{\V x_i^2,\V x_j^2,\V x_k^2\right\}$, lie at the edges of $T^1$ and $T^2$, respectively. 
% The ray from $\V C^1$ through the mean point $\V m^1$  intersects the plane of $\V X_i, \V X_j,$ and $\V X_k$ at point $\V M$, which lies inside the triangle formed by these 3D points. 
% Similarly, the ray from $\V C^2$ through $\V M$ intersects the image plane of the second camera inside the triangle formed by $\left\{\V x_i^2,\V x_j^2,\V x_k^2\right\}$. 
% By construction, the projection of M lies on the epipolar line $\V E \V m^1$, which passes through the triangle defined by $\left\{\V x_i^2,\V x_j^2,\V x_k^2\right\}$. (for a formal lemma, proof and a visualization, see SM).}
%
%
(3) For practical applications, when used in RANSAC, it is not necessary that each triplet $\left\{\V x_i^2,\V x_j^2,\V x_k^2\right\}$ generates a good correspondence $\V m^1 \leftrightarrow \V m^2$. 
Samples with a high level of noise in the mean-point correspondence are filtered inside RANSAC.\footnote{%
This property was also used in the %
HC solver~\cite{Hruby_cvpr2022}, which completely fails for many samples. %
These samples are filtered within RANSAC.} 
On a large number of different scenes, we observed that even if some image pairs have triplets of points that generate very noisy mean-point correspondences, there are usually enough triplets for which the noise in $\V m^2$ is reasonably small to %provide 
lead to good estimates. 
(4) Four point correspondences in two views usually fix the space of possible poses such that the $5^{th}$ correspondence, even if noisy, 
often generates  a pose that is not very far from the ground truth pose. 
Such a pose is usually sufficient for filtering out outliers and %as well as 
a good initialization for non-linear optimization on the original four points in three views and subsequent local optimization on detected inliers.
We support our observations by experiments on a large amount of data. 

\subsection {Making approximate solvers practical}
\label{sec:making_solvers_practical}
The \sfa-based solvers, when used without any modifications, generally provide imprecise results even when used inside RANSAC with local optimization on three views. While the accuracy of the pure \sftm solver inside LO-RANSAC~\cite{lebedaLO} (RANSAC with Local Optimization) is much better, there is still room for improvement. Here we present several simple modifications of these solvers that significantly boost their performance.

 \PAR{4p3v(M$\pm \delta$) solver:}
The mean point correspondence used in the \sftm solver can provide a good approximation of a correct correspondence. %
Yet, as mentioned above, it can also be noisy.
In the \sftmd solver, we thus, in addition to the mean point $\V m^2 = \left[x,y\right]$ of 
three points $\left\{\V x_i^2,\V x_j^2,\V x_k^2\right\}$
in the second image, generate two additional points. These points are (1) $\V {m}^2_{\pm\delta} = \left[x\pm\delta,y\right]$ if the longest dimension of the triangle  $\mathcal{T}^2 = \Delta\left\{\V x_i^2,\V x_j^2,\V x_k^2\right\}$ is in the x-direction or (2) $\V {m}^2_{\pm\delta} = \left[x,y \pm\delta\right]$ if it is in the y-direction.
All three points, \ie, $\V m^2$, $\V m^2_{-\delta}$, and $\V m^2_{+\delta}$ are placed in  correspondence with the mean point $\V m^1$.
The \sftmd solver in the first step calls the \sfc solver~\cite{Nister-5pt-PAMI-2004} three times, with the $5^{th}$ correspondence being either $\V m^1 \leftrightarrow \V m^2$, $\V m^1 \leftrightarrow \V m^2_{-\delta}$, or $\V m^1 \leftrightarrow \V m^2_{+\delta}$. The results of these three \sfc solvers are collected to create hypotheses for the relative pose of the first two cameras inside RANSAC. The shift $\delta$ is selected relative to the size of the triangle $\mathcal{T}^2$. %

 \PAR{Early non-minimal refitting (ENM):} The geometry estimated between the first two cameras using the proposed \sfa and \sftm- based solvers is only approximate. In %
 standard LO-RANSAC, such a geometry is optimized in local optimization (LO) after registering the third view. However, this can lead to propagation of errors into triangulated points and subsequently into errors in the pose of the third camera. Inliers in three views \wrt such cameras together with imprecise pose initializations may not be sufficient for LO to converge to a good solution. We observed this especially for \sfa-based solvers. Fortunately, the design of our solvers allows us to optimize the geometry already %
 after estimating the approximate geometry between the first two cameras. Here, even a very imprecise geometry returned by the \sfa solvers is usually sufficient to filter out outliers. Thus, after running the \texttt{4pt} $\M F_{\M{A}}$/\texttt{5pt} solver in the first step of the proposed solvers, we use the estimated approximate models to detect inliers in two views. %
 We then refit the estimated geometry using the non-minimal version of the \texttt{5pt} solver~\cite{Nister-5pt-PAMI-2004},
 which instead of a 4-dim null space of a $5\times 9$ matrix uses the last four vectors from the SVD/QR decomposition of a $n \times 9$ matrix.

\PAR{$\mathbf{4^{th}}$ point in the third view:} 
\sfa and \sftm solvers 
actually solve the configuration %
(4,4,3), \ie, they do not use the information from the 
point $\V{x}^3_4$ in the third view.
The information from %
$\V{x}^3_4$
 can be used 
 in two different ways: 
 
  \noindent \textbf{Filtering (}\texttt{+F}\textbf{):} $\V{x}^3_4$ can be used to filter out geometrically infeasible solutions returned by
 the proposed solvers. Note that the P3P and \texttt{5pt} solvers used inside the proposed methods and the $\delta$-based strategy return multiple solutions that can be evaluated \wrt $\V{x}^3_4$ to filter out spurious solutions. 
 Since the returned solutions can be affected by %
 the proposed approximation, 
 we do not simply select the solution with the smallest error on $\V{x}^3_4$, but we keep all solutions that have an  epipolar error on $\V{x}^3_4$ smaller than twice the threshold used inside RANSAC. 
 Our experiments show that this filtering can improve the speed of the proposed solvers. 
 However, as a trade-off, there is sometimes a small drop in the accuracy of the solvers since, in some cases, geometrically correct solutions are filtered out. 
 
 \noindent \textbf{Refinement (}\texttt{+R}\textbf{):} 
 $\V{x}^3_4$ can be used to refine the solutions returned by the proposed
 solvers.
 We want solutions that minimize the 
 epipolar error on the original 4 points in all 3 views, \ie, that solve the original (4,4,4) configuration. %Note that %
 (4,4,4)
 is an overconstrained configuration; thus, for noisy data, there is, in general, no solution with zero error on all 4 points in 3 views. 
 We refine the poses by minimizing the epipolar error of %all 
 the original four points in three views using %
 LM optimization, initialized using the solutions from the \sfaf- and \sftm-based solvers. 
 Experiments with different numbers of iterations %
 show that two iterations are usually sufficient to obtain an improvement %
 (see SM).

\section{Experiments}
\label{sec:experiments}
We extensively evaluated the proposed solvers on a large variety of synthetic and real data to test their robustness to noise, outliers, and scene properties, and to assess their performance inside state-of-the-art RANSAC-frameworks~\cite{barath2017graph, PoseLib}. 
We compare our novel solvers with the homotopy continuation \sfhc solver ~\cite{Hruby_cvpr2022} and the \sft baseline minimal solver for the (5,5,3) 
configuration.

\begin{figure}
    \centering

\begin{tikzpicture} 

        \begin{axis}[%
        hide axis, xmin=0,xmax=0,ymin=0,ymax=0,
        legend style={draw=white!15!white, 
        line width = 1pt,
        legend  columns =7, %
        /tikz/every even column/.append style={column sep=0.05cm},
        font=\scriptsize
        },
        legend image post style={xscale=1}
        ]
        
        \addlegendimage{Seaborn2}
        \addlegendentry{\texttt{5p(E)}};
        \addlegendimage{Seaborn4}
        \addlegendentry{\texttt{4p(M)}};
        \addlegendimage{Seaborn5}
        \addlegendentry{\texttt{4p(M$\pm \delta$)}};
        \addlegendimage{Seaborn3}
        \addlegendentry{\texttt{4p(A)}}; 
        \addlegendimage{black!30,dash pattern=on 2pt off 1pt on 2pt off 1pt}
        \addlegendentry{w/o \texttt{ENM}};
        \addlegendimage{black!30,dash pattern=on 1pt off 0.5pt on 1pt off 0.5pt}
        \addlegendentry{w/ \texttt{ENM}};
        \end{axis}
    \end{tikzpicture}
    \includegraphics[trim={1.5cm 0 1.5cm 2cm},clip, width=0.85\columnwidth]{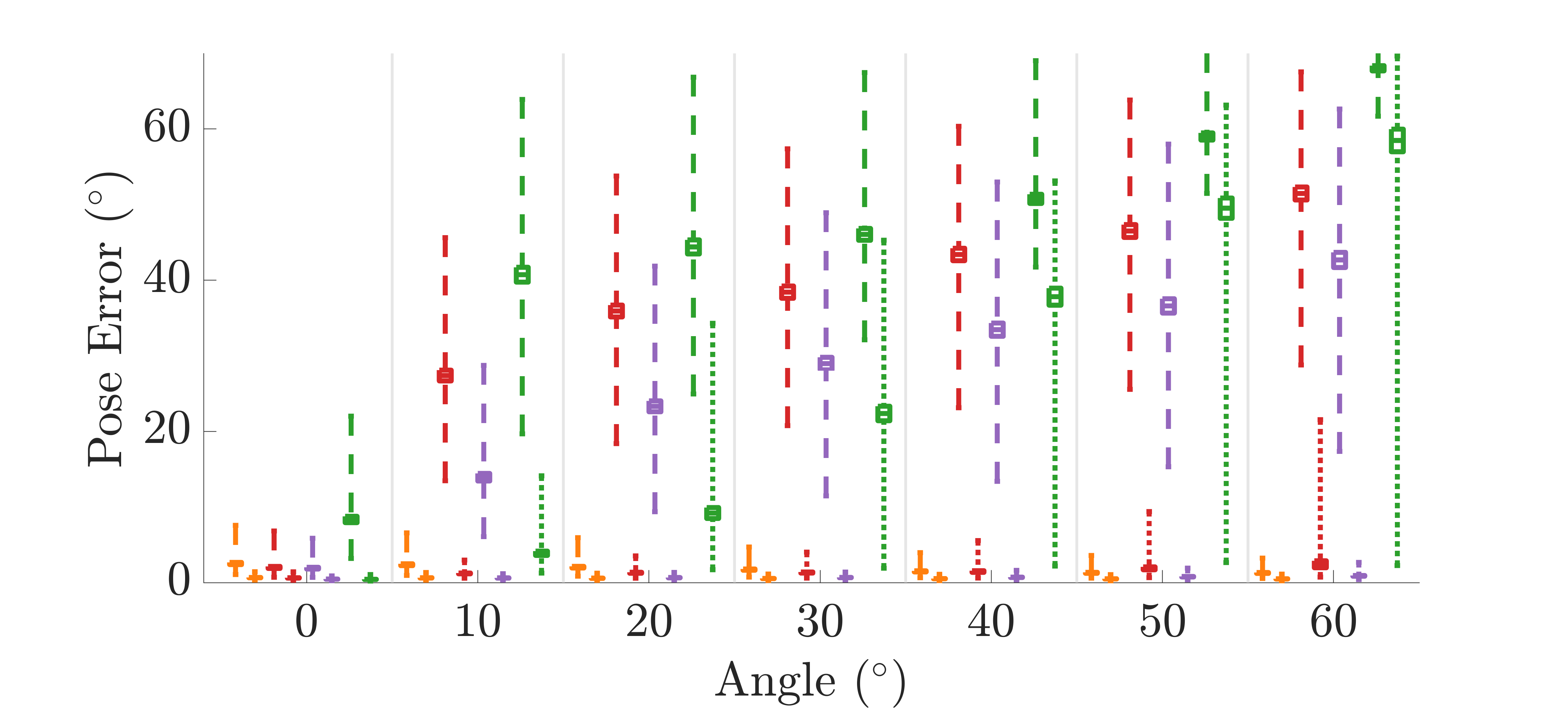}
    \caption{Results of a synthetic experiment measuring the accuracy of two-view variants of our solvers depending on the  angle between the principal axes of the cameras.}
    \label{fig:synth_angle}
\end{figure}

\PAR{Experimental setup.} To obtain feature correspondences, we use SuperPoint~\cite{detone2018superpoint} features with the  LightGlue~\cite{lindenberger2023lightglue} matcher. 
We extract at most 2048 features per image. 
We perform matching for all three image pairs and keep only those matches that were consistently matched across all three views. 
We perform evaluation within two RANSAC frameworks: PoseLib~\cite{PoseLib} and GC-RANSAC~\cite{barath2017graph}. For the \texttt{5pt} solver, %
we use~\cite{Nister-5pt-PAMI-2004} and for the \sp solver, %
we use~\cite{lambda-twist}. 
In PoseLib, we perform LO~\cite{Chum-2003} using LM optimization. 
In GC-RANSAC, we perform LO using non-minimal solvers~\cite{Nister-5pt-PAMI-2004,dlspnp} for fitting models to larger-than-minimal samples. 
We evaluated different shifts for our~$\delta$-based solvers using a validation scene (for the ablation study, see SM) and selected $\delta = 0.08*\texttt{\small(longest triangle dim.)}$. 
The choice of $\delta$ is not critical, \ie, different values perform similarly. %

\PAR{Evaluation measures.} Inspired by~\cite{IMC2020}, we define the %
pose error %
as $\text{max}\left(0.5 (\M R_{err}^{12} + \M R_{err}^{13}), 0.5 (\V t_{err}^{12} + \V t_{err}^{13})\right)$, where $\M R_{err}^{ij}$ and $\V t_{err}^{ij}$ are the angular errors of rotation and translation for pair $ij$ in degrees~\cite{IMC2020}. We also report AUC values~\cite{IMC2020} at different thresholds for the pose error. 
We include 
results for an alternative pose error definition which includes $\M R_{err}^{23}$ and $\V t_{err}^{23}$ %
in SM.

 \begin{figure}[t!]
    \centering
    \includegraphics[width=0.41\columnwidth]{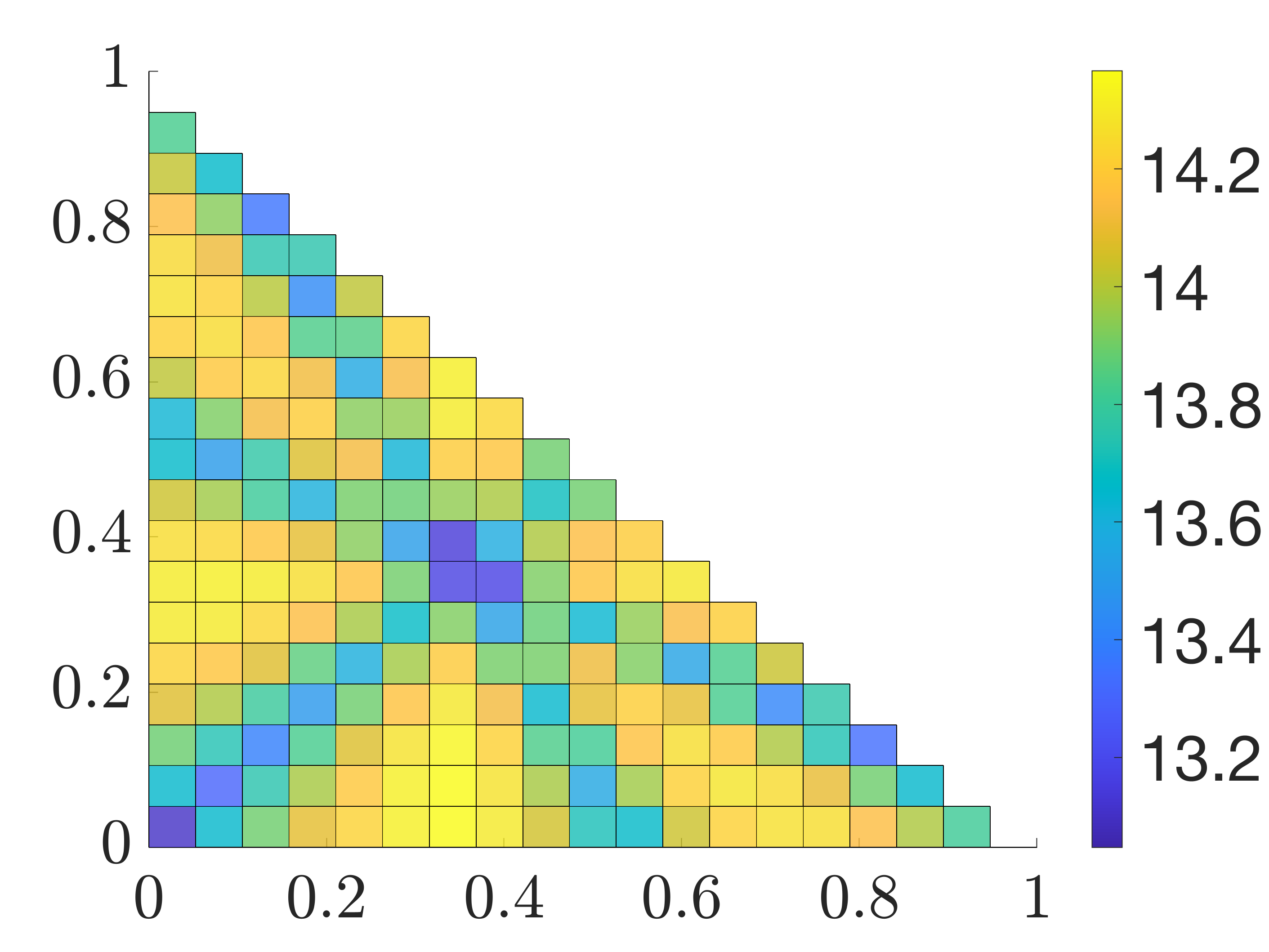}
    \includegraphics[width=0.41\columnwidth]{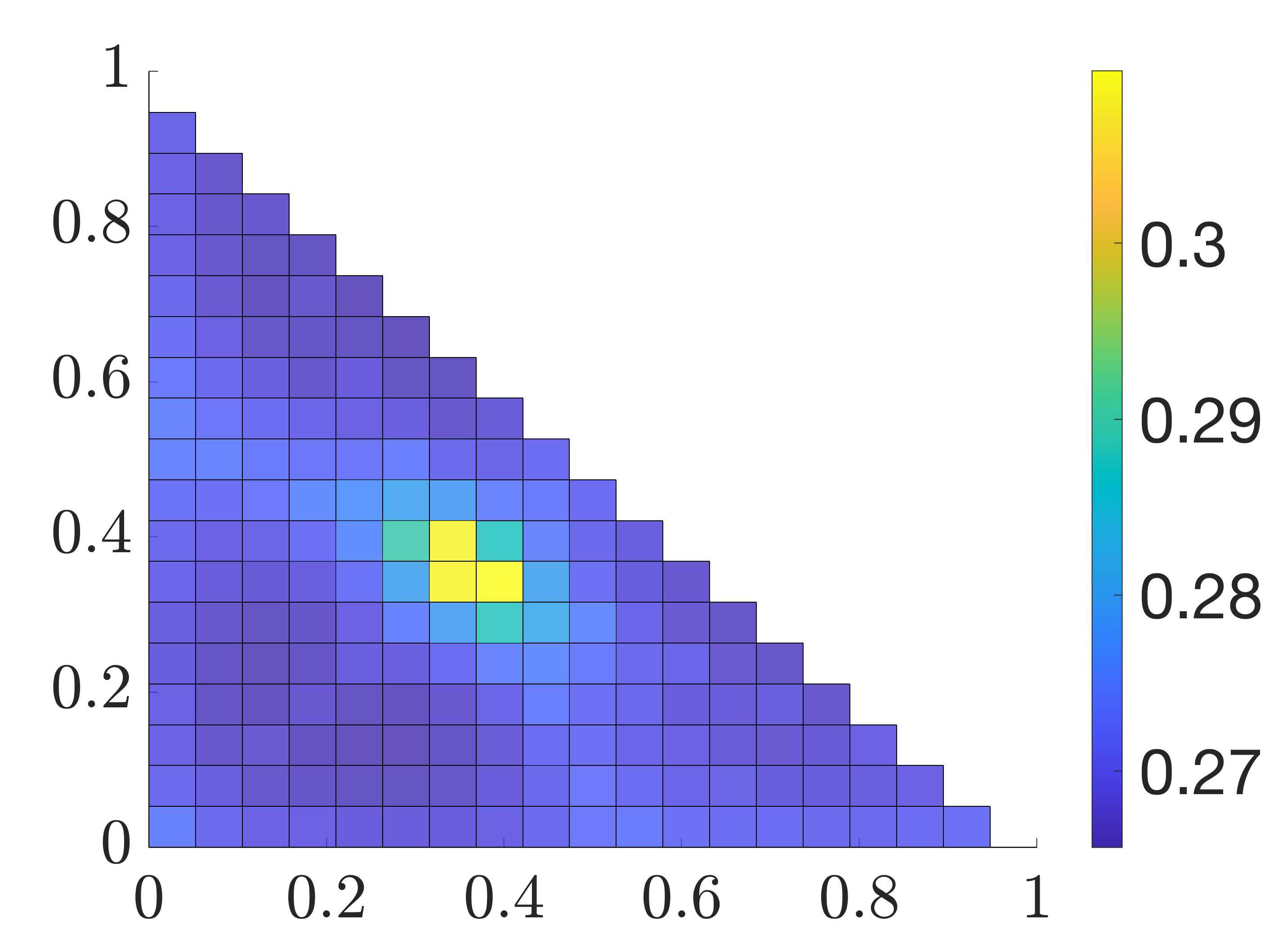}

\caption{
Distribution of the (left) rotation error (0.3373, 0.3349); and (right) percentage of inliers gathered (0.3266, 0.3434), as a function of the barycentric coordinates of the triangle in the second image \wrt the mean point of the corresponding triangle in the first image
on 465k four-tuples of correspondences from \textit{St.~Peter's Square} scene from the PhotoTourism dataset~\cite{IMC2020}. 
We fit a 2D Gaussian distribution to the results and report the mean in brackets.
}
\label{fig:mean_tests}
\end{figure}

\PAR{Approximate camera geometry.} The first experiments aim to support our idea of 
estimating approximate geometry in the first two views.
The accuracy of the
used approximations depends on a large number of variables, including the depths of the points \wrt the cameras, the angle under which the points are observed, 
the type of motion,  \etc. 
A detailed analysis of all these factors, \eg, through synthetic experiments, is beyond the scope of this paper. 

An approximation error introduced by the para-perspective projection, \ie, the affine geometry, 
and its effect on absolute camera pose determination is 
studied in the literature~\cite{Zhang2014,Horaud97}. Here, we thus only study the effect of the proposed approximation on relative pose estimation for real-world data and interesting synthetic scenarios. In the following experiments, all solvers are tested on two views without RANSAC. The two-view variants of our solvers are denoted as $\texttt{4p(A)}$, $\texttt{4p(M)}$, and \texttt{4p(M$\pm \delta$)}.

In the first set of experiments, we consider several interesting synthetic setups. 
Fig.~\ref{fig:synth_angle} presents the results for increasing the angle between the principal axes %projection rays 
of the two cameras. 
We generate  3D points in a 2000x2000x100 cube uniformly at random. 
The distances of the camera centers are random between 1000 and 1200. The cameras are looking towards the scene and their principal axes form a specified angle. To simulate realistic scenarios, we add 1px noise to the image correspondences and 20\% outliers.\footnote{Outliers are only used in the ENM part to simulate that for approximate geometry outliers might be contained in the non-minimal samples.}
It can be seen that the results for both approximate solvers deteriorate with increasing angle, with $\texttt{4p(A)}$ being significantly less accurate than $\texttt{4p(M)}$ and \texttt{4p(M$\pm \delta$)}. However, ENM can significantly decrease errors, with the $\texttt{4p(M$\pm \delta$)+ENM}$ solver returning almost identical results to $\texttt{5pt+ENM}$. 
Results for %
varying distances of the cameras from the scene, depth of the scene, and image noise are in SM. 

Similar observations as on synthetic data can be made on real-world data. Tab.~\ref{tab:4ptM_vs_5pt} shows results 
on 
different 
scenes from the PhotoTourism dataset~\cite{IMC2020}. 
 As can be seen, the approximate solvers are not as accurate as the \texttt{5pt} solver when used without ENM.
The gap between the  $\texttt{4p(M)}$/\texttt{4p(M$\pm \delta$)} solvers and the \texttt{5pt} solver is for some scenes larger %
(%
\textit{Sacre Coeur}, 
\textit{Trevi Fountain}), and for some noticeably smaller (\textit{Reichstag}, \textit{St. Peter's Square}), showing that the performance of our solvers is scene-dependent. The $\texttt{4p(A)}$ solver returns large errors. 
The accuracy of the $\texttt{4p(M)}$/\texttt{4p(M$\pm \delta$)}  solvers is not very far from the \texttt{5pt} solver, with the  \texttt{4p(M$\pm \delta$)} solver even slightly outperforming the \texttt{5pt} solver on some scenes (\textit{Reichstag}, \textit{Taj Mahal}). ENM helps to increase the precision of all solvers, especially $\texttt{4p(A)}$.
 Still, the $\texttt{4p(A)+ENM}$ solver %
 returns quite high errors.  
However, in practice, all solvers are used inside RANSAC, where the current best pose is locally optimized on inliers. LO compensates for less accurate pose estimates and, as we show on real experiments for three %
as well as two views (see SM), it can suppress the errors of the $\texttt{4p(A)}$/\sfaf-based solvers, making them comparable to the $\texttt{4p(M)}$/\sftm-based solvers.
For the three view scenario, the performance of all of our solvers can be further improved by %
pose refinement (+R) and filtering (+F). % about 5pt+p3p+R+F
%\newtext{Can say here about how to apply +F +R to 5pt+P3P.}
%
%
%

%

%
%
%
%
%
%
%
%
%
%
%
%
%
%
%
%
%
%
%
%
%

%
%
%
%
%
%
%
%
%
%
%
%
%
%
%
%
%
%
%
%
%

\begin{table}  \resizebox{0.95\columnwidth}{!}{\begin{tabular}{c | c c c c || c c c c}
    \multicolumn{1}{c}{~} & \multicolumn{4}{c}{MED ($^\circ$)} & \multicolumn{4}{c}{$20^{th}$ perc. ($^\circ$)} \\
    \cmidrule{2-9}
    \multicolumn{1}{c}{~} & \multicolumn{8}{c}{w/o \texttt{ENM}} \\
    \midrule
    Scene & \texttt{5p(E)} & \texttt{4p(M)} & \texttt{4p(M$\pm \delta$)} & \texttt{4p(A)} & \texttt{5p(E)} & \texttt{4p(M)} & \texttt{4p(M$\pm \delta$)} & \texttt{4p(A)}\\
    \midrule
    \textit{Brandenburg Gate} & 16.14 & 19.02 & 15.69 & 75.32 & \phantom{1}7.22 & \phantom{1}9.83 & \phantom{1}7.46 & 45.99 \\
\textit{Buckingham Palace} & 18.87 & 21.12 & 17.74 & 66.19 & \phantom{1}8.04 & 10.01 & \phantom{1}8.23 & 40.59 \\
\textit{Colosseum Exterior} & 17.40 & 23.50 & 18.93 & 66.67 & \phantom{1}6.44 & 11.36 & \phantom{1}8.34 & 40.82 \\
\textit{Grand Place Brussels} & 18.40 & 20.28 & 16.78 & 69.31 & \phantom{1}8.04 & 10.09 & \phantom{1}8.07 & 44.45 \\
\textit{Notre Dame Front Facade} & 15.82 & 23.97 & 19.61 & 71.60 & \phantom{1}6.40 & 12.27 & \phantom{1}9.01 & 48.39 \\
\textit{Palace of Westminster} & 16.04 & 17.62 & 14.72 & 69.38 & \phantom{1}4.99 & \phantom{1}7.98 & \phantom{1}6.40 & 46.49 \\
\textit{Pantheon Exterior} & 22.65 & 25.71 & 21.31 & 61.33 & 10.65 & 14.09 & 10.94 & 38.12 \\
\textit{Reichstag} & 12.15 & 12.99 & 10.27 & 82.85 & \phantom{1}5.15 & \phantom{1}6.16 & \phantom{1}4.57 & 60.12 \\
\textit{Sacre Coeur} & 11.81 & 17.61 & 14.31 & 74.21 & \phantom{1}3.69 & \phantom{1}7.95 & \phantom{1}5.87 & 43.72 \\
\textit{St. Peter's Square} & 17.85 & 18.75 & 15.48 & 72.75 & \phantom{1}8.51 & \phantom{1}9.71 & \phantom{1}7.66 & 44.60 \\
\textit{Taj Mahal} & \phantom{1}9.95 & 11.20 & \phantom{1}8.56 & 82.74 & \phantom{1}3.77 & \phantom{1}4.90 & \phantom{1}3.63 & 64.27 \\
\textit{Temple Nara Japan} & 18.57 & 21.53 & 17.01 & 64.97 & \phantom{1}7.60 & 10.63 & \phantom{1}8.16 & 32.21 \\
\textit{Trevi Fountain} & 20.93 & 27.77 & 22.89 & 50.12 & \phantom{1}8.10 & 13.89 & 10.78 & 31.32 \\
\midrule
    \multicolumn{1}{c}{~} & \multicolumn{8}{c}{w/ \texttt{ENM}} \\
    \midrule
    Scene & \texttt{5p(E)} & \texttt{4p(M)} & \texttt{4p(M$\pm \delta$)} & \texttt{4p(A)} & \texttt{5p(E)} & \texttt{4p(M)} & \texttt{4p(M$\pm \delta$)} & \texttt{4p(A)}\\
    \midrule
    \textit{Brandenburg Gate} & 14.53 & 17.05 & 14.38 & 28.40 & \phantom{1}6.74 & \phantom{1}8.86 & \phantom{1}6.93 & 16.96 \\
\textit{Buckingham Palace} & 16.71 & 18.76 & 16.07 & 35.07 & \phantom{1}7.35 & \phantom{1}8.96 & \phantom{1}7.46 & 19.72 \\
\textit{Colosseum Exterior} & 16.21 & 21.46 & 17.70 & 34.90 & \phantom{1}6.35 & 10.71 & \phantom{1}8.04 & 21.98 \\
\textit{Grand Place Brussels} & 16.21 & 17.96 & 15.23 & 31.53 & \phantom{1}7.27 & \phantom{1}8.85 & \phantom{1}7.24 & 18.40 \\
\textit{Notre Dame Front Facade} & 15.05 & 22.08 & 18.46 & 34.49 & \phantom{1}6.30 & 11.65 & \phantom{1}8.71 & 22.08 \\
\textit{Palace of Westminster} & 13.83 & 14.95 & 12.74 & 29.06 & \phantom{1}4.68 & \phantom{1}7.13 & \phantom{1}5.84 & 16.41 \\
\textit{Pantheon Exterior} & 21.19 & 24.03 & 20.30 & 37.21 & 10.32 & 13.28 & 10.55 & 24.42 \\
\textit{Reichstag} & \phantom{1}9.32 & 10.03 & \phantom{1}8.36 & 17.69 & \phantom{1}4.40 & \phantom{1}5.06 & \phantom{1}3.97 & \phantom{1}9.39 \\
\textit{Sacre Coeur} & 10.71 & 15.19 & 12.64 & 24.92 & \phantom{1}3.61 & \phantom{1}7.23 & \phantom{1}5.49 & 13.41 \\
\textit{St. Peter's Square} & 16.01 & 16.94 & 14.35 & 28.60 & \phantom{1}7.75 & \phantom{1}8.63 & \phantom{1}7.03 & 15.68 \\
\textit{Taj Mahal} & \phantom{1}8.32 & \phantom{1}9.32 & \phantom{1}7.43 & 15.85 & \phantom{1}3.34 & \phantom{1}4.10 & \phantom{1}3.18 & \phantom{1}7.72 \\
\textit{Temple Nara Japan} & 16.24 & 19.38 & 15.74 & 29.97 & \phantom{1}6.83 & \phantom{1}9.19 & \phantom{1}7.32 & 16.65 \\
\textit{Trevi Fountain} & 20.22 & 26.42 & 22.09 & 39.67 & \phantom{1}8.02 & 13.43 & 10.53 & 26.62 \\
\bottomrule
    \end{tabular}}
    \caption{Accuracy of two-view solvers on PhotoTourism scenes.}
    \label{tab:4ptM_vs_5pt}
\end{table}

In the last set of experiments, we study the accuracy of the mean point correspondence.
We sample 100 four-tuples of point correspondences
consistent with the ground truth relative pose, \ie, inliers, for each image pair in scenes from the PhotoTourism dataset~\cite{IMC2020}. 
We use the first three correspondences to define the triangles in both images. 
Then, we establish correspondences between the mean of the triangle in one image and various points in the triangle in the second image. 
We express points in the second triangle via their barycentric coordinates and uniformly sample $19 \times 19$ barycentric coordinates $(a,b)\in [0,1]^2$, such that $a+b \leq 1$. 
Fig.~\ref{fig:mean_tests} shows the results 
for the rotation error and the percentage of inliers consistent with the pose obtained with the $\texttt{4p(M)}$ solver on the \textit{St.~Peter's Square} scene.  
The optimum of the metrics is reached around the mean point of the triangles (the mean values of 2D Gaussians fitted to the results
are very close to the mean point $(0.\bar{3}, 0.\bar{3})$ of the triangles).
A detailed description, the translation and the symmetric epipolar errors, and the results for more scenes are in SM. For all 
 tested
 scenes, we observed a similar behavior.

\begin{figure*}
    \centering
    \resizebox{1.0\linewidth}{!}{
\begin{tikzpicture} 

        \begin{axis}[%
        hide axis, xmin=0,xmax=0,ymin=0,ymax=0,
        legend style={draw=white!15!white, 
        line width = 1pt,
        legend  columns =9, %
        /tikz/every even column/.append style={column sep=0.5cm},
        }
        ]
        
        \addlegendimage{Seaborn1}        \addlegendentry{\sfhc~\cite{Hruby_cvpr2022}};
        \addlegendimage{Seaborn2}
        \addlegendentry{\sft};
        \addlegendimage{Seaborn3}
        \addlegendentry{\sfafRC};
        \addlegendimage{Seaborn4}
        \addlegendentry{\sftmRC}; 
        \addlegendimage{Seaborn5}
        \addlegendentry{\sftmdRC};
        \addlegendimage{black!30}
        \addlegendentry{w/o \texttt{ENM}};
        \addlegendimage{black!30,dash pattern=on 2pt off 1pt on 2pt off 1pt}
        \addlegendentry{w/ \texttt{ENM}};
        
        \end{axis}
    \end{tikzpicture}}

    \begin{subfigure}{0.24\linewidth}
    \includegraphics[width=\linewidth]{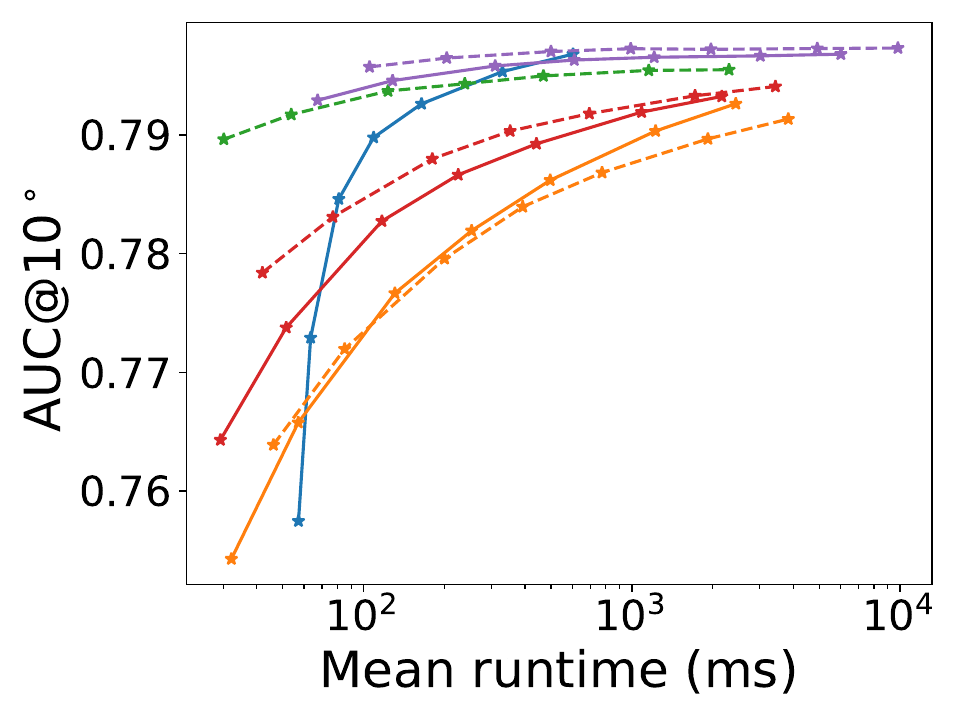}
    \caption{Phototourism~\cite{IMC2020}}
    \end{subfigure} \hfill
    \begin{subfigure}{0.24\linewidth}
    \includegraphics[width=\linewidth]{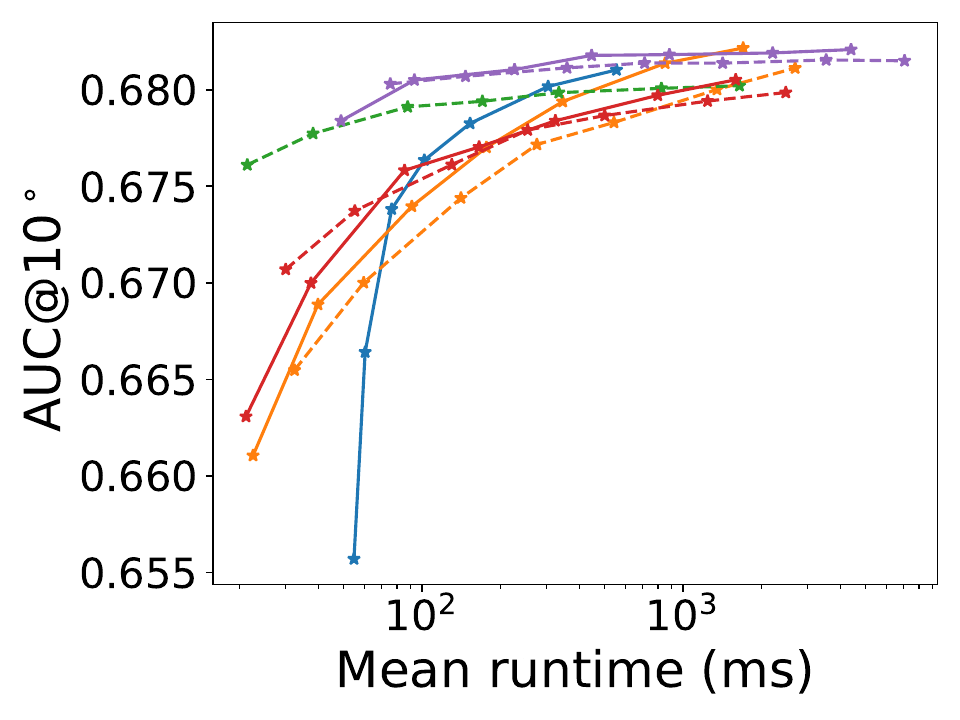}
    \caption{Cambridge Landmarks~\cite{kendall2015cambridge}}
    \end{subfigure} \hfill    
    \begin{subfigure}{0.24\linewidth}
    \includegraphics[width=\linewidth]{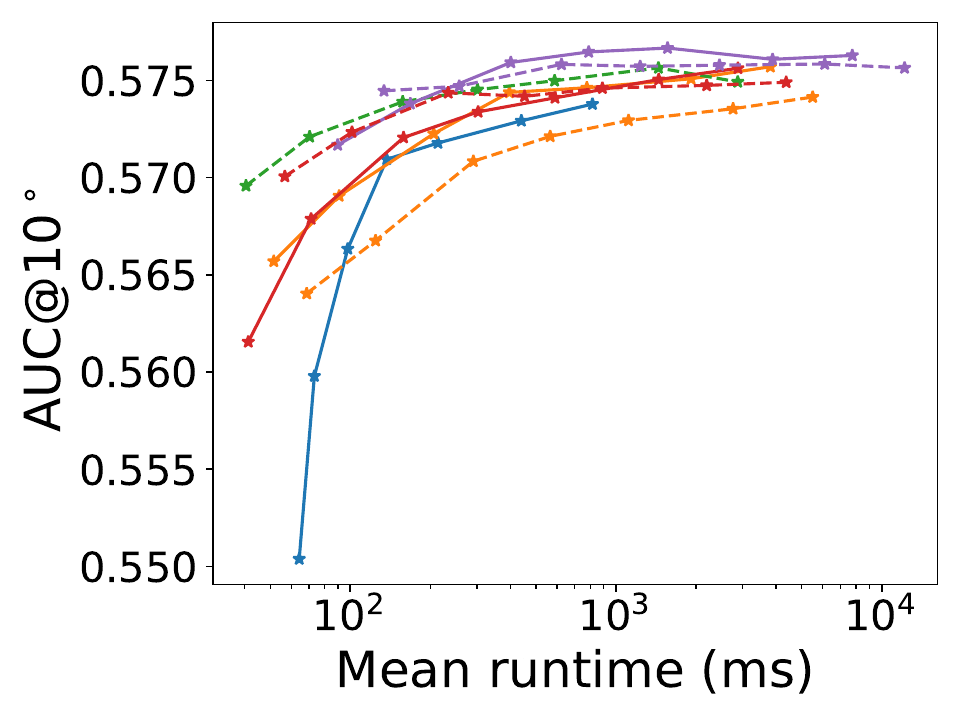}
    \caption{Aachen Day-Night v1.1~\cite{zhang2021aachen}}
    \end{subfigure} \hfill
    \begin{subfigure}{0.24\linewidth}
    \includegraphics[width=\linewidth]{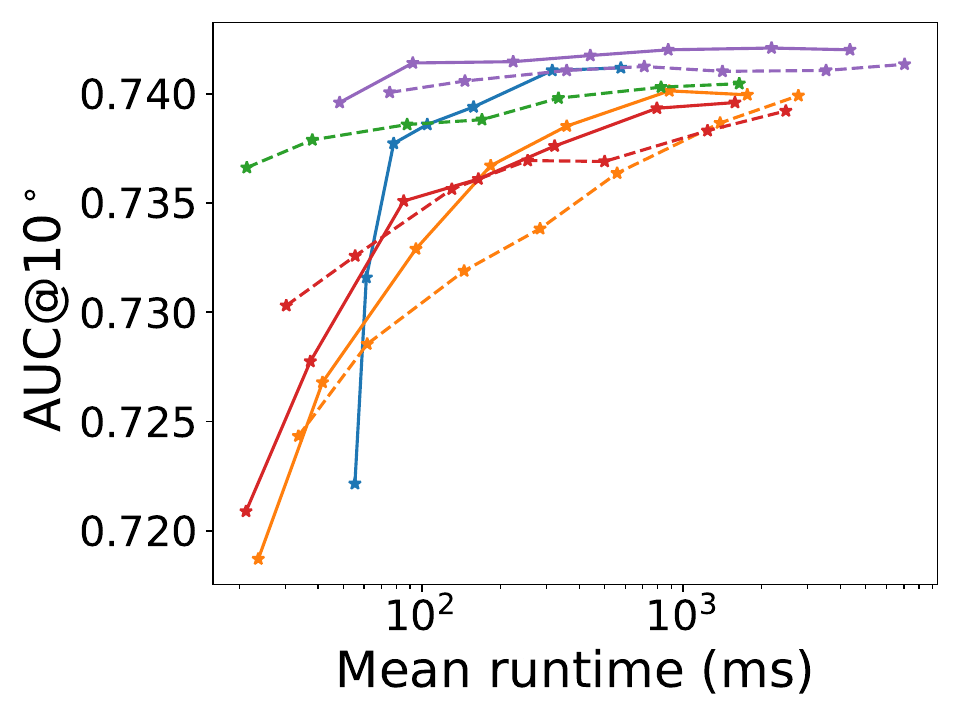}
    \caption{\textit{St. Mary's Church}~\cite{kendall2015cambridge}}
    \label{fig:poselib_graphs_st_mary}
    \end{subfigure}
       
    \caption{Speed-accuracy trade-off for (a) all scenes from PhotoTourism~\cite{IMC2020} except \textit{St.\ Peter's Square}, (b) 5 scenes from  Cambridge Landmarks~\cite{kendall2015cambridge}, (c) Aachen Day-Night v1.1~\cite{zhang2021aachen}, and (d) \textit{St.\ Mary's Church} scene from the Cambridge Landmarks dataset~\cite{kendall2015cambridge}. 
    We report the AUC@10$^\circ$ of the pose error and vary the number of PoseLib RANSAC iterations (100, 200, 500, 1000, 2000, 5000, 10,000) with a 5 px epipolar threshold. Runtimes are averaged over all image triplets.}%
    \label{fig:poselib_graphs}
\end{figure*}

\PAR{Experiments on real data.} 
We test the solvers on all scenes from the PhotoTourism dataset~\cite{snavely2006photo, IMC2020} which provide ground truth poses and intrinsics via a COLMAP~\cite{Schoenberger2016CVPR} reconstruction. 
In the results, we do not include the \textit{St.~Peter's Square} scene that we used for the validation of $\delta$ and the number of refinement iterations (see SM). We also include results for the Cambridge Landmarks dataset~\cite{kendall2015cambridge} (except the Street scene, which is commonly not used due to issues with its ground truth) and Aachen Day-Night v1.1~\cite{zhang2021aachen}. 
For PhotoTourism and Aachen, we use the images in their original resolution. For Cambridge Landmarks, we resize
them
so that the larger side is 800~px. 
For each scene, we sample 5,000 random image triplets with at least 10 matches obtained with~\cite{detone2018superpoint,lindenberger2023lightglue}
with at least %
$10\%$ overlap~\cite{IMC2020}.

\begin{table*}[t]
    \centering
    \resizebox{1.0\linewidth}{!}{

\begin{tabular}{ l | c c c c c c | c c  c c c  c | c c c c c c}
    \multicolumn{1}{c}{~} & \multicolumn{6}{c}{PhotoTourism~\cite{IMC2020}} & \multicolumn{6}{c}{Cambridge Landmarks~\cite{kendall2015cambridge}} & \multicolumn{6}{c}{Aachen Day-Night v1.1~\cite{zhang2021aachen}} \\
    \toprule
    Estimator & AVG $(^\circ)$ $\downarrow$ & MED $(^\circ)$ $\downarrow$ & AUC@5 $\uparrow$ & @10 $\uparrow$ & @20 $\uparrow$ & Runtime $\downarrow$& 
    AVG $(^\circ)$ $\downarrow$ & MED $(^\circ)$ $\downarrow$ & AUC@5 $\uparrow$ & @10 $\uparrow$ & @20 $\uparrow$ & Runtime $\downarrow$& 
    AVG $(^\circ)$ $\downarrow$ & MED $(^\circ)$ $\downarrow$ & AUC@5 $\uparrow$ & @10 $\uparrow$ & @20 $\uparrow$ & Runtime $\downarrow$\\
    \midrule

\sfhc~\cite{Hruby_cvpr2022} & \phantom{1}6.37 & \phantom{1}1.62 & 62.41 & 73.53 & 81.94 & \phantom{1}64.10 & \phantom{1}8.73 & \phantom{1}2.67 & 47.99 & 64.58 & 76.65 & 60.11 & 12.43 & \phantom{1}3.55 & 42.28 & 55.20 & 66.55 & \phantom{1}67.26 \\
\midrule\sft & \phantom{1}5.40 & \phantom{1}1.60 & 62.93 & 74.30 & 83.00 & \phantom{1}33.77 & \phantom{1}7.46 & \phantom{1}2.65 & 48.36 & 65.33 & 77.75 & 24.04 & 10.52 & \phantom{1}3.38 & 43.43 & 56.73 & 68.28 & \phantom{1}53.34 \\
\sftENM & \phantom{1}5.23 & \phantom{1}1.56 & 63.77 & 75.03 & 83.50 & \phantom{1}48.79 & \phantom{1}7.20 & \phantom{1}2.63 & 48.74 & 65.68 & 78.04 & 34.82 & 10.59 & \phantom{1}3.41 & 43.32 & 56.64 & 68.06 & \phantom{1}71.61 \\
\sftRC & \phantom{1}5.10 & \phantom{1}1.50 & 64.90 & 75.90 & 84.11 & \phantom{1}38.12 & \phantom{1}7.15 & \phantom{1}2.58 & 49.42 & 66.35 & 78.55 & 31.82 & 10.37 & \phantom{1}3.36 & 43.73 & 57.15 & 68.71 & \phantom{1}54.34 \\
\sftRCENM & \phantom{1}4.99 & \phantom{1}1.50 & 65.18 & 76.20 & 84.30 & \phantom{1}56.52 & \phantom{1}6.98 & \phantom{1}2.58 & 49.38 & 66.31 & 78.57 & 47.07 & \underline{10.20} & \phantom{1}3.34 & 43.75 & 57.11 & 68.63 & \phantom{1}76.86 \\
\midrule \sfaf & 37.48 & 25.16 & 22.91 & 28.59 & 35.16 & \phantom{1}16.58 & 41.23 & 21.73 & 22.75 & 31.47 & 38.98 & 13.33 & 36.40 & 19.53 & 24.41 & 31.73 & 39.24 & \phantom{1}32.04 \\
\sfafENM & \phantom{1}4.97 & \phantom{1}1.48 & 65.49 & 76.47 & 84.53 & \phantom{1}40.45 & \phantom{1}7.02 & \phantom{1}2.59 & 49.35 & 66.32 & 78.53 & 28.35 & 10.35 & \phantom{1}3.38 & 43.62 & 56.99 & 68.55 & \phantom{1}62.44 \\
\sfafR & 33.45 & 19.95 & 25.86 & 32.22 & 39.16 & \phantom{1}\underline{16.32} & 36.98 & 11.59 & 27.03 & 36.70 & 44.48 & \underline{12.48} & 34.84 & 18.22 & 25.27 & 32.77 & 40.29 & \phantom{1}\underline{29.56} \\
\sfafRC & 35.65 & 23.47 & 23.99 & 29.97 & 36.64 & \phantom{1}\textbf{11.10} & 38.34 & 14.47 & 25.89 & 35.19 & 42.80 & \phantom{1}\textbf{9.35} & 36.77 & 20.76 & 23.94 & 31.23 & 38.58 & \phantom{1}\textbf{19.75} \\
\sfafRCENM & \phantom{1}4.98 & \phantom{1}1.48 & 65.62 & 76.56 & 84.55 & \phantom{1}32.37 & \phantom{1}6.99 & \phantom{1}2.57 & 49.62 & 66.55 & 78.72 & 23.43 & 10.43 & \phantom{1}3.38 & 43.57 & 56.95 & 68.56 & \phantom{1}42.36 \\
\midrule\sftm & \phantom{1}6.02 & \phantom{1}1.71 & 60.86 & 72.56 & 81.73 & \phantom{1}35.18 & \phantom{1}8.36 & \phantom{1}2.74 & 47.15 & 64.06 & 76.59 & 25.15 & 12.09 & \phantom{1}3.62 & 41.78 & 55.06 & 66.67 & \phantom{1}54.89 \\
\sftmENM & \phantom{1}5.28 & \phantom{1}1.58 & 63.61 & 74.96 & 83.46 & \phantom{1}48.83 & \phantom{1}7.21 & \phantom{1}2.63 & 48.69 & 65.57 & 77.95 & 34.93 & 10.67 & \phantom{1}3.43 & 43.31 & 56.61 & 68.08 & \phantom{1}70.87 \\
\sftmR & \phantom{1}5.49 & \phantom{1}1.56 & 63.66 & 74.88 & 83.37 & \phantom{1}41.52 & \phantom{1}7.69 & \phantom{1}2.63 & 48.69 & 65.67 & 77.99 & 30.76 & 11.17 & \phantom{1}3.46 & 43.28 & 56.39 & 67.88 & \phantom{1}60.40 \\
\sftmRC & \phantom{1}5.48 & \phantom{1}1.54 & 63.99 & 75.14 & 83.51 & \phantom{1}30.88 & \phantom{1}7.75 & \phantom{1}2.62 & 48.77 & 65.71 & 77.98 & 22.72 & 11.16 & \phantom{1}3.48 & 43.04 & 56.28 & 67.73 & \phantom{1}42.38 \\
\sftmRCENM & \phantom{1}5.01 & \phantom{1}1.50 & 65.18 & 76.16 & 84.28 & \phantom{1}44.51 & \phantom{1}6.99 & \phantom{1}2.58 & 49.43 & 66.31 & 78.56 & 32.48 & \underline{10.25} & \phantom{1}3.39 & 43.74 & 57.21 & 68.80 & \phantom{1}58.08 \\
\midrule\sftmd & \phantom{1}5.55 & \phantom{1}1.68 & 61.86 & 73.73 & 82.78 & \phantom{1}83.70 & \phantom{1}7.65 & \phantom{1}2.67 & 48.16 & 65.25 & 77.76 & 59.23 & 11.13 & \phantom{1}3.47 & 42.74 & 56.11 & 67.74 & 125.02 \\
\sftmdENM & \phantom{1}4.99 & \phantom{1}1.56 & 64.02 & 75.51 & 84.01 & 125.66 & \phantom{1}\underline{6.88} & \phantom{1}2.60 & 49.11 & 66.13 & 78.51 & 89.05 & 10.26 & \phantom{1}3.35 & 43.66 & 56.99 & 68.55 & 175.54 \\
\sftmdR & \phantom{1}\underline{4.86} & \phantom{1}\underline{1.47} & \underline{65.97} & \underline{77.00} & \underline{85.01} & 100.61 & \phantom{1}7.10 & \phantom{1}\underline{2.56} & \underline{49.79} & \underline{66.89} & \underline{79.08} & 73.94 & 10.37 & \phantom{1}\underline{3.34} & \underline{43.93} & \underline{57.26} & \underline{68.82} & 138.53 \\
\sftmdRC & \phantom{1}4.92 & \phantom{1}1.47 & 65.85 & 76.87 & 84.90 & \phantom{1}71.73 & \phantom{1}7.19 & \phantom{1}2.56 & 49.71 & 66.75 & 78.95 & 52.84 & 10.52 & \phantom{1}3.36 & 43.79 & 57.17 & 68.73 & \phantom{1}92.89 \\
\sftmdRCENM & \phantom{1}\textbf{4.66} & \phantom{1}\textbf{1.46} & \textbf{66.12} & \textbf{77.14} & \textbf{85.15} & 112.60 & \phantom{1}\textbf{6.65} & \phantom{1}\textbf{2.55} & \textbf{49.87} & \textbf{66.95} & \textbf{79.20} & 81.98 & \phantom{1}\textbf{9.96} & \phantom{1}\textbf{3.32} & \textbf{44.02} & \textbf{57.42} & \textbf{68.97} & 139.19 \\
\midrule
\end{tabular}}
    \caption{Results for different solvers implemented in the PoseLib framework~\cite{PoseLib} on 12 scenes from PhotoTourism~\cite{IMC2020}, 5 scenes from Cambridge Landmarks~\cite{kendall2015cambridge} and Aachen Day-Night v1.1~\cite{zhang2021aachen}. We mark the \textbf{best} and \underline{second best} results. Runtimes are reported in ms for the whole RANSAC with early termination (0.9999 confidence, minimum 100 iterations) and epipolar threshold set to 5 px.
    }
    \label{tab:poselib_both}
\end{table*}

\begin{table*}[t]
    \centering
    \resizebox{1.0\linewidth}{!}{

\begin{tabular}{ l | c c c c c c | c c  c c c  c | c c c c c c}
    \multicolumn{1}{c}{~} & \multicolumn{6}{c}{PhotoTourism~\cite{IMC2020}} & \multicolumn{6}{c}{Cambridge Landmarks~\cite{kendall2015cambridge}} & \multicolumn{6}{c}{Aachen Day-Night v1.1~\cite{zhang2021aachen}} \\
    \toprule
    Estimator & AVG $(^\circ)$ $\downarrow$ & MED $(^\circ)$ $\downarrow$ & AUC@5 $\uparrow$ & @10 $\uparrow$ & @20 $\uparrow$ & Runtime $\downarrow$& 
    AVG $(^\circ)$ $\downarrow$ & MED $(^\circ)$ $\downarrow$ & AUC@5 $\uparrow$ & @10 $\uparrow$ & @20 $\uparrow$ & Runtime $\downarrow$& 
    AVG $(^\circ)$ $\downarrow$ & MED $(^\circ)$ $\downarrow$ & AUC@5 $\uparrow$ & @10 $\uparrow$ & @20 $\uparrow$ & Runtime $\downarrow$\\
    \midrule

\sfhc~\cite{Hruby_cvpr2022} & \phantom{1}6.37 & \phantom{1}1.62 & 62.41 & 73.53 & 81.94 & \phantom{1}64.10 &
\phantom{1}8.73 & \phantom{1}2.67 & 47.99 & 64.58 & 76.65 & 60.11 & 12.43 & \phantom{1}3.55 & 42.28 & 55.20 & 66.55 & \phantom{1}67.26 \\
\sft & \phantom{1}5.40 & \phantom{1}1.60 & 62.93 & 74.30 & 83.00 & \phantom{1}\cellcolor{third}{33.77} & \phantom{1}7.46 & \phantom{1}2.65 & 48.36 & 65.33 & 77.75 & \cellcolor{third}{24.04} & 10.52 & \phantom{1}\cellcolor{third}{3.38} & 43.43 & 56.73 & 68.28 & \phantom{1}\cellcolor{third}{53.34} \\
%\sftENM & \phantom{1}5.23 & \phantom{1}1.56 & 63.77 & 75.03 & 83.50 & \phantom{1}48.79 & \phantom{1}7.20 & \phantom{1}2.63 & 48.74 & 65.68 & 78.04 & 34.82 & 10.59 & \phantom{1}3.41 & 43.32 & 56.64 & 68.06 & \phantom{1}71.61 \\
%\sftRC & \phantom{1}5.10 & \phantom{1}1.50 & 64.90 & 75.90 & 84.11 & \phantom{1}38.12 & \phantom{1}7.15 & \phantom{1}2.58 & 49.42 & 66.35 & 78.55 & 31.82 & 10.37 & \phantom{1}3.36 & 43.73 & 57.15 & 68.71 & \phantom{1}54.34 \\
\sftRCENM & \phantom{1}\cellcolor{third}{4.99} & \phantom{1}\cellcolor{third}{1.50} & \cellcolor{third}{65.18} & \cellcolor{third}{76.20} & \cellcolor{third}{84.30} & \phantom{1}56.52 & \phantom{1}\cellcolor{second}{6.98} & \phantom{1}\cellcolor{third}{2.58} & 49.38 & \cellcolor{third}{66.31} & \cellcolor{third}{78.57} & 47.07 & \cellcolor{second}{10.20} & \phantom{1}\cellcolor{second}{3.34} & \cellcolor{second}{43.75} & \cellcolor{third}{57.11} & \cellcolor{third}{68.63} & \phantom{1}76.86 \\
\sfaf & 37.48 & 25.16 & 22.91 & 28.59 & 35.16 & \phantom{1}\cellcolor{best}{16.58} & 41.23 & 21.73 & 22.75 & 31.47 & 38.98 & \cellcolor{best}{13.33} & 36.40 & 19.53 & 24.41 & 31.73 & 39.24 & \phantom{1}\cellcolor{best}{32.04} \\
%\sfafENM & \phantom{1}4.97 & \phantom{1}1.48 & 65.49 & 76.47 & 84.53 & \phantom{1}40.45 & \phantom{1}7.02 & \phantom{1}2.59 & 49.35 & 66.32 & 78.53 & 28.35 & 10.35 & \phantom{1}3.38 & 43.62 & 56.99 & 68.55 & \phantom{1}62.44 \\
%\sfafR & 33.45 & 19.95 & 25.86 & 32.22 & 39.16 & \phantom{1}\cellcolor{second}{16.32} & 36.98 & 11.59 & 27.03 & 36.70 & 44.48 & \cellcolor{second}{12.48} & 34.84 & 18.22 & 25.27 & 32.77 & 40.29 & \phantom{1}\cellcolor{second}{29.56} \\
%\sfafRC & 35.65 & 23.47 & 23.99 & 29.97 & 36.64 & \phantom{1}\cellcolor{best}{11.10} & 38.34 & 14.47 & 25.89 & 35.19 & 42.80 & \phantom{1}\cellcolor{best}{9.35} & 36.77 & 20.76 & 23.94 & 31.23 & 38.58 & \phantom{1}\cellcolor{best}{19.75} \\
\sfafRCENM & \phantom{1}\cellcolor{second}{4.98} & \phantom{1}\cellcolor{second}{1.48} & \cellcolor{second}{65.62} & \cellcolor{second}{76.56} & \cellcolor{second}{84.55} & \phantom{1}\cellcolor{second}{32.37} & \phantom{1}\cellcolor{third}{6.99} & \phantom{1}\cellcolor{second}{2.57} & \cellcolor{second}{49.62} & \cellcolor{second}{66.55} & \cellcolor{second}{78.72} & \cellcolor{second}{23.43} & 10.43 & \phantom{1}\cellcolor{third}{3.38} & 43.57 & 56.95 & 68.56 & \phantom{1}\cellcolor{second}{42.36} \\
\sftm & \phantom{1}6.02 & \phantom{1}1.71 & 60.86 & 72.56 & 81.73 & \phantom{1}35.18 & \phantom{1}8.36 & \phantom{1}2.74 & 47.15 & 64.06 & 76.59 & 25.15 & 12.09 & \phantom{1}3.62 & 41.78 & 55.06 & 66.67 & \phantom{1}54.89 \\
%\sftmENM & \phantom{1}5.28 & \phantom{1}1.58 & 63.61 & 74.96 & 83.46 & \phantom{1}48.83 & \phantom{1}7.21 & \phantom{1}2.63 & 48.69 & 65.57 & 77.95 & 34.93 & 10.67 & \phantom{1}3.43 & 43.31 & 56.61 & 68.08 & \phantom{1}70.87 \\
%\sftmR & \phantom{1}5.49 & \phantom{1}1.56 & 63.66 & 74.88 & 83.37 & \phantom{1}41.52 & \phantom{1}7.69 & \phantom{1}2.63 & 48.69 & 65.67 & 77.99 & 30.76 & 11.17 & \phantom{1}3.46 & 43.28 & 56.39 & 67.88 & \phantom{1}60.40 \\
%\sftmRC & \phantom{1}5.48 & \phantom{1}1.54 & 63.99 & 75.14 & 83.51 & \phantom{1}30.88 & \phantom{1}7.75 & \phantom{1}2.62 & 48.77 & 65.71 & 77.98 & 22.72 & 11.16 & \phantom{1}3.48 & 43.04 & 56.28 & 67.73 & \phantom{1}42.38 \\
\sftmRCENM & \phantom{1}5.01 & \phantom{1}\cellcolor{third}{1.50} & \cellcolor{third}{65.18} & 76.16 & 84.28 & \phantom{1}44.51 & \phantom{1}\cellcolor{third}{6.99} & \phantom{1}\cellcolor{third}{2.58} & \cellcolor{third}{49.43} & \cellcolor{third}{66.31} & 78.56 & 32.48 & \cellcolor{third}{10.25} & \phantom{1}3.39 & \cellcolor{third}{43.74} & \cellcolor{second}{57.21} & \cellcolor{second}{68.80} & \phantom{1}58.08 \\
\sftmd & \phantom{1}5.55 & \phantom{1}1.68 & 61.86 & 73.73 & 82.78 & \phantom{1}83.70 & \phantom{1}7.65 & \phantom{1}2.67 & 48.16 & 65.25 & 77.76 & 59.23 & 11.13 & \phantom{1}3.47 & 42.74 & 56.11 & 67.74 & 125.02 \\
%\sftmdENM & \phantom{1}4.99 & \phantom{1}1.56 & 64.02 & 75.51 & 84.01 & 125.66 & \phantom{1}\cellcolor{second}{6.88} & \phantom{1}2.60 & 49.11 & 66.13 & 78.51 & 89.05 & 10.26 & \phantom{1}3.35 & 43.66 & 56.99 & 68.55 & 175.54 \\
%\sftmdR & \phantom{1}\cellcolor{second}{4.86} & \phantom{1}\cellcolor{second}{1.47} & \cellcolor{second}{65.97} & \cellcolor{second}{77.00} & \cellcolor{second}{85.01} & 100.61 & \phantom{1}7.10 & \phantom{1}\cellcolor{second}{2.56} & \cellcolor{second}{49.79} & \cellcolor{second}{66.89} & \cellcolor{second}{79.08} & 73.94 & 10.37 & \phantom{1}\cellcolor{second}{3.34} & \cellcolor{second}{43.93} & \cellcolor{second}{57.26} & \cellcolor{best}{68.82} & 138.53 \\
%\sftmdRC & \phantom{1}4.92 & \phantom{1}1.47 & 65.85 & 76.87 & 84.90 & \phantom{1}71.73 & \phantom{1}7.19 & \phantom{1}2.56 & 49.71 & 66.75 & 78.95 & 52.84 & 10.52 & \phantom{1}3.36 & 43.79 & 57.17 & 68.73 & \phantom{1}92.89 \\
\sftmdRCENM & \phantom{1}\cellcolor{best}{4.66} & \phantom{1}\cellcolor{best}{1.46} & \cellcolor{best}{66.12} & \cellcolor{best}{77.14} & \cellcolor{best}{85.15} & 112.60 & \phantom{1}\cellcolor{best}{6.65} & \phantom{1}\cellcolor{best}{2.55} & \cellcolor{best}{49.87} & \cellcolor{best}{66.95} & \cellcolor{best}{79.20} & 81.98 & \phantom{1}\cellcolor{best}{9.96} & \phantom{1}\cellcolor{best}{3.32} & \cellcolor{best}{44.02} & \cellcolor{best}{57.42} & \cellcolor{best}{68.97} & 139.19 \\
\midrule
\end{tabular}}

\caption{Results for different solvers implemented in the PoseLib framework~\cite{PoseLib} on 12 scenes from PhotoTourism~\cite{IMC2020}, 5 scenes from Cambridge Landmarks~\cite{kendall2015cambridge} and Aachen Day-Night v1.1~\cite{zhang2021aachen}. We mark the \textbf{best} and \underline{second best} results. Runtimes are reported in ms for the whole RANSAC with early termination (0.9999 confidence, minimum 100 iterations) and epipolar threshold set to 5 px.
    }
    \label{tab:poselib_both}
\end{table*}

Tab.~\ref{tab:poselib_both} shows the results 
for PoseLib RANSAC with early termination and 5px epipolar threshold. %
Results for GC-RANSAC are in SM. 
With the suggested modifications (see Sec.~\ref{sec:making_solvers_practical}) all 
proposed 
solvers outperform the state-of-the-art %
\sfhc solver~\cite{Hruby_cvpr2022} in terms of pose accuracy with many 
of the variants also achieving faster runtimes. 

Using filtering (\texttt{+F}) improves the run-time of RANSAC at the cost of a decrease in pose accuracy. 
Still, our \sftmRC solver outperforms \sfhc in terms of both accuracy and run-time. 
The \sftmd and \sftmdR solvers clearly improve upon the \sftm solvers, albeit at an increased run-time.
Still the \sftmdRC solver provides, in general, the best speed-accuracy trade-off.
The results also show that early non-minimal refinement (\texttt{+ENM}) is necessary for the \sfaf solver to perform well. 
For the other approximate solvers, this form of refinement also leads to improved accuracy at some cost to the runtime. 
% \newtext
{Assuming that the \sft solver samples five points from which at least four have projections in all three cameras, the suggested 
pose refinement (+R) and filtering (+F) can also be applied to the baseline \sft by using the fourth (potentially the fifth) point in the third view.
%In this case, the fourth (potentially the fifth) point in the third view is used in the same way as in proposed solvers. 
The suggested modifications improve the performance of the \sft solver, with the \sftRCENM solver having an accuracy similar to that of the \sftmRCENM solver, however at slower run-times, 
Moreover, it is less %and worse accuracy 
accurate than the proposed \sftmdRCENM solver.
}

We also investigate the speed-accuracy trade-off of the solvers by running PoseLib RANSAC~\cite{PoseLib} for a set of fixed numbers of iterations. 
Runtimes are reported for 1 core of a 2 GHz Intel Xeon Gold 6338 CPU. 
As shown in Fig.~\ref{fig:poselib_graphs}, %
the proposed \sftmdRC solver consistently provides the best speed-accuracy trade-off both with and without \texttt{ENM}. \sfafRCENM provides a similar performance, %
typically beating \sfhc~\cite{Hruby_cvpr2022}. However, as shown in Fig.~\ref{fig:poselib_graphs_st_mary} for the \textit{St.~Mary's Church} scene~\cite{kendall2015cambridge}, %
it may perform worse for some specific scenes.
%
%
%
%

%
% \newtext
{
The SM provides detailed ablation studies on the significance of each individual modification (\texttt{+R}/\texttt{+C}/\texttt{+ENM}).} % as well as the various improvements achieved, 
% are presented in the SM.}
%
%
%
%

%
%
%

%
%
%
%
%

%
%
%
%
%
%
%

%
%
%
%

%
\PAR{Limitations.} 
As discussed, 
the accuracy of the proposed approximate solvers is scene dependent.\footnote{This weakness also applies to~\cite{Hruby_cvpr2022}, since the scene needs to be similar enough to the training scenes for the MLP-based classifier to work well.} However, after using the propsed ENM refitting, the scene-dependency is negligible. 
Especially for the \sftmdENM solver, we have not noticed performance drops for some specific scene geometries or camera configurations. 
While the two-view variants of our solvers $\texttt{4p(M)}$/\texttt{4p(M$\pm \delta$)} provide very similar results to the $\texttt{5pt}$ solver in two views, in this case they do not outperform the $\texttt{5pt}$ solver. In the three-view scenario, the better performance is achieved thanks to the proposed pose refinement (+R) and filtering (+F).

The proposed ENM refitting cannot be applied to the \sfhc solver~\cite{Hruby_cvpr2022}, since this solver estimates the pose of all three cameras together. It is theoretically applicable to the solver proposed in~\cite{DBLP:journals/ijcv/NisterS06}. However, here ENM would have needed to be run on hundreds of candidate poses corresponding to tens-to-hundreds of sampled epipoles. This is because for one fixed point on the curve of epipoles,
the error of the sampled epipole is not bounded since the true epipole can lie anywhere on the curve, even outside the image. Thus, ENM might avoid local minima present in the second step of the original solver, but would significantly slow down the solver in the first step, making it much slower than our \sftmENM / \sftmdENM solvers.

\section{Conclusion}
We consider the highly challenging %
problem of relative pose estimation of three calibrated %
cameras from four correspondences. 
We propose  novel 
approaches that solve the problem by utilizing solvers based on approximate geometry. The best performing solver uses a simple, yet novel strategy by using mean coordinates of three input points and points in their vicinity as an  approximate $5^{th}$ correspondences.
Extensive experiments show that our solvers achieve state-of-the-art performance on a large variety of real scenes. 
At the same time, our solvers %
are simple to implement, especially compared to the current state-of-the-art~\cite{Hruby_cvpr2022}. 

\PAR{Acknowledgements.}
This work was funded by 
the Czech Science Foundation (GAČR) JUNIOR STAR Grant No.~22-23183M (supporting C.T., Y.D., and Z.K.), 
the Grant Agency of the Czech Technical University in Prague grant no.~SGS23/173/OHK3/3T/13 (supporting C.T.), 
the EU NextGenerationEU through the Recovery and Resilience Plan for Slovakia under the project ''InnovAIte Slovakia, Illuminating Pathways for AI-Driven Breakthroughs" No.~09I02-03-V01-00029, % (supporting V.K.), 
the TERAIS project -- a Horizon-Widera-2021 program of the European Union under the Grant agreement number 101079338 (supporting V.K. and Z.B.H.), 
and the EU Horizon 2020 project RICAIP grant agreement No.~857306 (supporting T.S.). 
Part of the research results was obtained using the computational resources procured in the national project National competence centre for high performance computing project code: 311070AKF2, funded by European Regional Development Fund, EU Structural Funds Informatization of society, Operational Program Integrated Infrastructure.

\section*{Supplementary Material}
%%%%%%%%% BODY TEXT
This supplementary material provides additional details and experimental results promised in the main paper: 
Sec.~\ref{sec:estimating} provides the proof on the bound of the epipolar error of the mean point correspondence mentioned in Sec.~3.1 of the main paper, additional synthetic and noise experiments that were discussed in Sec.~4 of the main paper, accuracy-speed trade-off results for two-view approximate geometry inside RANSAC (see Sec.~4 of the main paper), and additional details and plots on more scenes for the mean point correspondence accuracy (see Sec.~4 of the main paper). 
Sec.~\ref{sec:exp:ablations} contains ablation studies to validate our choices regarding the modifications discussed in Sec.~3.2 of the main paper.
Sec.~\ref{sec:exp} provides results for the accuracy-speed trade-off for individual scenes from the PhotoTourism~\cite{IMC2020} and Cambridge Landmarks~\cite{kendall2015cambridge} datasets, evaluations of the three-view solvers for different thresholds inside RANSAC (see Sec.~4 of the main paper), runtimes of the proposed and state-of-the-art solvers for the 4p3v problem, semi-synthetic experiments for increasing outliers ratios on a scene from PhotoTourism~\cite{IMC2020}, details and results on an alternative evaluation measure (see Sec.~4 of the main paper), and results with GC-RANSAC (see Sec.~4 of the main paper).
\makeatletter
\def\blfootnote{\gdef\@thefnmark{}\@footnotetext}
\makeatother
\blfootnote{$^\ast$ Equal contribution}

\section{ Approximate camera geometry}
In this section, in addition to the experiments presented in Sec. 4  of the main paper (paragraph ``Approximate camera geometry"), we present additional experiments and results to support our idea of estimating approximate geometry in the first two views. We start 
with the proof on the bound of the epipolar error of the mean-point correspondence used in the proposed \sftm-based solvers (see Sec 3.1 of the main paper). 
Next, Sec.~\ref{sec:independentc} discusses why the mean-point correspondence provides an additional constraint (compared to the original point correspondences) that can be used to estimate the essential matrix. 
Then, to further assess the accuracy of the two-view variants of our approximate solvers (outside of RANSAC), \ie, \texttt{4p(A)}, \texttt{4p(M)},  and \texttt{4p(M$\pm \delta$)}, we design two additional synthetic experiments (see Sec.~\ref{sec:exp:synth}). 
The goal of these synthetic experiments is to study how the accuracy of approximate solutions varies with varying properties of the scene and the cameras.
Moreover, Sec.~\ref{sec:exp:synth} provides a synthetic noise experiment on 
%ground-truth 
data extracted from a real scene from the ETH3D dataset~\cite{Schops_2017_CVPR}, outside of RANSAC for the three-view solvers. Lastly, Sec.~\ref{sec:twoview} contains a speed-accuracy evaluation of the two-view variants of the proposed solvers, inside Poselib RANSAC, and in Sec.~\ref{sec:bary} we study the accuracy of the mean point correspondence (as in Fig. 3 of the main paper).

\label{sec:estimating}

\begin{figure}[t]
     \centering
     \includegraphics[width=1\columnwidth]{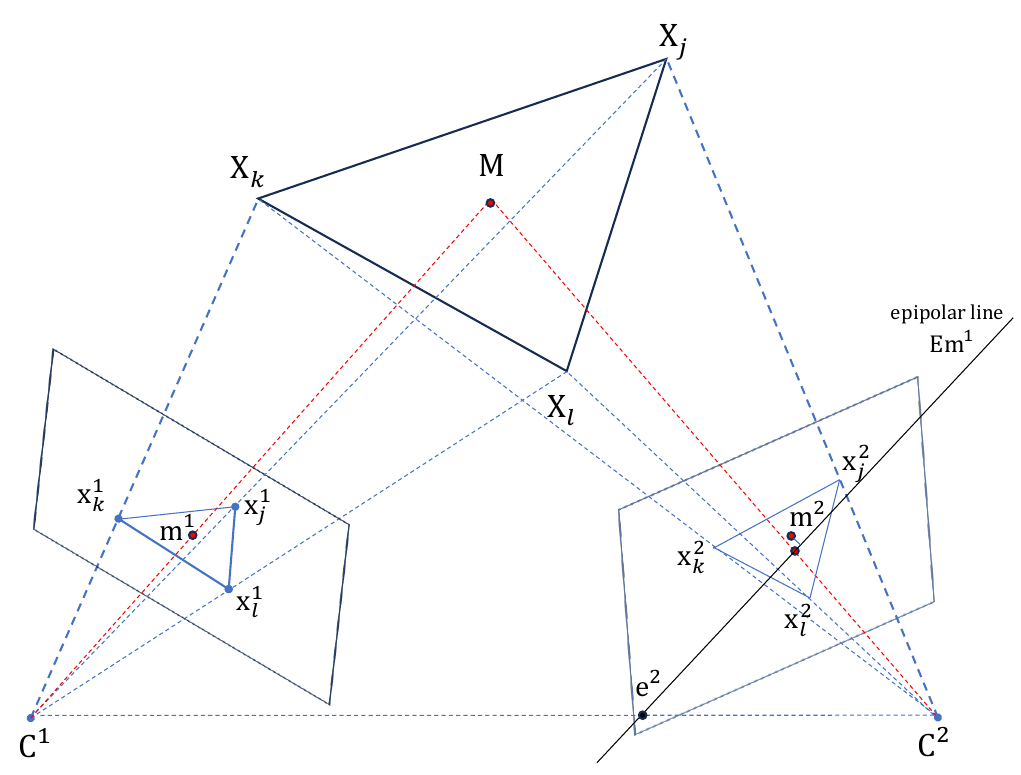}
          \caption{Illustration of the geometric configuration considered in the proof of Lemma~\ref{lemma:inter}.}
     \label{fig:illustration}
 \end{figure}

\subsection{Proof of the bounds on the epipolar error} 
\label{sec:proof}
While the mean point correspondence $\V m^1 \leftrightarrow \V m^2$  used in the \sftm-based solvers can provide a good approximation of a correct correspondence, such a correspondence can be noisy. %\footnote{
Note that all state-of-the-art 4p3v solvers (including 
\sfhc~\cite{Hruby_cvpr2022} and 
%\sfep~ 
the solver from~\cite{DBLP:journals/ijcv/NisterS06}) rely on certain approximations without establishing theoretical proofs to quantify their accuracy. 
In the \sfhc solver~\cite{Hruby_cvpr2022}, the failures that appear quite often are usually the results of tracking a geometrically incorrect solution inside the homotopy continuation method.\footnote{The solver is tracking only one from 272 solutions of the relaxed version of the 4p3v problem and this solution does not need to be a geometrically correct one.} Thus, this solution can be arbitrarily far from the correct solution. The solver from~\cite{DBLP:journals/ijcv/NisterS06}  requires sampling epipoles from a $10^{th}$-degree curve on which the true epipole must lie. For any selected  point on the curve of epipoles, the error of the sampled epipole is not bounded, since the true epipole can lie anywhere on the unbounded curve. The curve is unbounded since the epipole can be located arbitrarily far away from the image center based on the relative pose of the two cameras (up to infinity for sideways motion). 

\begin{figure*}[htbp]
    \centering

    % TikZ figure at the top (only once)
    \begin{tikzpicture} 
        \begin{axis}[%
        hide axis, xmin=0,xmax=0,ymin=0,ymax=0,
        legend style={draw=white!15!white, 
        line width = 1pt,
        legend columns =7, % comment for column display
        /tikz/every even column/.append style={column sep=0.05cm}
        },
        legend image post style={xscale=1}
        ]
        \addlegendimage{Seaborn2}
        \addlegendentry{\texttt{5p(E)}~\cite{Nister-5pt-PAMI-2004}};
        \addlegendimage{Seaborn4}
        \addlegendentry{\texttt{4p(M)}};
        \addlegendimage{Seaborn5}
        \addlegendentry{\texttt{4p(M$\pm \delta$)}};
        \addlegendimage{Seaborn3}
        \addlegendentry{\texttt{4p(A)}};
        \addlegendimage{}
        \addlegendentry{};
        \addlegendimage{black!30,dash pattern=on 2pt off 1pt on 2pt off 1pt}
        \addlegendentry{w/o \texttt{ENM}};
        \addlegendimage{black!30,dash pattern=on 1pt off 0.5pt on 1pt off 0.5pt}
        \addlegendentry{w/ \texttt{ENM}};
        \end{axis}
    \end{tikzpicture}

    % Subfigure 1
    \begin{subfigure}[b]{0.49\textwidth}
        \centering
        \includegraphics[trim={1.5cm 0 1.5cm 2cm},clip, width=\textwidth]{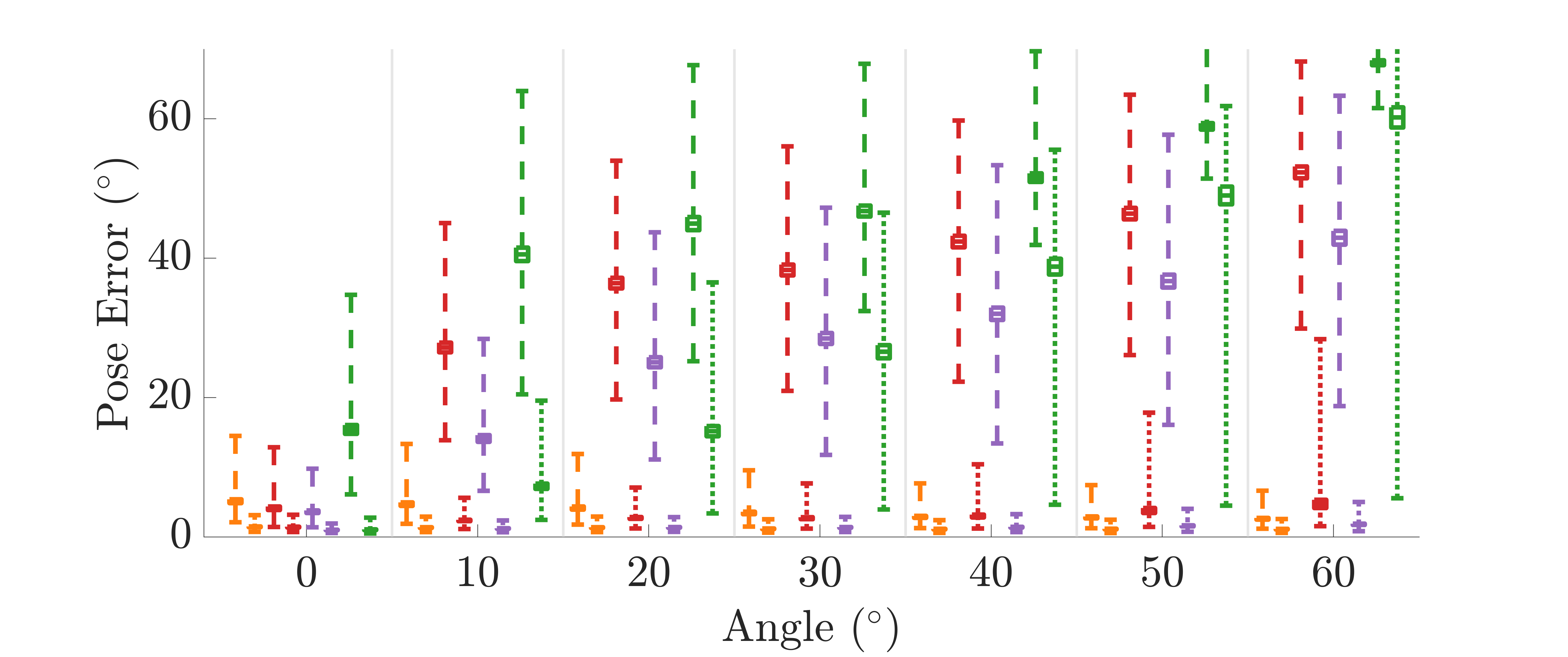}
        \caption{}
        \label{fig:subfig102}
    \end{subfigure}
    \hfill
    % Subfigure 2
    \begin{subfigure}[b]{0.49\textwidth}
        \centering
        \includegraphics[trim={1.5cm 0 1.5cm 2cm},clip, width=\textwidth]{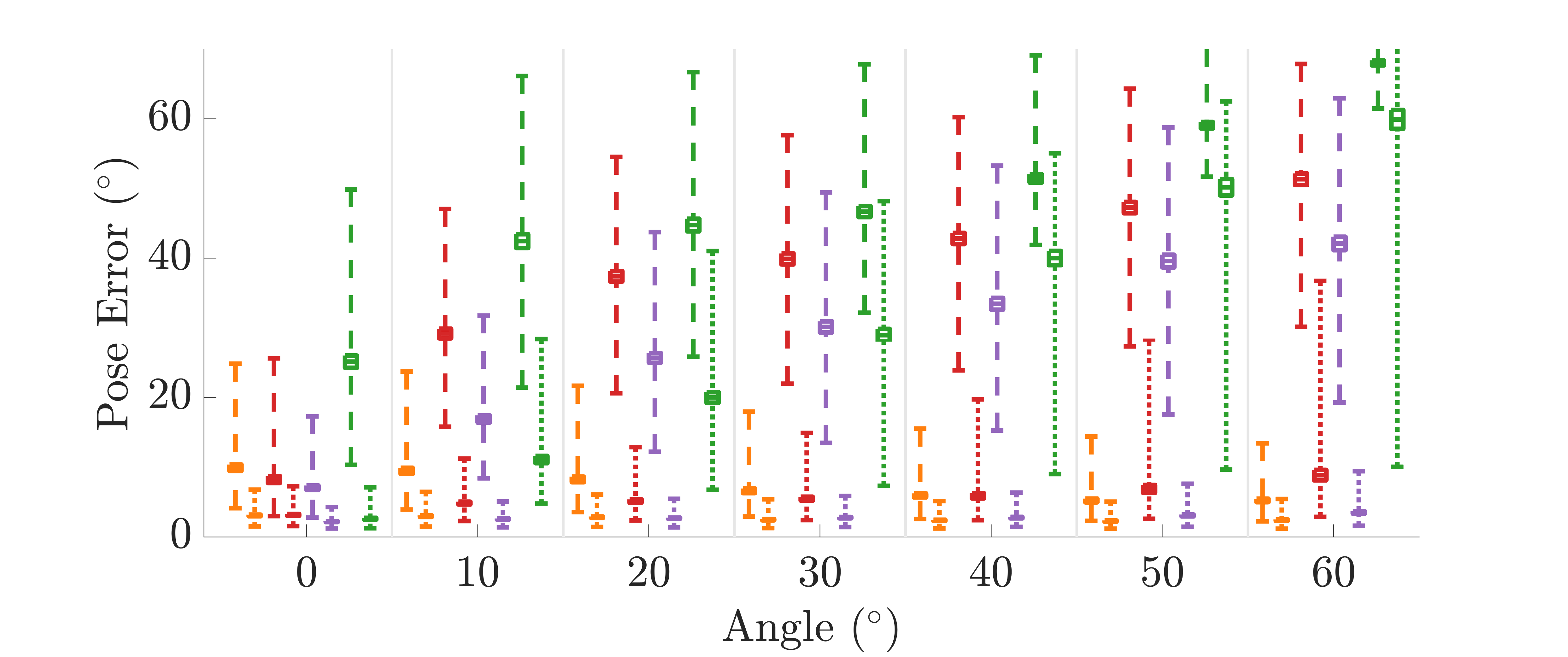}
        \caption{}
        \label{fig:subfig202}
    \end{subfigure}
    
    % Subfigure 3
    \begin{subfigure}[b]{0.49\textwidth}
        \centering
        \includegraphics[trim={1.5cm 0 1.5cm 2cm},clip, width=\textwidth]{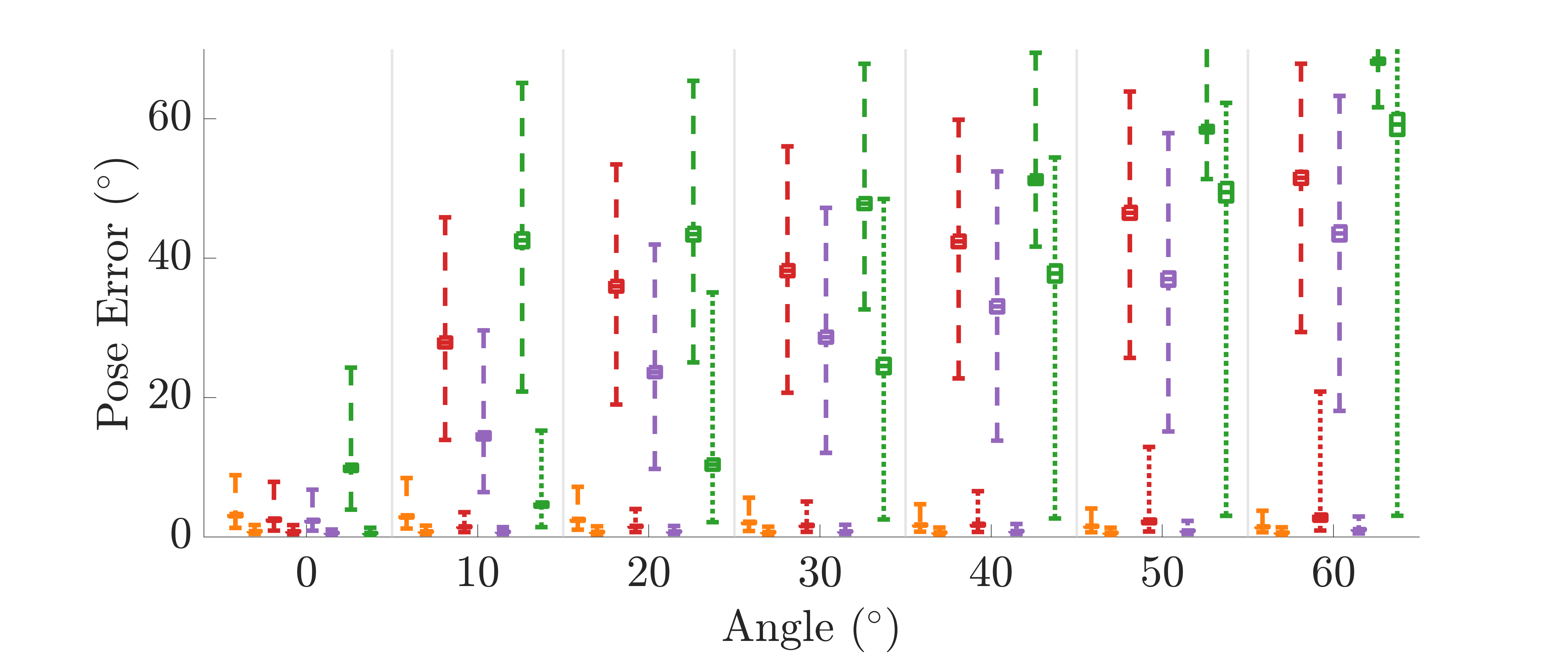}
        \caption{}
        \label{fig:subfig302}
    \end{subfigure}
    \hfill
    % Subfigure 4
    \begin{subfigure}[b]{0.49\textwidth}
        \centering
        \includegraphics[trim={1.5cm 0 1.5cm 2cm},clip, width=\textwidth]{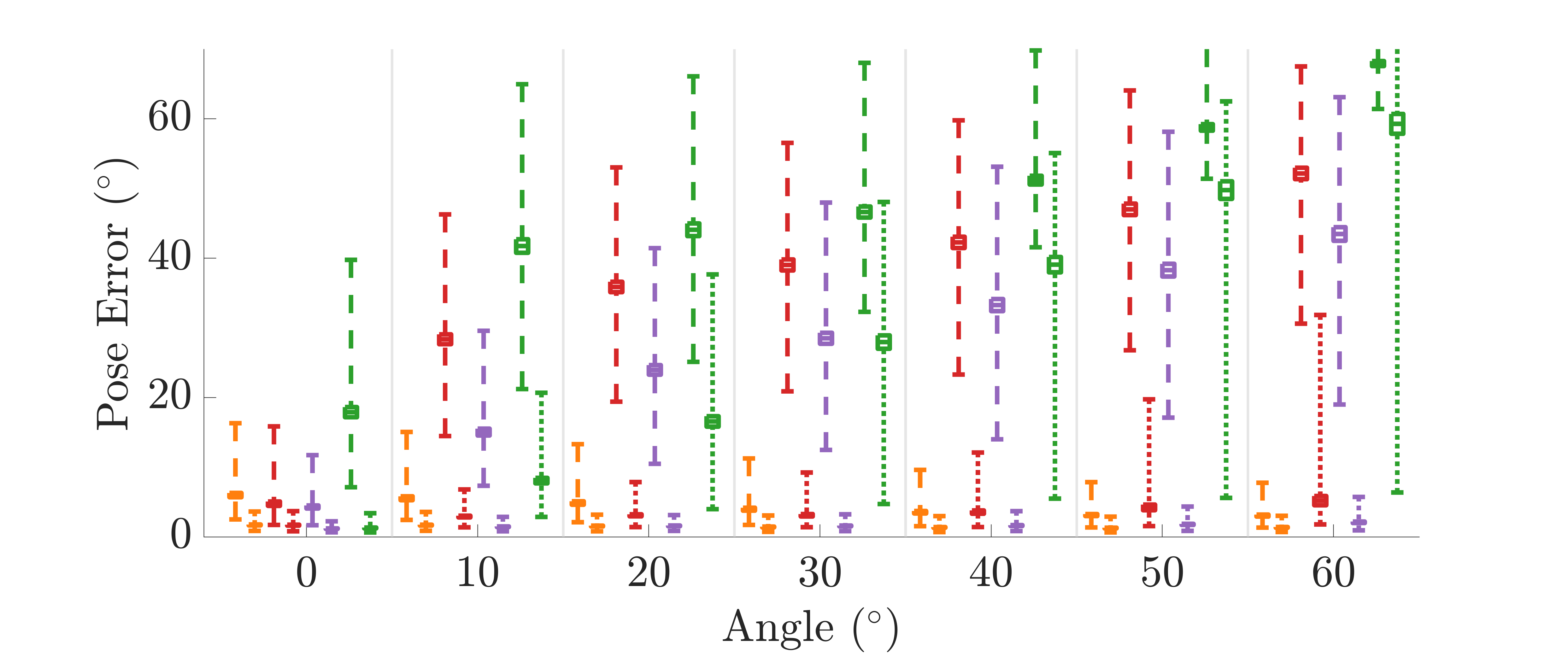} 
        \caption{}
        \label{fig:subfig402}
    \end{subfigure}
    \caption{{Results from a synthetic experiment evaluating the accuracy of two-view variants of our solvers as a function of the angle between the principal axes of the cameras are presented. The top row, comprising Subfigures (a) and (b), shows results for Gaussian noise with standard deviations of 2px and 4px, respectively. The bottom row, consisting of Subfigures (c) and (d), presents results for uniform noise with 2px and 4px deviations, respectively. The outlier ratio is set to 20\% in all cases. From the solutions for each solver and sample we select the one with the lowest error.}}
    \label{fig:synth_angle_4px}
\end{figure*}

\begin{figure*}[htbp]
    \centering

    % TikZ figure at the top (only once)
    \begin{tikzpicture} 
        \begin{axis}[%
        hide axis, xmin=0,xmax=0,ymin=0,ymax=0,
        legend style={draw=white!15!white, 
        line width = 1pt,
        legend columns =7, % comment for column display
        /tikz/every even column/.append style={column sep=0.05cm}
        },
        legend image post style={xscale=1}
        ]
        \addlegendimage{Seaborn2}
        \addlegendentry{\texttt{5p(E)}~\cite{Nister-5pt-PAMI-2004}};
        \addlegendimage{Seaborn4}
        \addlegendentry{\texttt{4p(M)}};
        \addlegendimage{Seaborn5}
        \addlegendentry{\texttt{4p(M$\pm \delta$)}};
        \addlegendimage{Seaborn3}
        \addlegendentry{\texttt{4p(A)}};
        \addlegendimage{}
        \addlegendentry{};
        \addlegendimage{black!30,dash pattern=on 2pt off 1pt on 2pt off 1pt}
        \addlegendentry{w/o \texttt{ENM}};
        \addlegendimage{black!30,dash pattern=on 1pt off 0.5pt on 1pt off 0.5pt}
        \addlegendentry{w/ \texttt{ENM}};
        \end{axis}
    \end{tikzpicture}

    % Subfigure 1
    \begin{subfigure}[b]{0.49\textwidth}
        \centering
        \includegraphics[trim={1.5cm 0 1.5cm 2cm},clip, width=\textwidth]{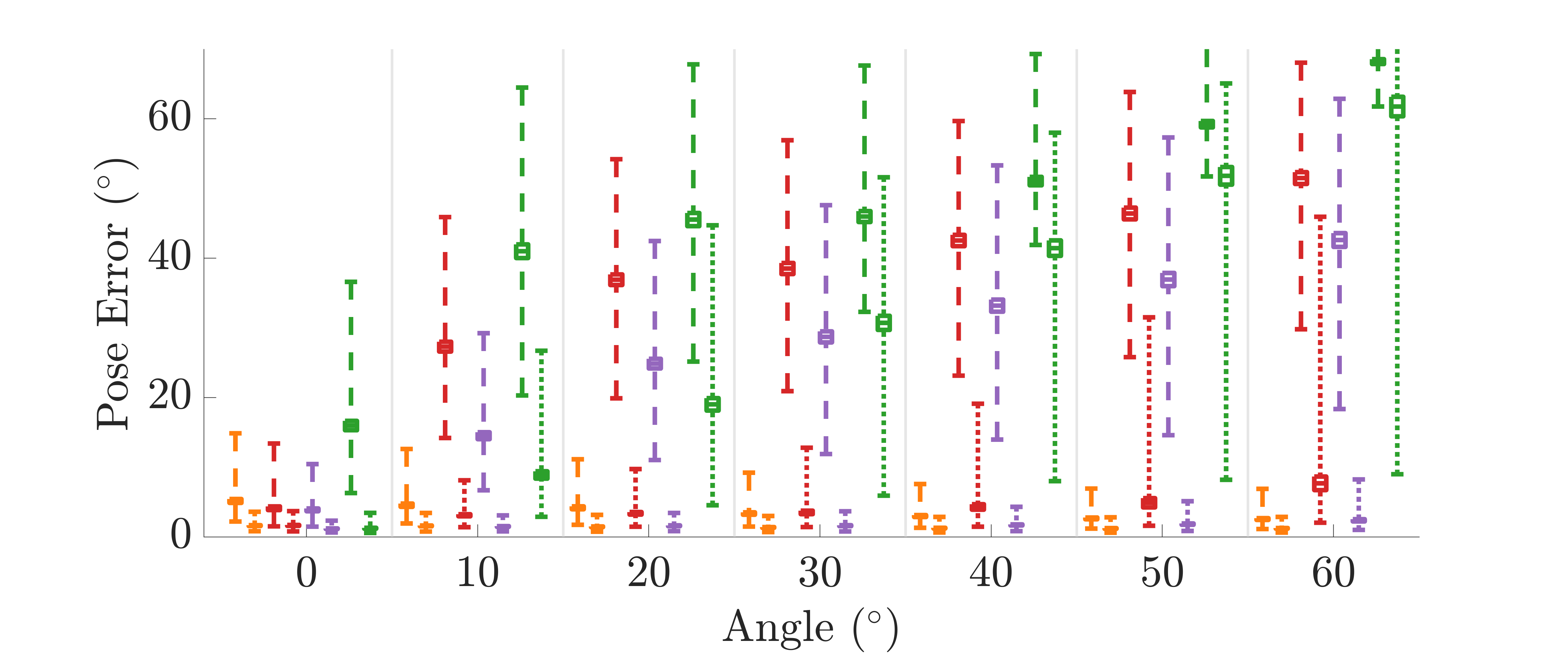}
        \caption{}
        \label{fig:subfig104}
    \end{subfigure}
    \hfill
    % Subfigure 2
    \begin{subfigure}[b]{0.49\textwidth}
        \centering
        \includegraphics[trim={1.5cm 0 1.5cm 2cm},clip, width=\textwidth]{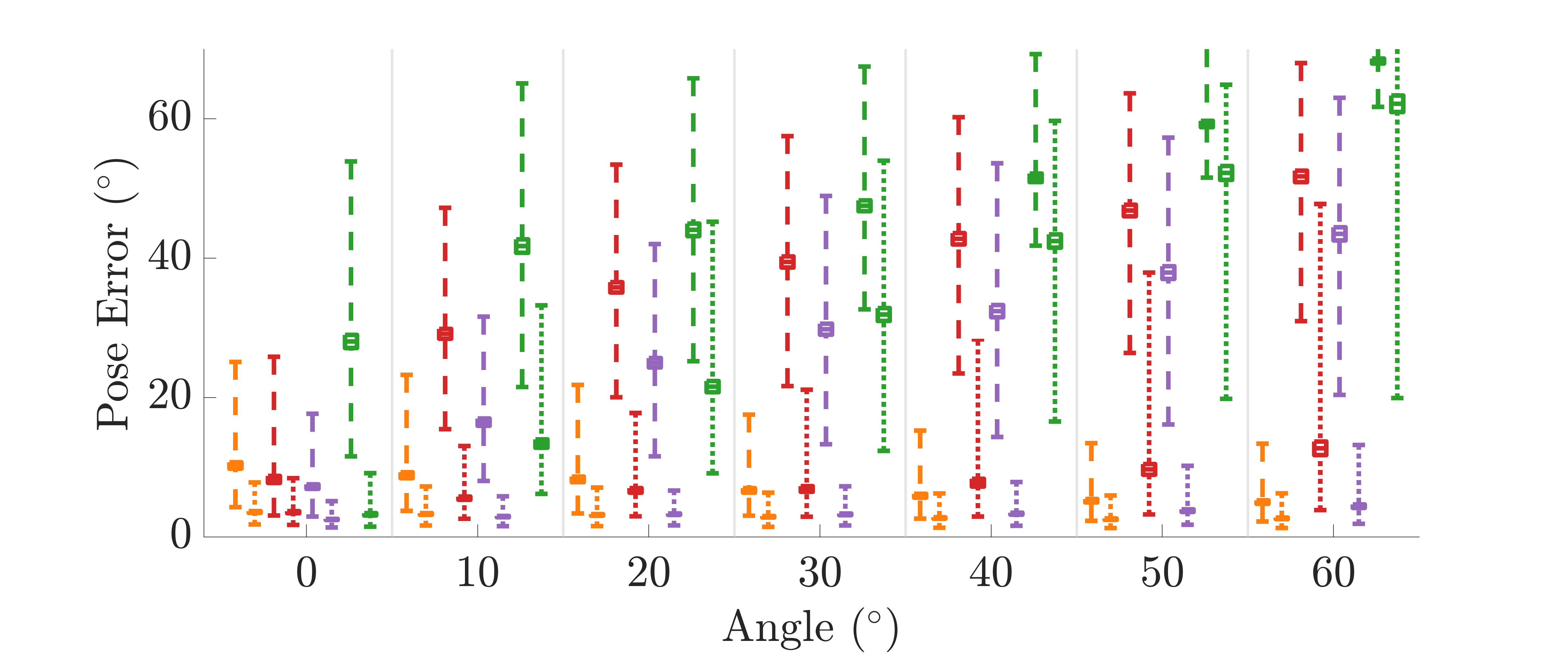}
        \caption{}
        \label{fig:subfig204}
    \end{subfigure}

    \caption{{Results from a synthetic experiment evaluating the accuracy of two-view variants of our solvers as a function of the angle between the principal axes of the cameras are presented. %The Subfigures (a) and (b) 
    We present results for Gaussian noise with standard deviations of 2px and 4px, respectively. The outlier ratio is set to 40\% in all cases. From the solutions for each solver and sample we select the one with the lowest error.}}
    \label{fig:synth_angle_04out}
\end{figure*}

\begin{figure*}[t!]
    \centering

\begin{tikzpicture} 

        \begin{axis}[%
        hide axis, xmin=0,xmax=0,ymin=0,ymax=0,
        legend style={draw=white!15!white, 
        line width = 1pt,
        legend  columns =7, % comment for column display
        /tikz/every even column/.append style={column sep=0.2cm},
        }
        ]
        
        \addlegendimage{Seaborn2}
        \addlegendentry{\texttt{5p(E)}~\cite{Nister-5pt-PAMI-2004}};
        \addlegendimage{Seaborn4}        \addlegendentry{\texttt{4p(M)}};
        \addlegendimage{Seaborn5}        \addlegendentry{\texttt{4p(M$\pm\delta$)}};
        \addlegendimage{Seaborn3}
        \addlegendentry{\texttt{4p(A)}};
        \addlegendimage{}
        \addlegendentry{};
        \addlegendimage{black!30,dash pattern=on 2pt off 1pt on 2pt off 1pt}
        \addlegendentry{w/o \texttt{ENM}};
        \addlegendimage{black!30,dash pattern=on 1pt off 0.5pt on 1pt off 0.5pt}
        \addlegendentry{w/ \texttt{ENM}};
        \end{axis}
    \end{tikzpicture}
   
    \begin{subfigure}{0.49\textwidth}
    \includegraphics[trim={2.5cm 0 2.5cm 0},clip, width=\textwidth]{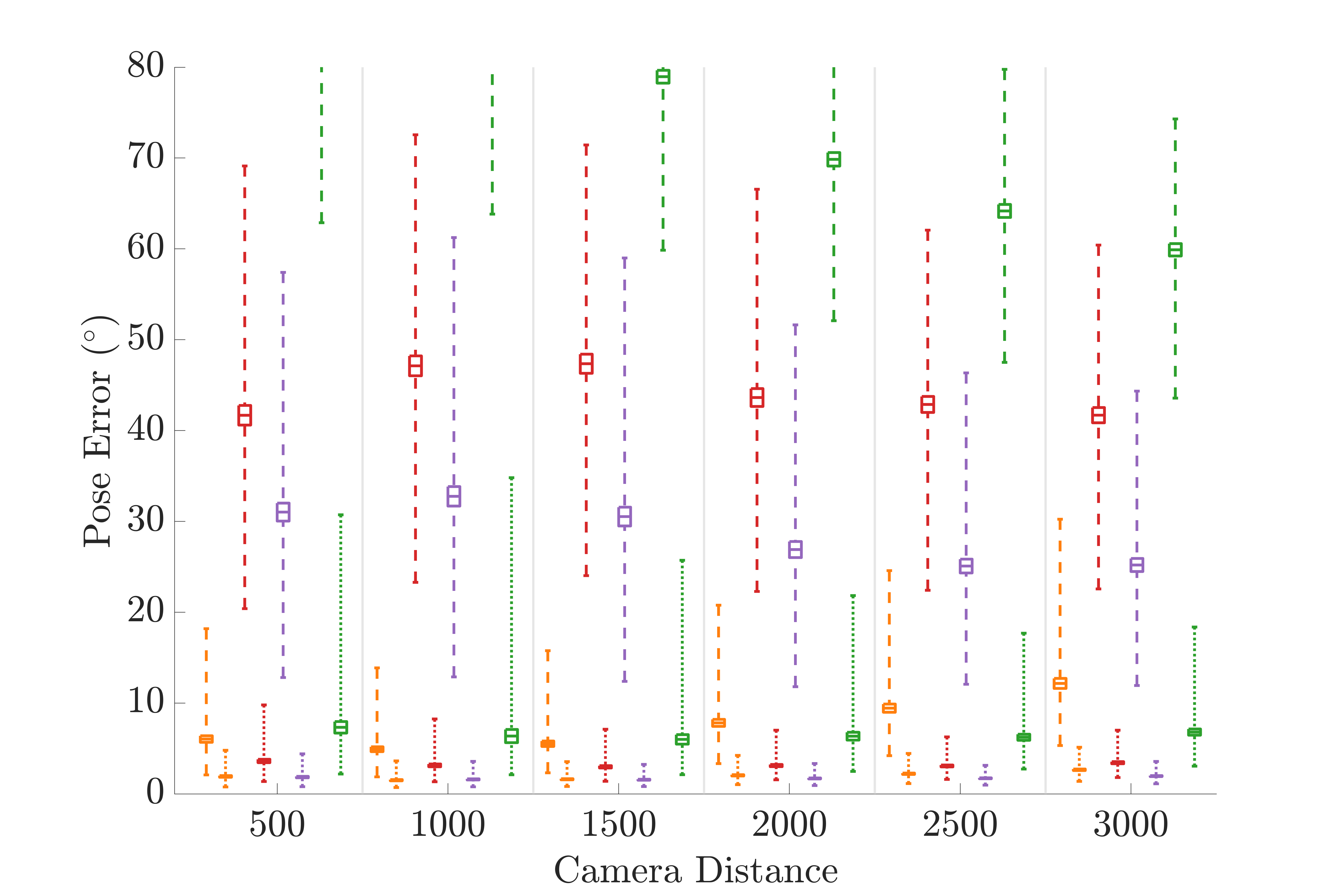}
    \caption{}
    \label{fig:synth_dist}
    \end{subfigure}
    \hfill    
    \begin{subfigure}{0.49\textwidth}
    \includegraphics[trim={2.5cm 0 2.5cm 0},clip, width=\textwidth]{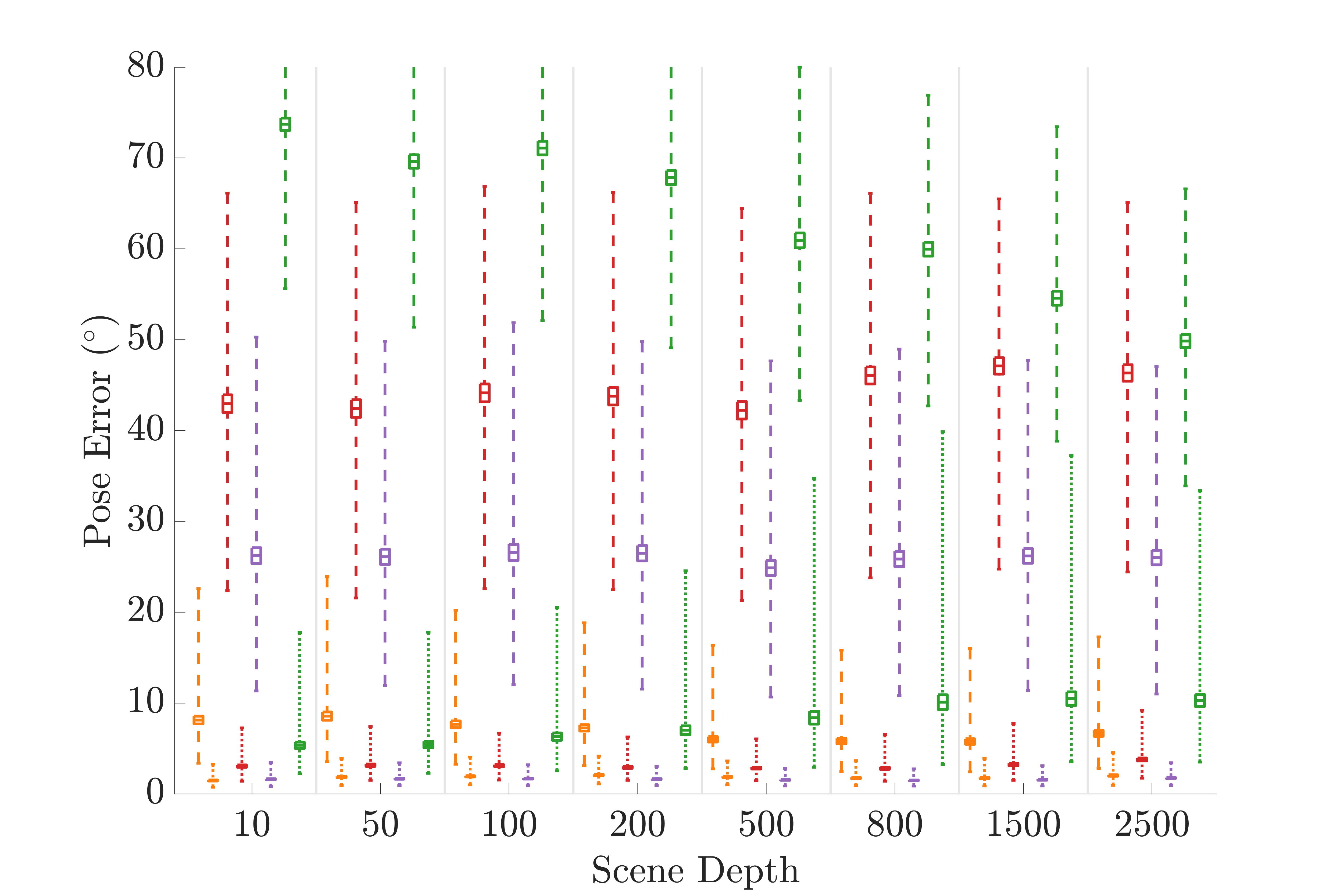}
    \caption{}
    \label{fig:synth_depth}
    \end{subfigure}  
    
    \caption{Synthetic experiments for two-view solvers, measuring the pose error under varying camera distance from the scene and scene depths, with added 1px noise. In \textbf{(a)}, points are uniformly sampled within a 2000x2000x100 cube, and projected to cameras positioned at distances (in units) from the scene as indicated by the x-axis, looking towards the scene. In \textbf{(b)}, the depth of the scene is varied, with points sampled inside a 2000x2000x\textit{depth} cube as specified by the x-axis. Cameras are randomly placed at distances between 1000 and 1200 units from the scene, looking towards the scene. On the y-axis of both figures is the pose error measured as $\max\left(\M R_{err}, \M t_{err} \right)$. From the solutions for each solver and sample we select the one with the lowest error. The results are displayed by boxplots which shows the 25\% to 75\% quantiles as a box with a horizontal line at the median.}
    \label{fig:synth_depth_distance}
\end{figure*}

In contrast, 
the error of our mean point correspondence is bounded. %.}. 
The $\V m^1 \leftrightarrow \V m^2$ correspondence can be seen as a correspondence of points that are projections of the mean point of three 3D points. 
In this case, the error of both projections $\V m^1$ and $\V m^2$ can be computed as an error that is introduced by approximating the perspective projection using the para-perspective projection. The approximation error introduced by the para-perspective projection is studied in the literature and can be found \eg. in~\cite{Zhang2014, Horaud97}.

 However, we can also look at the $\V m^1 \leftrightarrow \V m^2$ correspondence from a different point of view. We can consider this correspondence as a correspondence in which we fix a point in one view, \eg, $\V m^1$, and generate a corresponding point in the second view.
 In this case, as mentioned in the main paper, it can be proven that the epipolar error of the mean point correspondence $\V m^1 \leftrightarrow\V m^2$ is bounded by the maximum distance of the mean point $\V m^2$ from the vertices of the triangle $\left\{\V x_i^2,\V x_j^2,\V x_k^2\right\}$.
Here we provide a simple proof.

\begin{lemma}
Let us assume two cameras with camera centers $\V C^1$ and $\V C^2$ that observe 3D points $X_i, X_j,$ and $X_k$ (see Figure~\ref{fig:illustration} for an illustration). Let $\left\{\V x_i^1,\V x_j^1,\V x_k^1\right\}$ and $\left\{\V x_i^2,\V x_j^2,\V x_k^2\right\}$ be the projections of these 3D points in camera 1 and camera 2, respectively. Let $\V m^1$ be the mean point of the points $\left\{\V x_i^1,\V x_j^1,\V x_k^1\right\}$ and let $\V E$  be the essential matrix between these two cameras, \ie, a matrix that satisfies $\V x_l^{2^\top} \V E \V x_l^1 = 0,\; l \in \left\{i,j,k\right\}$. Then the epipolar line $\V E \V m^1$ passes through the triangle $\left\{\V x_i^2,\V x_j^2,\V x_k^2\right\}$.
\label{lemma:inter}
\end{lemma}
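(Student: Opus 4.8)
The plan is to exploit the projective-geometric fact that the mean point $\V m^1$ of $\left\{\V x_i^1,\V x_j^1,\V x_k^1\right\}$ lies in the interior of the triangle they span, and to transport this "in-between" property to the second view via the epipolar correspondence. First I would set up homogeneous coordinates so that $\V m^1 = \tfrac13(\V x_i^1 + \V x_j^1 + \V x_k^1)$ as a point in $\R^2$, hence a convex combination of the three image points; in particular $\V m^1$ lies strictly inside $\Delta\left\{\V x_i^1,\V x_j^1,\V x_k^1\right\}$. The key observation is that the three 3D points $X_i, X_j, X_k$ (assumed non-collinear) span a plane $\pi$, and $\V m^1$ back-projects to a ray from $\V C^1$ that meets $\pi$ in a single 3D point $Y$ which lies inside the 3D triangle $\Delta\left\{X_i,X_j,X_k\right\}$ — this is because the perspective projection from $\V C^1$ restricted to $\pi$ is a homography, and a homography between the plane $\pi$ and the image plane sends the triangle $\Delta\left\{X_i,X_j,X_k\right\}$ to $\Delta\left\{\V x_i^1,\V x_j^1,\V x_k^1\right\}$ preserving interior points. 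So $Y \in \mathrm{int}\,\Delta\left\{X_i,X_j,X_k\right\}$.

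Next I would project $Y$ into the second camera to obtain a point $\V y^2$. Since $Y$ is the intersection of the back-projected ray of $\V m^1$ with $\pi$, the point $\V y^2$ lies on the epipolar line of $\V m^1$, i.e. on $\V E\V m^1$ (the epipolar line is exactly the image in camera 2 of the back-projecting ray of $\V m^1$). On the other hand, projecting the 3D plane $\pi$ into camera 2 is again a homography, carrying $\Delta\left\{X_i,X_j,X_k\right\}$ to $\Delta\left\{\V x_i^2,\V x_j^2,\V x_k^2\right\}$ and carrying interior points to interior points; hence $\V y^2 \in \mathrm{int}\,\Delta\left\{\V x_i^2,\V x_j^2,\V x_k^2\right\}$. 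Therefore the epipolar line $\V E\V m^1$ contains a point inside the triangle $\Delta\left\{\V x_i^2,\V x_j^2,\V x_k^2\right\}$, which is precisely the claim that $\V E\V m^1$ passes through that triangle.

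The one subtlety I need to handle carefully is the case where the back-projecting ray of $\V m^1$ from $\V C^1$ is parallel to (or, projectively, meets at infinity) the supporting plane $\pi$, or where $\V C^1 \in \pi$ — degenerate configurations in which $Y$ is not a finite point in the triangle. I would dispose of these by a genericity remark (the configurations are measure-zero and the statement can be obtained by a limiting/closure argument since "passing through the closed triangle" is a closed condition), or, alternatively, by arguing directly in the image: the map $\V x^1 \mapsto \V E\V x^1$ sending a point in view 1 to its epipolar line in view 2 is (projectively) linear, so the epipolar line of the convex combination $\V m^1 = \tfrac13(\V x_i^1+\V x_j^1+\V x_k^1)$ is a "convex-type" combination of the three epipolar lines $\V E\V x_i^1, \V E\V x_j^1, \V E\V x_k^1$, each of which passes through the corresponding vertex $\V x_l^2$; a pencil of lines through a moving base point sweeping between three vertices must sweep through the triangle interior.

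The main obstacle is making this "lines sweep through the triangle" argument rigorous without hand-waving: one must be careful that the correspondence between the parameter $(a,b,c)$ in $\V m^1 = a\V x_i^1 + b\V x_j^1 + c\V x_k^1$ and the position where the epipolar line crosses the triangle is continuous and attains the vertices at the extreme parameter values, and that "between" in the pencil of epipolar lines matches "between" in the triangle. I expect the cleanest route is the 3D-homography argument of the first two paragraphs, treating the degenerate plane/center configurations by closure; the image-only pencil argument is a good sanity check and a fallback if one wants to avoid back-projection entirely. Once the lemma is in place, the advertised bound on the virtual-correspondence error — that the Sampson/epipolar error of $\V m^1 \leftrightarrow \V m^2$ is at most the maximum distance from $\V m^2$ to the vertices $\left\{\V x_i^2,\V x_j^2,\V x_k^2\right\}$ — follows immediately, since $\V m^2$ and the line $\V E\V m^1$ both lie in the triangle, so their distance is bounded by the triangle's diameter as seen from $\V m^2$.
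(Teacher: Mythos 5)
Your argument is essentially the paper's own proof: back-project $\V m^1$ from $\V C^1$ to the plane of $X_i, X_j, X_k$, observe that the intersection point lies inside the 3D triangle, and project it into the second camera to obtain a point that lies both on the epipolar line $\V E \V m^1$ and inside $\Delta\left\{\V x_i^2,\V x_j^2,\V x_k^2\right\}$; the paper phrases the interior-preservation step via the tetrahedra spanned by each camera center and the three 3D points rather than via homographies, which (like your homography step) implicitly assumes the points lie in front of both cameras. Your additional remarks on degenerate configurations and the pencil-of-epipolar-lines fallback go slightly beyond what the paper records, but the core route is the same.
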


\begin{proof}
The camera center $\V C^1$ and the 3D points $\V X_i, \V X_j,$ and $\V X_k$ form a tetrahedron $T^1$ (see Figure~\ref{fig:illustration}). The projections $\left\{\V x_i^1,\V x_j^1,\V x_k^1\right\}$ in the first camera lie at the edges of this tetrahedron $T^1$.
The ray from the camera center $\V C^1$ through the mean point $\V m^1$ thus lies inside the tetrahedron $T^1$ and intersects the plane defined by 3D points $\V X_i, \V X_j,$ and $\V X_k$ in a point $\V M$ that lies inside the triangle defined by $\left\{\V X_i,\V X_j,\V X_k\right\}$. 

The camera center $\V C^2$ and the 3D points $\V X_i, \V X_j,$ and $\V X_k$ form a tetrahedron $T^2$.
Again, the projections $\left\{\V x_i^2,\V x_j^2,\V x_k^2\right\}$ 
%in the second camera 
lie at the edges of the tetrahedron $T^2$.
The ray passing through the camera center $\V C^2$ and the 3D point $\V M$ lies inside the tetrahedron $T^2$ and thus intersects the image plane of the second camera at a point that lies inside the triangle defined by the points $\left\{\V x_i^2,\V x_j^2,\V x_k^2\right\}$. 
By construction, the projection of $\V M$ into the second camera lies on the epipolar line $\V E \V m^1$. 
Therefore, the epipolar line $\V E \V m^1$ which is a line connecting this point and the epipole $\V e^2$, passes through the triangle $\left\{\V x_i^2,\V x_j^2,\V x_k^2\right\}$. 
\end{proof}

It follows from Lemma~\ref{lemma:inter} that since the epipolar line $\V E \V m^1$ passes through the triangle $\left\{\V x_i^2,\V x_j^2,\V x_k^2\right\}$, the maximum distance of the mean point $\V m^2$ to the epipolar line $\V E \V m^1$ is equal to the maximum distance of $\V m^2$ to the vertices of the triangle.

\subsection{Mean-point constraint}
\label{sec:independentc}
As already mentioned in the main paper, under the assumption of a para-perspective projection, \ie, of affine geometry, the mean point of three 3D points is projected to the mean points of the 3D points' projections in both images~\cite{Zhang2014}.
Thus,
the mean point correspondence $\V m^1 \leftrightarrow \V m^2$ does not add a new constraint if used to estimate an affine camera.  This can be easily shown. In the case of affine cameras, the essential matrix $\M E_A$ has the form
\begin{equation}
 \M E_A  =  \begin{bmatrix}
0 & 0 & a\\
0 & 0 & b \\
c & d & f
\end{bmatrix} \enspace .
\end{equation}
Thus the epipolar constraint for the mean point correspondence $\V m^1 \leftrightarrow \V m^2$ with the homogeneous coordinates $\V m^1 = (\V x_i^1/3 + \V x_j^1/3 + \V x_k^1/3$), and $\V m^2 = (\V x_i^2/3 + \V x_j^2/3 + \V x_k^2/3$):
\begin{equation}
    (\V m^{2})^{\top}  \mathtt{E}_A \V m^{1} = 0 
\end{equation}
% \begin{equation}
%   [\frac{(\V x_i^1 + \V x_j^1 + \V x_k^1)^T}{3} 1] \cdot \mathtt{E}_A \cdot [[(x_i^2/3 + x_j^2/3 + x_k^2/3) 1] = 0,  
% \end{equation}
can be written as a linear combination of the epipolar constraint for the three input points, \ie  $\V (x_l^{2})^{\top}  \mathtt{E}_A \V x_l^{1} = 0,\, l \in \left\{i,j,k\right\}$.

This is not the case for perspective cameras. For perspective cameras, \ie, when estimating the full essential %(fundamental) 
matrix $\M E$, the mean point correspondence introduces an additional constraint. In this case the epipolar constraint  
\begin{equation}
    (\V m^{2})^{\top}  \mathtt{E} \V m^{1} = 0 \enspace, %,
    \label{eq:epipolarE}
\end{equation}
 after expansion, contains terms $x_a^1x_b^2,  a \neq b, \, a,b \in \left\{i,j,k\right\}$. 
Thus, the epipolar constraint~\eqref{eq:epipolarE} is not a linear combination of the individual epipolar constraints $ (\V x_l^{2})^{\top}  \mathtt{E} \V x_l^{1} = 0, \, l =i,j,k$.
Therefore, the mean point correspondence provides an independent constraint when used to estimate the epipolar geometry of perspective cameras.

\subsection{Synthetic Experiments}
\label{sec:exp:synth}
The
error of the relative poses estimated with the proposed approximate %correspondences %, however, 
\sftm-based and \sfaf-based solvers
depends on many aspects, \eg, the baseline and the view angles of the cameras \wrt the three points used to compute the mean point correspondence, the depth of these points, the size and shape of the triangles defined by the three points, the type of motion of the cameras, the depth of the scene and the distance of cameras from the scene, the level of noise in the correspondences, \etc 
Isolating the impact of the individual aspects, \eg, through experiments on synthetic data, is highly non-trivial (\eg, how to generate realistic synthetic scenarios that allow conclusions to generalize to real-world scenarios) and analysing the co-dependencies between different aspects on the overall performance seems to need a paper on its own. Moreover, the effect of approximation introduced by using para-perspective projection was already studied in the literature~\cite{Zhang2014,Horaud97}.

In the main paper, we thus presented mostly results on real-world scenes, without trying to isolate individual factors (see Figure~3 and Table~1 in the main paper).  However, we also tested interesting camera and scene setups using synthetically generated data.

We extend the synthetic experiment for increasing angles between the projection rays of the cameras in the main paper (Fig.~2 of the main paper), by investigating the impact of increased noise, alternative noise models, and higher outlier ratio. Similar to the experiment of increasing angles, we evaluate the two-view solvers (outside of RANSAC) in two additional interesting scenarios. 
The goal is to study the effect of the proposed approximations on the relative pose estimation under varying properties of the scene and the cameras. 

Additionally, we test the performance (outside of RANSAC) of our proposed approximate solvers and the state-of-the-art solvers for the 4p3v problem \wrt increasing image noise added to ground-truth correspondences extracted from a scene from the ETH3D dataset~\cite{Schops_2017_CVPR}.
As an evaluation metric for the two-view geometry, we use the pose error measured as $\max\left(\M R_{err}, \M t_{err} \right)$~\cite{IMC2020}.

\PAR{Increasing angle between principal axes of cameras.}
In Fig.~\ref{fig:synth_angle_4px}, we present results analogous to Fig.~2 of the main paper, but for $\sigma=2$px and $\sigma=4$px noise levels. We also include results for the uniform noise model (noise is evenly distributed in range $[-\sigma, \sigma]$). As in Fig.~2 of the main paper, the synthetic data contain 20\% outliers. As in the results presented in Fig.~2 of the main paper, the accuracy of both approximate solvers decreases as the angle increases, where $\texttt{4p(A)}$ demonstrates notably lower accuracy than $\texttt{4p(M)}$ and \texttt{4p(M$\pm \delta$)}. However, ENM singnificantly improves the accuracy, with $\texttt{4p(M$\pm \delta$)+ENM}$ achieving the same or slightly better accuracy than $\texttt{5pt+ENM}$ for angle $\leq 30^\circ$. In Fig.~\ref{fig:synth_angle_04out} we present results for Gaussian noise and higher outlier ratio, in particular 40\% outliers. The results are consistent with those of Fig.~\ref{fig:synth_angle_4px}, though the increased outlier ratio leads to a higher error when using ENM.

\begin{figure*}[t!]
    \centering
    \resizebox{1.0\linewidth}{!}{
\begin{tikzpicture} 

        \begin{axis}[%
        hide axis, xmin=0,xmax=0,ymin=0,ymax=0,
        legend style={draw=white!15!white, 
        line width = 1pt,
        legend  columns =9, % comment for column display
        /tikz/every even column/.append style={column sep=0.5cm},
        }
        ]
        
        \addlegendimage{Seaborn1}        \addlegendentry{\sfhc~\cite{Hruby_cvpr2022}};
        \addlegendimage{Seaborn2}
        \addlegendentry{\sft};
        \addlegendimage{Seaborn4}
        \addlegendentry{\sftm}; 
        \addlegendimage{Seaborn5}
        \addlegendentry{\sftmd};
        \addlegendimage{Seaborn3}
        \addlegendentry{\sfaf};
        % \addlegendimage{Seaborn4}
        % \addlegendentry{\sfah}; 
        
        % \addlegendimage{white}
        % \addlegendentry{~};
        \addlegendimage{black!30,dash pattern=on 2pt off 1pt on 2pt off 1pt}
        \addlegendentry{\texttt{+R} w/o \texttt{ENM}};
        \addlegendimage{black!30,dash pattern=on 1pt off 0.5pt on 1pt off 0.5pt}
        \addlegendentry{\texttt{+R} w/ \texttt{ENM}};
        
        \end{axis}
    \end{tikzpicture}}
    \includegraphics[width=0.85\textwidth]{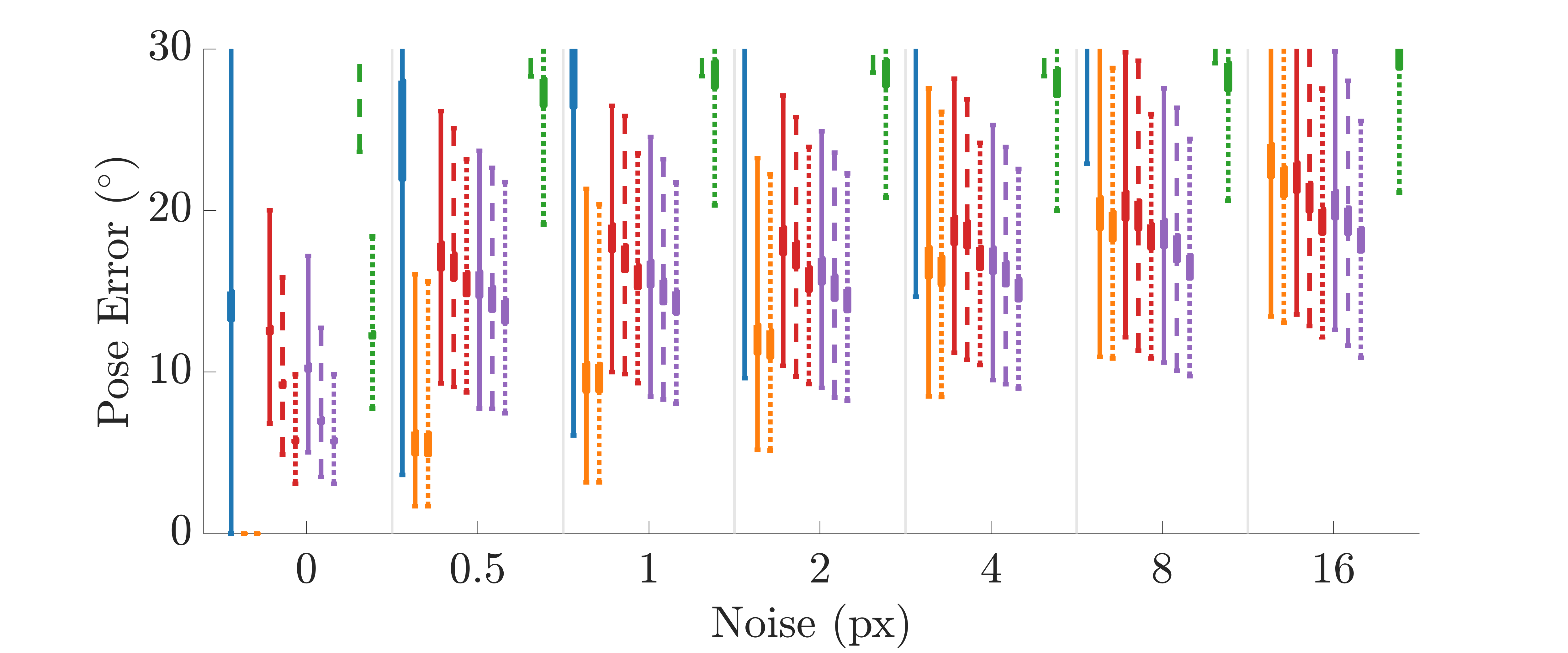}
    \caption{{Noise experiment showing 
    the pose error measured as $\text{max}\left(0.5 (\M R_{err}^{12} + \M R_{err}^{13}), 0.5 (\V t_{err}^{12} + \V t_{err}^{13})\right)$, as a function of the noise scale in pixels. Here, $\M R_{ij}$ and $\M t_{ij}$ are the relative rotation and translation of the $i^{\text{th}}$ and $j^{\text{th}}$ views, respectively. From the solutions for each solver and sample we select the one with the lowest error. Note that the errors observed for the pure \texttt{4p(A)} solver (without \texttt{ENM} and without refinement) are outside of the error range shown in this plot.}}
    \label{fig:noise}
\end{figure*}

\PAR{Increasing distance of cameras to the 3D scene.}
It is known that the quality of the affine approximation, \ie, the approximation of the perspective projection using the para-perspective projection, depends on the distance of the points from the camera~\cite{Zhang2014}. 
Thus in the first experiment, we evaluate the performance of all solvers w.r.t. increasing distance of the cameras to the 3D scene.

We perform this experiment on 10k synthetically generated instances. 
% In this case, f
For each of the 10k instances, we uniformly sample 3D points inside a 2000x2000x100 unit cube, and the camera centers are placed at random points with the same distance from the scene. % indicated by the values on the x-axis. 
The distances tested (in units) are $\{ 500, 1000, 1500, 2000, 2500, 3000\}$. The cameras are generated such that they look towards the scene. We add 1px noise to the projected points. Note that the scene is generated such that the projections of the points cover a large portion of the image for cameras at all distances.

Fig.~\ref{fig:synth_dist} shows the results of this experiment represented
by the boxplot function, which shows values between the $25\%$ and $75\%$ quantiles as a box with a horizontal line at the median.
As expected, the errors of the \texttt{4p(A)} solver decrease as the distance of the cameras from the scene increases, since affine geometry can be better satisfied with larger distances from the scene. Without considering \texttt{ENM}, \texttt{5p(E)} is the best performing solver. 
The errors of the \texttt{5p(E)} solver are increasing with increasing distance of the cameras from the scene (due to the fact that fixed image noise is generating larger errors for points that are farther from cameras).  This effect is less visible for the proposed \texttt{4p(M)} and \texttt{4p(M$\pm \delta$)} solvers
since for these solvers the error is originally more dominated by the error in the mean point correspondence.
When considering \texttt{ENM}, \texttt{5p(E)}, \texttt{4p(M)}, and \texttt{4p(M$\pm \delta$)} solvers perform similarly, with \texttt{4p(M$\pm \delta$)} being the most accurate for distances $\geq 1500$. The \texttt{4p(A)} solver is also greatly improved when using \texttt{ENM}, reaching similar or even better accuracy than \texttt{5p(E)} (w/o \texttt{ENM}), for distances $\geq 1000$.

\PAR{Increasing depth of the 3D scene.}
In Fig.~\ref{fig:synth_depth}, instead of increasing the distance of the cameras to the 3D scene, we place the cameras randomly at distances between 1000 and 1200 units away from the scene, looking towards the scene, while changing the depth of the scene. In particular, the 3D points are generated uniformly at random inside a 2000x2000x\textit{depth} unit cube, where the depth of the scene is specified by the values on the x-axis. The tested depths are $\{ 10, 50, 100, 200, 500, 800, 1500, 2500\}$. We add 1px noise to the projected points. Without using \texttt{ENM}, the pose errors of \texttt{4p(A)} are visibly decreasing as the depth of the scenes increases. 
This is to be expected since increasing the scene depth increases the chance of sampling four points that are more consistent with the para-perspective / affine camera model (since the points are more likely to be farther away from the cameras). 
The remaining solvers, that is, \texttt{5p(E)}, \texttt{4p(M)}, and \texttt{4p(M$\pm \delta$)}, are not significantly affected by the changes in the depth of the scene. When using \texttt{ENM}, all tested solvers are improved significantly, with \texttt{4p(M$\pm \delta$)} being the most accurate in terms of pose error. 
When using \texttt{ENM}, the errors in the estimated poses increase with increasing scene depths, which is particularly visible for the \texttt{4p(A)} solver. 
This behavior is due to the fact that for points farther away from the camera, the same amount of image noise (1px) has a larger impact, thus leading to the non-minimal samples being more affected by noise. 
Still, using \texttt{ENM} clearly leads to significantly smaller errors for all solvers. 

\PAR{Noise experiments.}
\label{sec:exp:noise}
 %\TODO{Update - ZK}
%\PAR{Noise experiments.}
We test the performance of our solvers and the state-of-the-art solvers \wrt increasing image noise. 
We used the SfM model of the botanical garden scene (randomly selected from all scenes) from the ETH3D dataset~\cite{Schops_2017_CVPR} to obtain instances of 5 points in three views by identifying images in the scene that share 3D points. 
Perfect noise-free image correspondences are generated by projecting the 3D points into the images. 
We then add increasing amounts of normally distributed noise to these correspondences. 
We generated more than 1k instances. 
Note that the \sfhc solver was trained on the ETH3D dataset~\cite{Schops_2017_CVPR}.

The results for increasing noise in the image points are shown in Fig.~\ref{fig:noise}. % and~\ref{fig:noise_uncalibrated}. 
The figure shows boxplots of %
the pose errors measured in the same way as in our experiments in the main paper (\cf Sec.~4 in the main paper), \ie, as
$\text{max}\left(0.5 (\M R_{err}^{12} + \M R_{err}^{13}), 0.5 (\V t_{err}^{12} + \V t_{err}^{13})\right)$.\footnote{Here $\M R_{err}^{ij}$  is the error of the estimated relative rotation between cameras $i$ and $j$, computed as the angle in the axis-angle representation of $\M R_{ij}^{-1}\M R_{ij}^{\M{GT}}$, and $\V t_{err}^{ij}$ is the error of the estimated translation computed as the angle between the two unit vectors corresponding to the translations~\cite{IMC2020}.} 
{The errors are zoomed into an interesting interval and are shown as functions of varying Gaussian noise from $0$px to $16$px.}

Due to the
approximations used in our proposed 
\sftm-based and \sfaf-based solvers, 
%\sftl and \sftm 
these solvers exhibit non-zero errors for zero noise. However, at noise levels %\newtext
{$\geq 4$px},  
our 
%\sftmR and 
\sftmdR solvers return comparable or even better (w/ \texttt{ENM}) results than the \sft solver. 
For noise %\newtext
{$\geq 8$px},  
% \newtext
{also the \sftmR solver with \texttt{ENM} returns  slightly more accurate poses than the \sft solver with \texttt{ENM}.}
In general, the effect of increasing image noise is
%much 
less visible for approximate \sftm-based and \sfaf-based solvers. In this case, the error of the approximation is dominating the error introduced by the noise in the image correspondences. While for \sfaf-based solvers the approximation error is dominant at all noise levels, for \sftm-based solvers,  at noise %\newtext
{$\geq 4$px}, the error introduced by the approximate mean point correspondence (and points in their vicinity in $\delta$-based solvers) is suppressed by the error introduced by noise in the remaining point correspondences.\footnote{This can be seen from the comparable pose accuracy of the \texttt{5pt+P3P} and \texttt{4p3v(M)} solvers for 
%\newtext
{$\geq 8$px} noise, and the comparable accuracy of the \texttt{5pt+P3P} and \texttt{4p3v(M$\pm\delta$)} solvers for %\newtext
{$\geq 4$px} noise.} 
Note that, although the pose errors for 
%\sfafR with \texttt{ENM}
the \texttt{4p3V(A)+R+ENM} solver
are higher than those of the rest of the solvers,
%what matter inside RANSAC,
%are the $10^{\text{th}}$ and $20^{\text{th}}$ percentiles or the pose error, which are reasonably low, 
as shown in our real experiments, this solver still returns reasonably low errors 
%(see $25\%$ quantiles)
to provide local optimization (LO) within RANSAC with a good initialization in real-world settings.
Further, note that the \sft solver samples one more point (real correspondence) in the first two cameras, and these points are affected only by the considered noise.
This shows that the mean point correspondence used in the \sftm-based solver is a good approximation to a real correspondence. 
The recent state-of-the-art \sfhc solver~\cite{Hruby_cvpr2022} is failing in about 50\% of the instances for noiseless data, even though the solver was trained on the ETH3D dataset. Thus, the median errors are significantly larger than the median errors of the remaining solvers. 

\begin{figure}[!t]
    \centering
    \resizebox{0.8\linewidth}{!}{
\begin{tikzpicture} 
        \begin{axis}[%
        hide axis, xmin=0,xmax=0,ymin=0,ymax=0,
        legend style={draw=white!15!white, 
        line width = 1pt,
        legend  columns =4, % comment for column display
        /tikz/every even column/.append style={column sep=0.2cm},
        }
        ]
        
        \addlegendimage{Seaborn2}
        \addlegendentry{\texttt{5p(E)}~\cite{Nister-5pt-PAMI-2004}};
        \addlegendimage{Seaborn4}        \addlegendentry{\texttt{4p(M)}};
        \addlegendimage{Seaborn5}        \addlegendentry{\texttt{4p(M$\pm\delta$)}};
        \addlegendimage{Seaborn3}
        \addlegendentry{\texttt{4p(A)}};
        \addlegendimage{white}
        \addlegendentry{~}

        \addlegendimage{black!30}
        \addlegendentry{w/o \texttt{ENM}};
        \addlegendimage{black!30,dash pattern=on 2pt off 1pt on 2pt off 1pt}
        \addlegendentry{w/ \texttt{ENM}};
        % \addlegendimage{white}
        % \addlegendentry{~};
        \end{axis}
    \end{tikzpicture}}
    
    \includegraphics[width=0.8\linewidth]{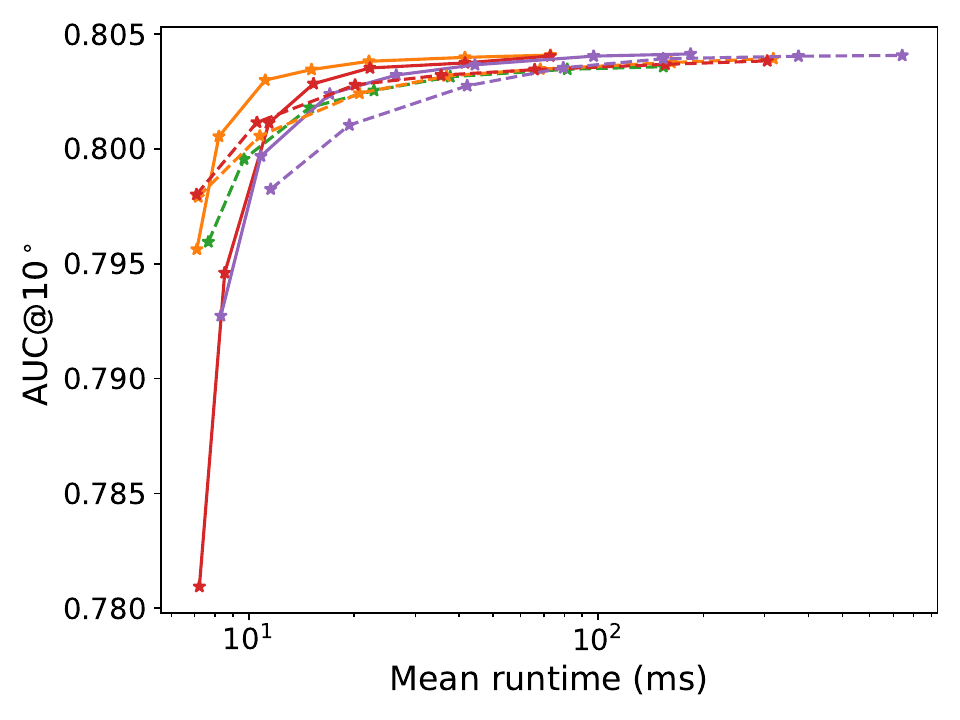}
    
    \includegraphics[width=0.8\linewidth]{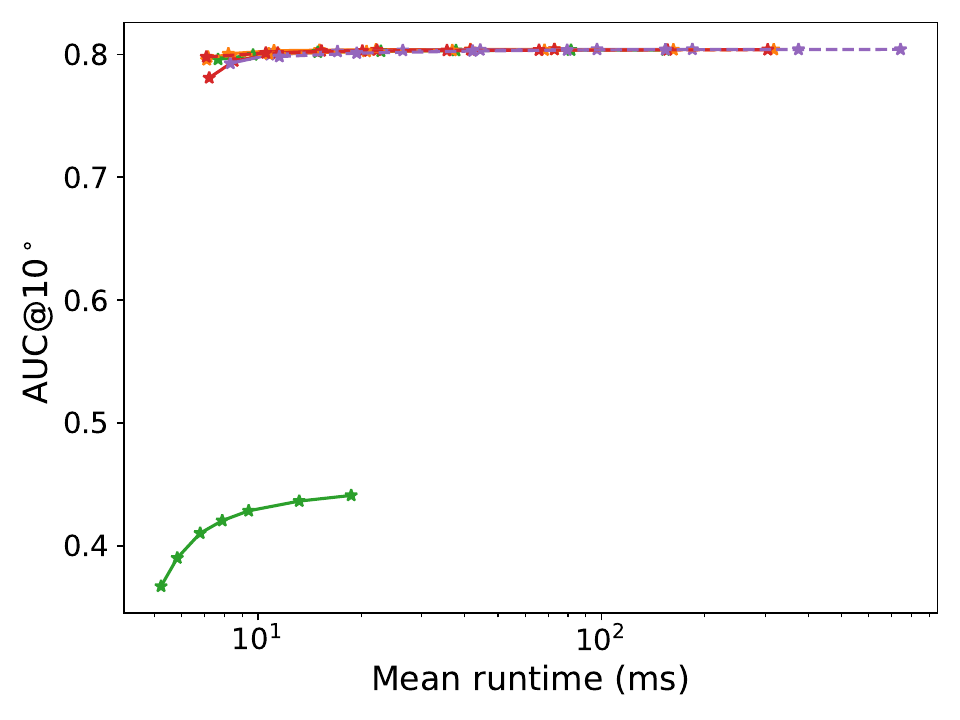}
    \caption{Speed-accuracy evaluation of various solvers for two view relative pose estimation, evaluated using PoseLib~\cite{PoseLib} on 12 scenes from the Phototourism dataset~\cite{IMC2020} (excluding \textit{St.~Peter's Square}). We report the AUC@10$^\circ$ of the pose error and vary the number of Poselib RANSAC iterations ($\{10, 20, 50, 100, 200, 500, 1000\}$) for a fixed 5 px epipolar threshold. The top plot is a zoomed-in version of the bottom plot. }
    \label{fig:twoview}
\end{figure}

\subsection{Two-view approximate solutions inside RANSAC}
\label{sec:twoview}
In the next experiment, we evaluate the discussed two-view solvers, \ie, the \texttt{5p(E)}, \texttt{4p(M)}, \texttt{4p(M$\pm \delta$)}, and \texttt{4p(A)} solvers,  inside RANSAC as well.
This experiment indicates how the proposed approximate solvers would have behaved if used as two-view solvers. Note that in the two-view case, the proposed filtering (\texttt{+F}) and refinement (\texttt{+R}) using the $4^{th}$ point correspondence in the third view are not applicable.

Fig.~\ref{fig:twoview} shows the speed-accuracy evaluation of the solvers for the problem of two view relative pose estimation evaluated using PoseLib RANSAC~\cite{PoseLib} on 12 scenes from the Phototourism dataset~\cite{IMC2020} (excluding \textit{St.~Peter's Square}) with pairwise point correspondences obtained using~\cite{detone2018superpoint, lindenberger2023lightglue}. The statistic reported is the AUC@10$^\circ$ of the pose error for a varied number of RANSAC iterations ($\{10, 20, 50, 100, 200, 500, 1000\}$) and a fixed epipolar threshold of 5px. The upper figure is a zoom-in of the lower figure to an interesting interval, where differences between the solvers are more visible. Although all proposed solvers (except the %\sfaf 
\texttt{4p(A)} solver without \texttt{ENM}) have a performance comparable to that of the state-of-the-art two view \texttt{5pt} solver, the \texttt{5pt} solver is the best performing one for the two-view scenario. This result is not surprising, given the well-known good performance (in terms of speed and accuracy) of the \texttt{5pt} solver and not very high outlier contamination of the data (for which sampling one point less would have potentially had a more visible effect). It also indicates that the proposed modifications, \ie,  the filtering \texttt{+F} the and refinement \texttt{+R}, for the three-view scenario are important and are making the proposed 4p3v approximate solvers practical and more precise than the \sft sovler. 

\begin{figure*}[t!]
    \centering
    \begin{tikzpicture}
    \node[anchor=south west, inner sep=0] (image) at (0,0) {\includegraphics[width=0.24\textwidth]{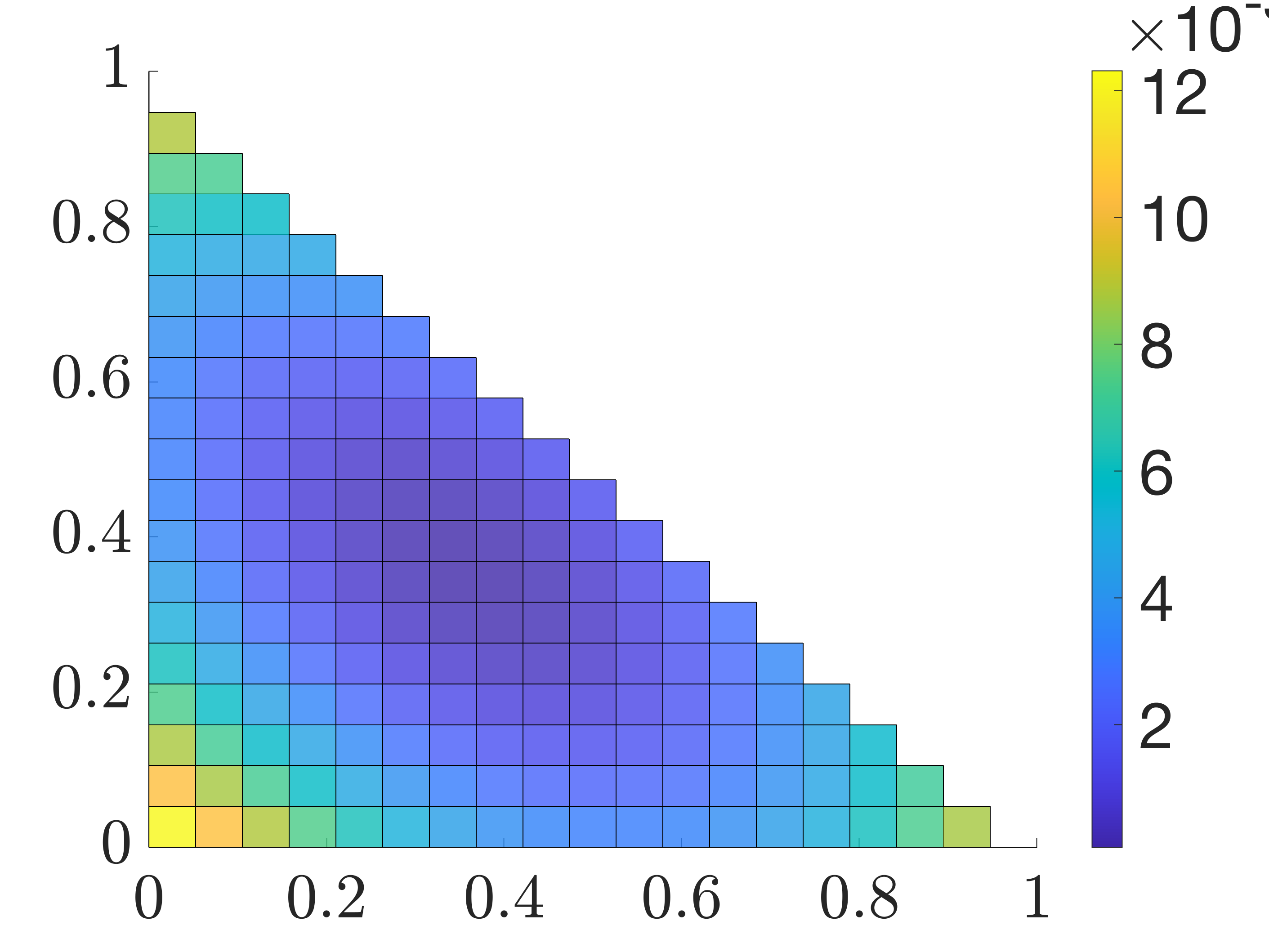}};
        \begin{scope}[shift={(image.south west)}, x={(image.south east)}, y={(image.north west)}]
            \fill[white] (0.85, 0.93) rectangle (1, 1); % Adjust coordinates as needed
            \node[anchor=center] at (0.93, .97) {\tiny $\times 10^{\scalebox{0.5}[1.0]{\( - \)}3}$};
        \end{scope}
    \end{tikzpicture}   
    \includegraphics[width=0.24\textwidth]{figures/bary_4p/st_peters_bary_rotation.png}
    \includegraphics[width=0.24\textwidth]{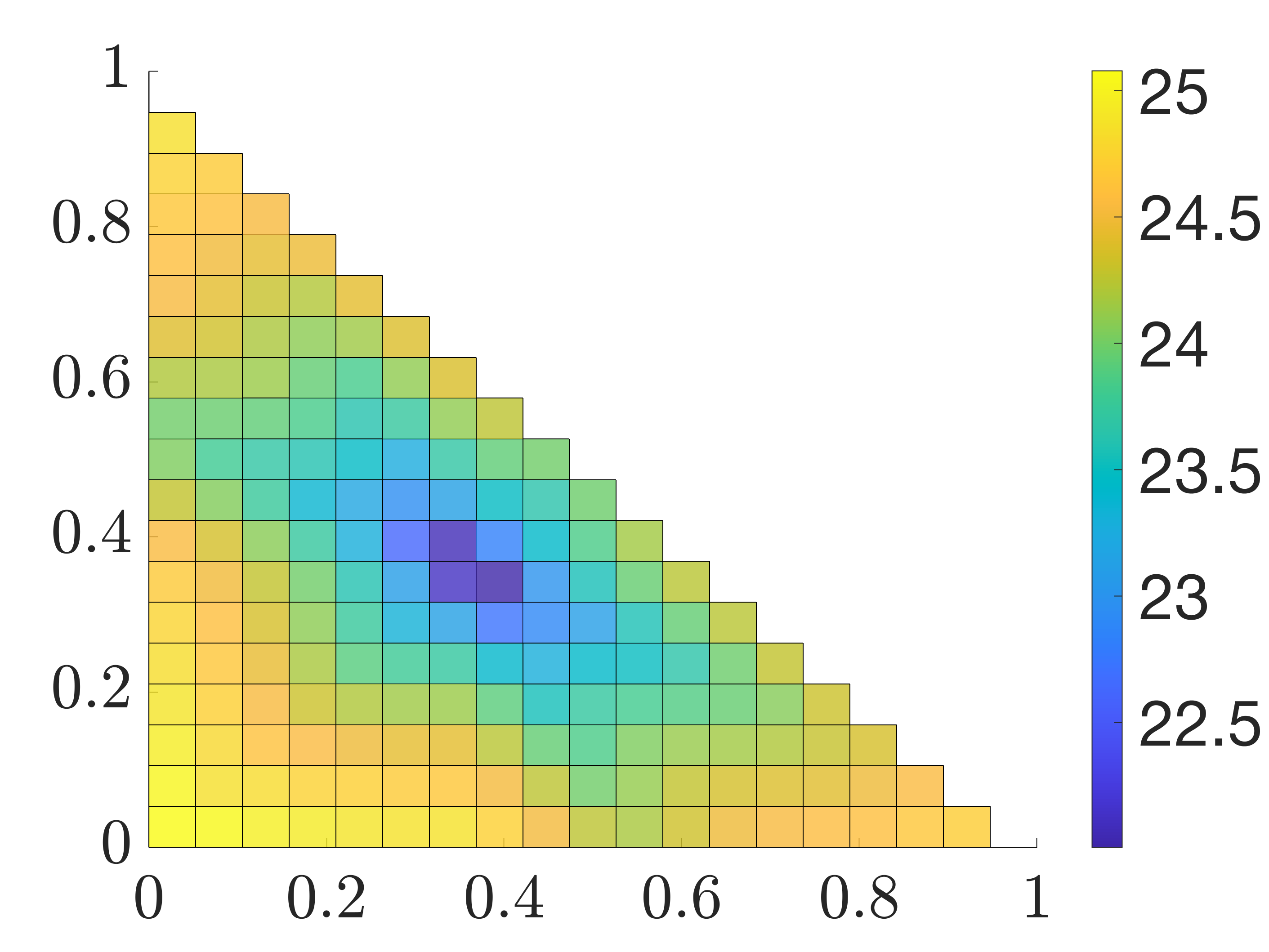}
    \includegraphics[width=0.24\textwidth]{figures/bary_4p/st_peters_bary_inlier.png}

    \hfill
    
    \begin{tikzpicture}
    \node[anchor=south west, inner sep=0] (image) at (0,0) {\includegraphics[width=0.24\textwidth]{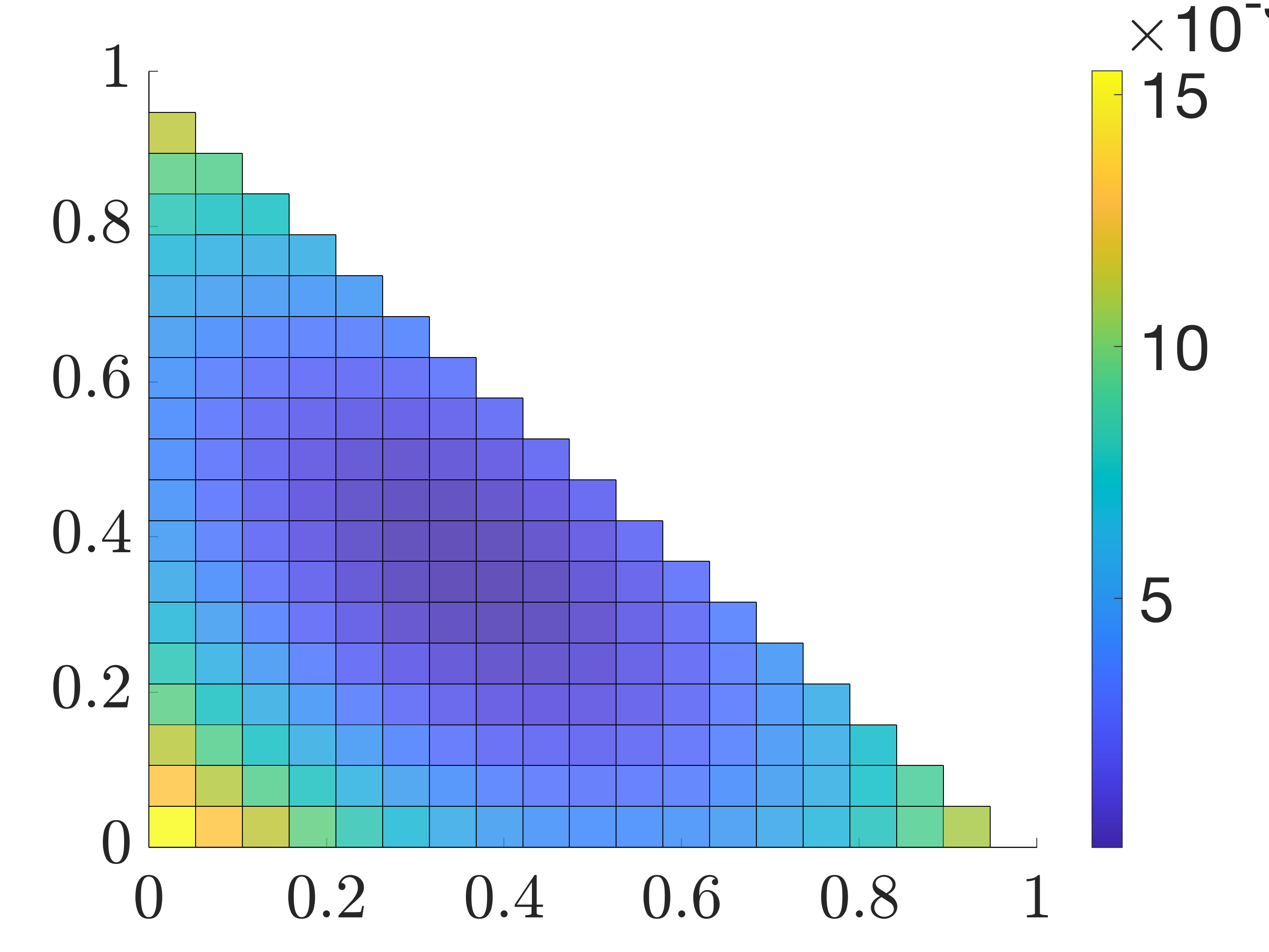}};
        % Draw the white rectangle (adjust the coordinates)
        \begin{scope}[shift={(image.south west)}, x={(image.south east)}, y={(image.north west)}]
            \fill[white] (0.85, 0.93) rectangle (1, 1); % Adjust coordinates as needed
            \node[anchor=center] at (0.93, .97) {\tiny $\times 10^{\scalebox{0.5}[1.0]{\( - \)}3}$};
        \end{scope}
    \end{tikzpicture}   
    \includegraphics[width=0.24\textwidth]{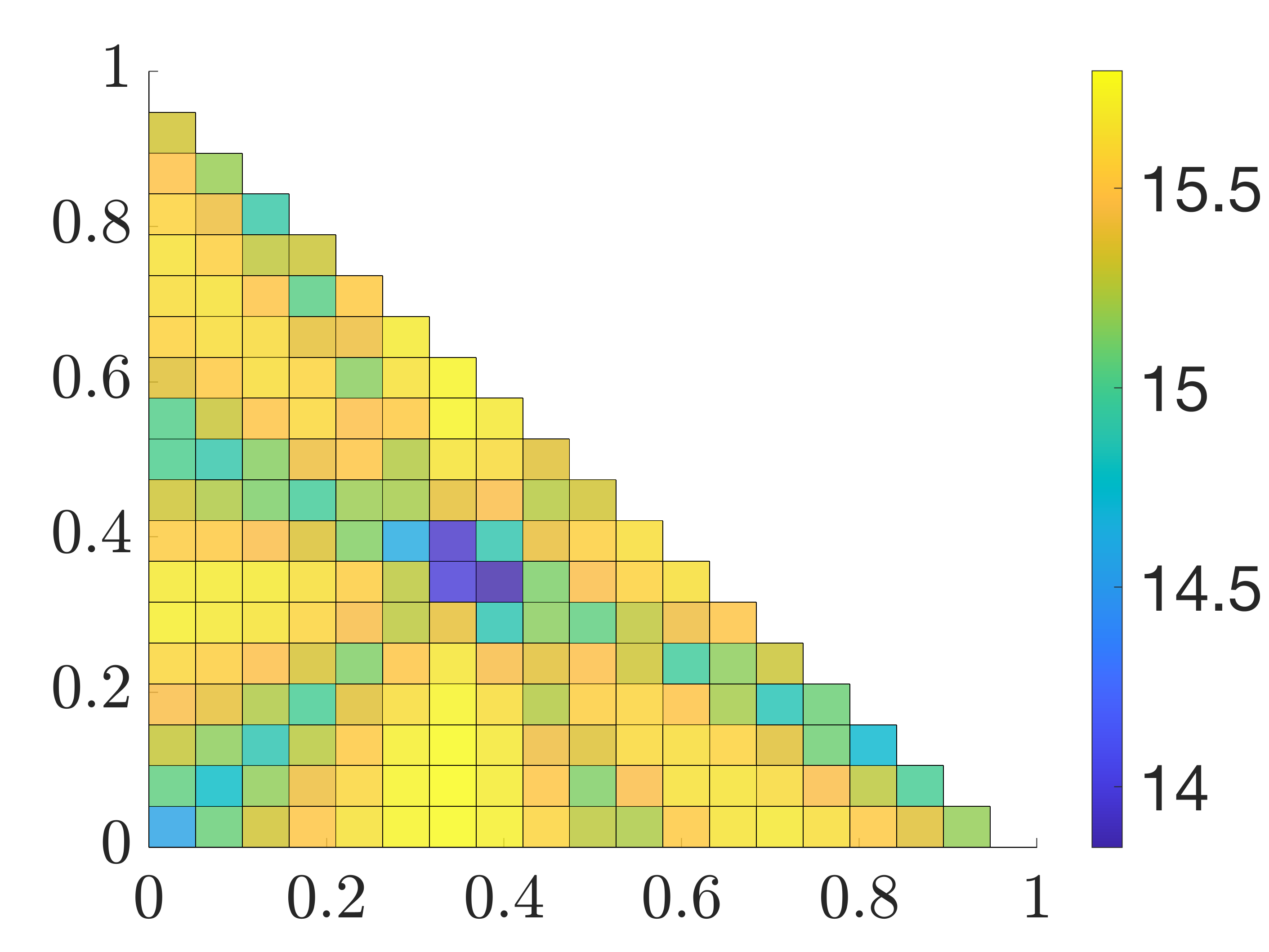}
    \includegraphics[width=0.24\textwidth]{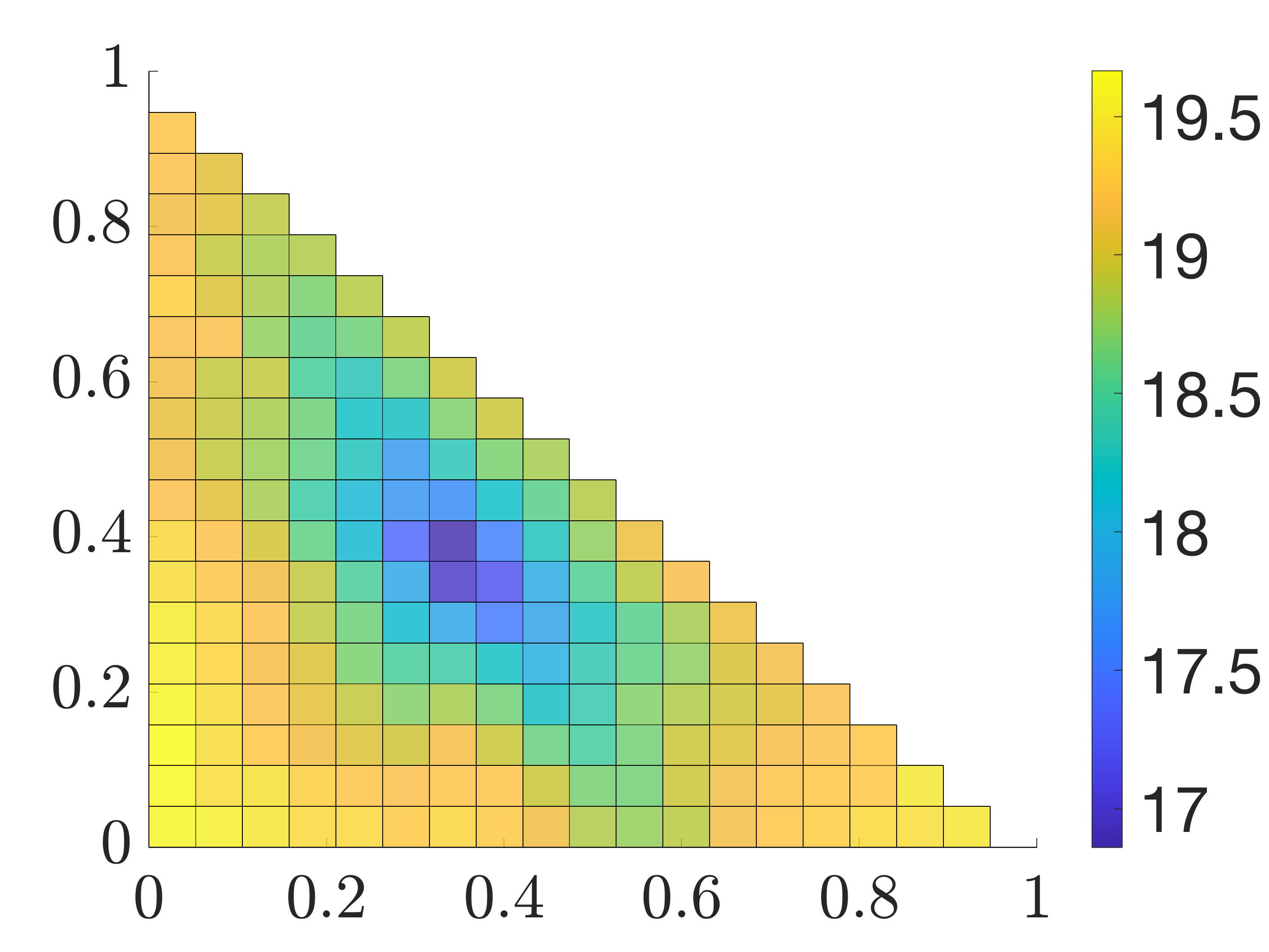}
    \includegraphics[width=0.24\textwidth]{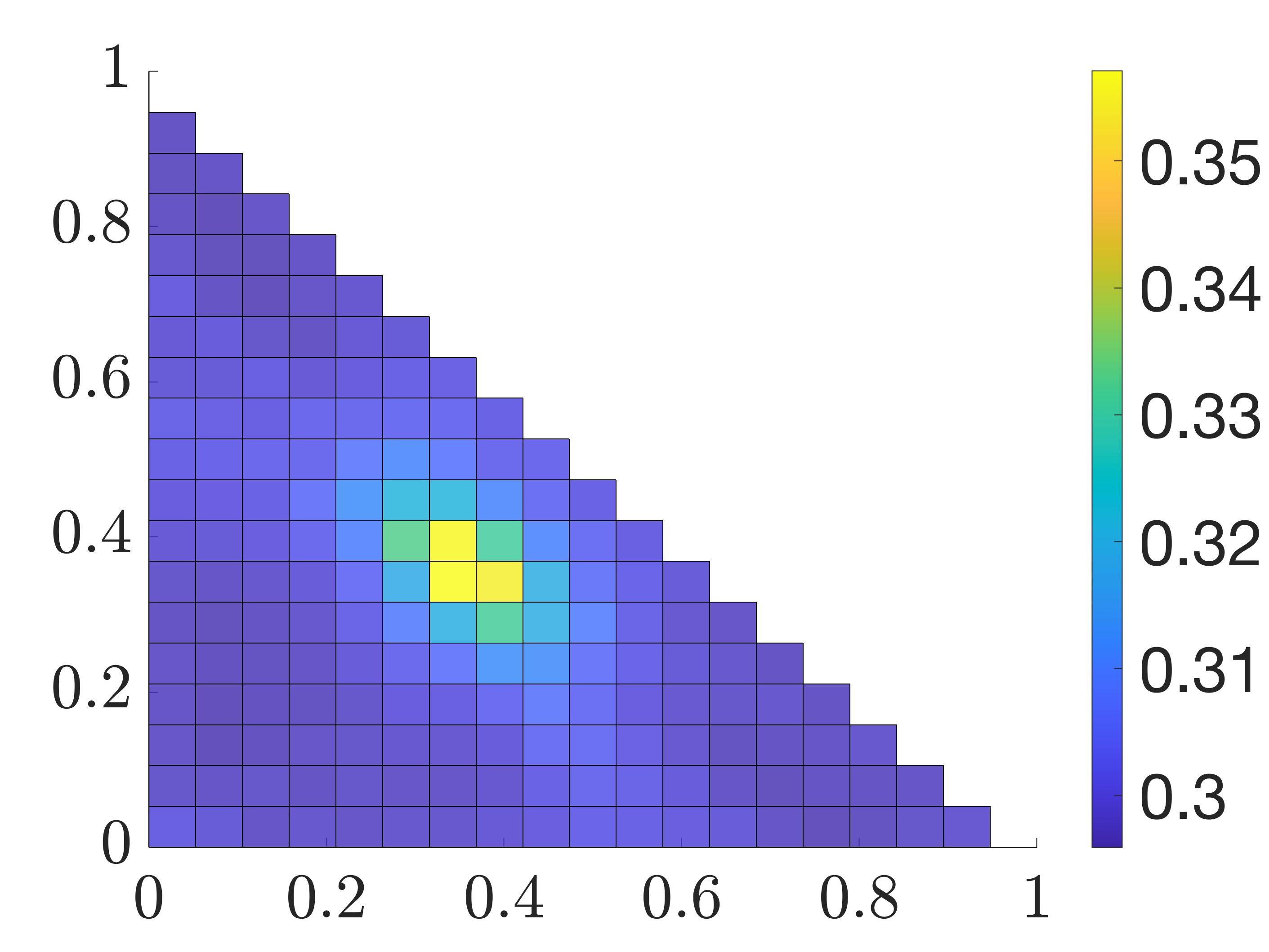}

    \hfill

    \begin{tikzpicture}
    \node[anchor=south west, inner sep=0] (image) at (0,0) {\includegraphics[width=0.24\textwidth]{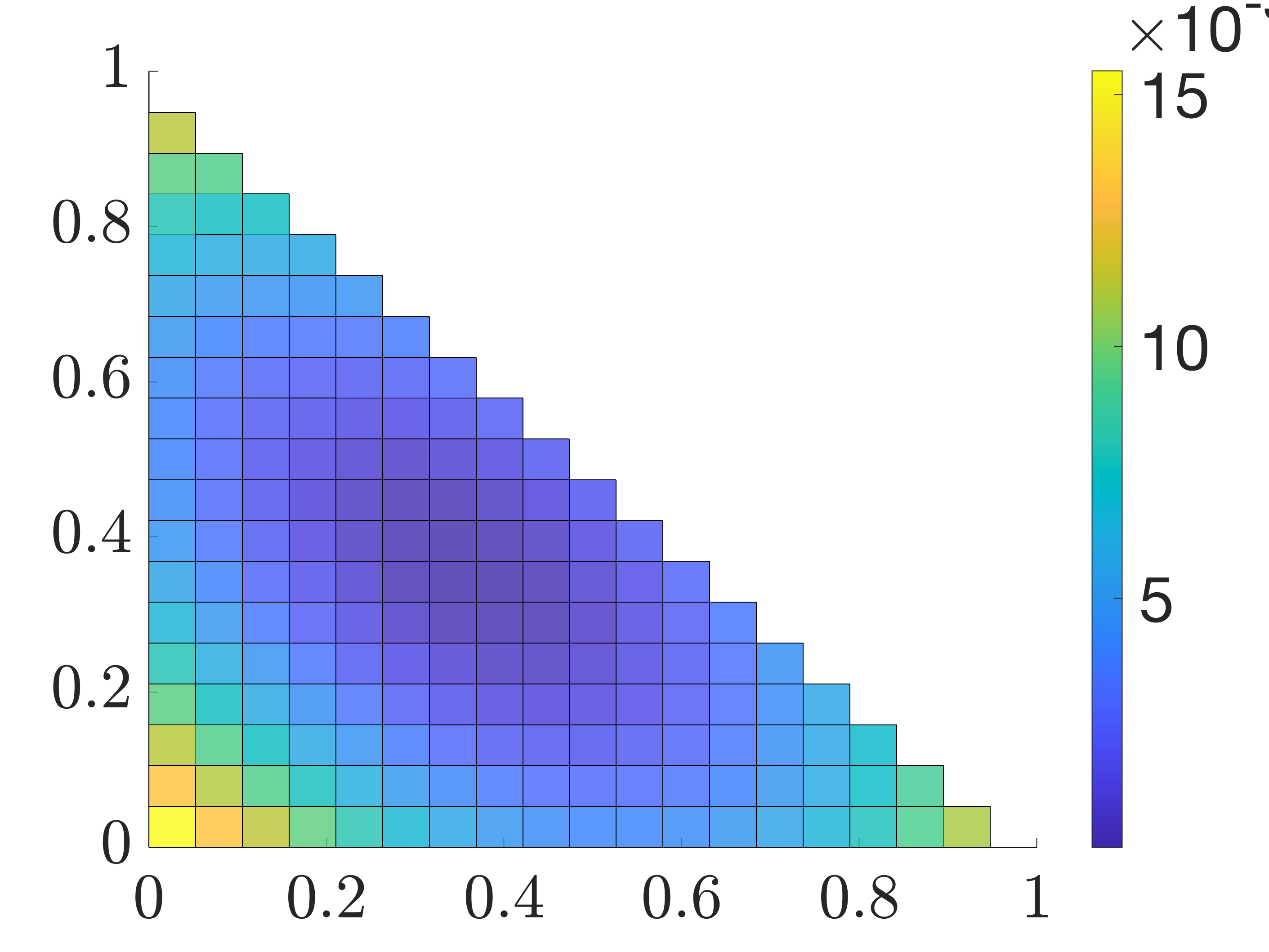}};
        % Draw the white rectangle (adjust the coordinates)
        \begin{scope}[shift={(image.south west)}, x={(image.south east)}, y={(image.north west)}]
            \fill[white] (0.85, 0.93) rectangle (1, 1); % Adjust coordinates as needed
            \node[anchor=center] at (0.93, .97) {\tiny $\times 10^{\scalebox{0.5}[1.0]{\( - \)}3}$};
        \end{scope}
    \end{tikzpicture}  
    \includegraphics[width=0.24\textwidth]{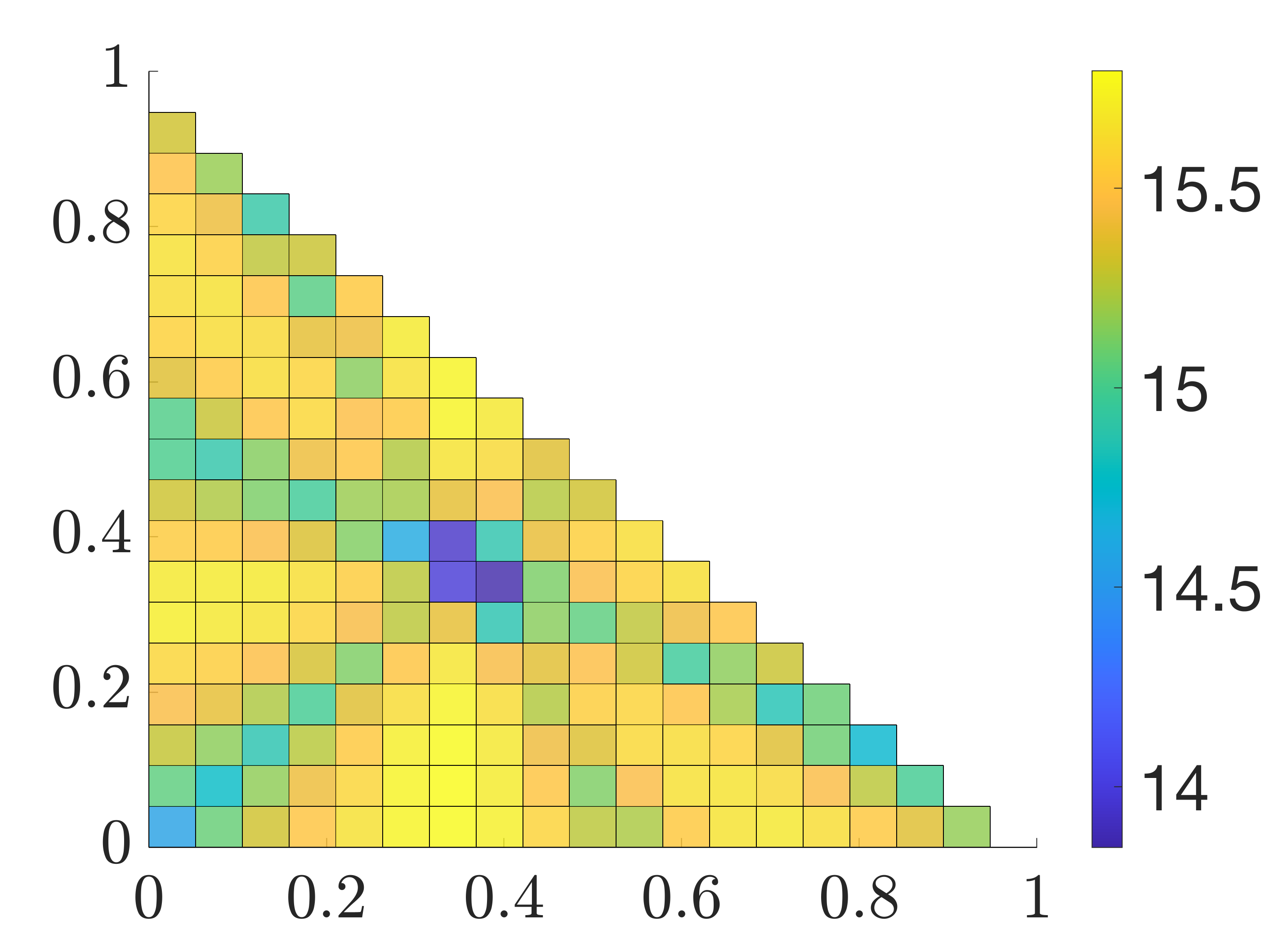}
    \includegraphics[width=0.24\textwidth]{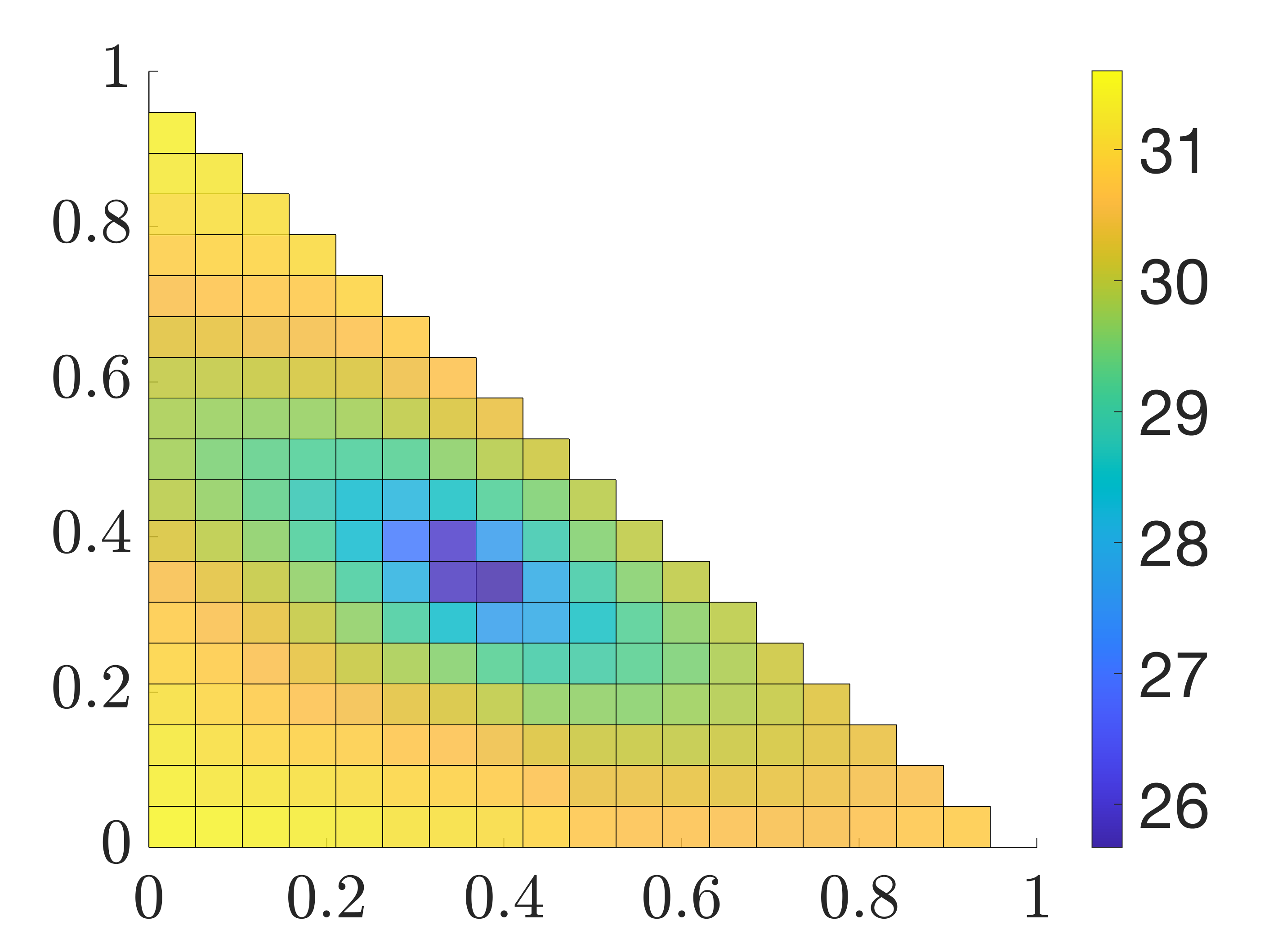}
    \includegraphics[width=0.24\textwidth]{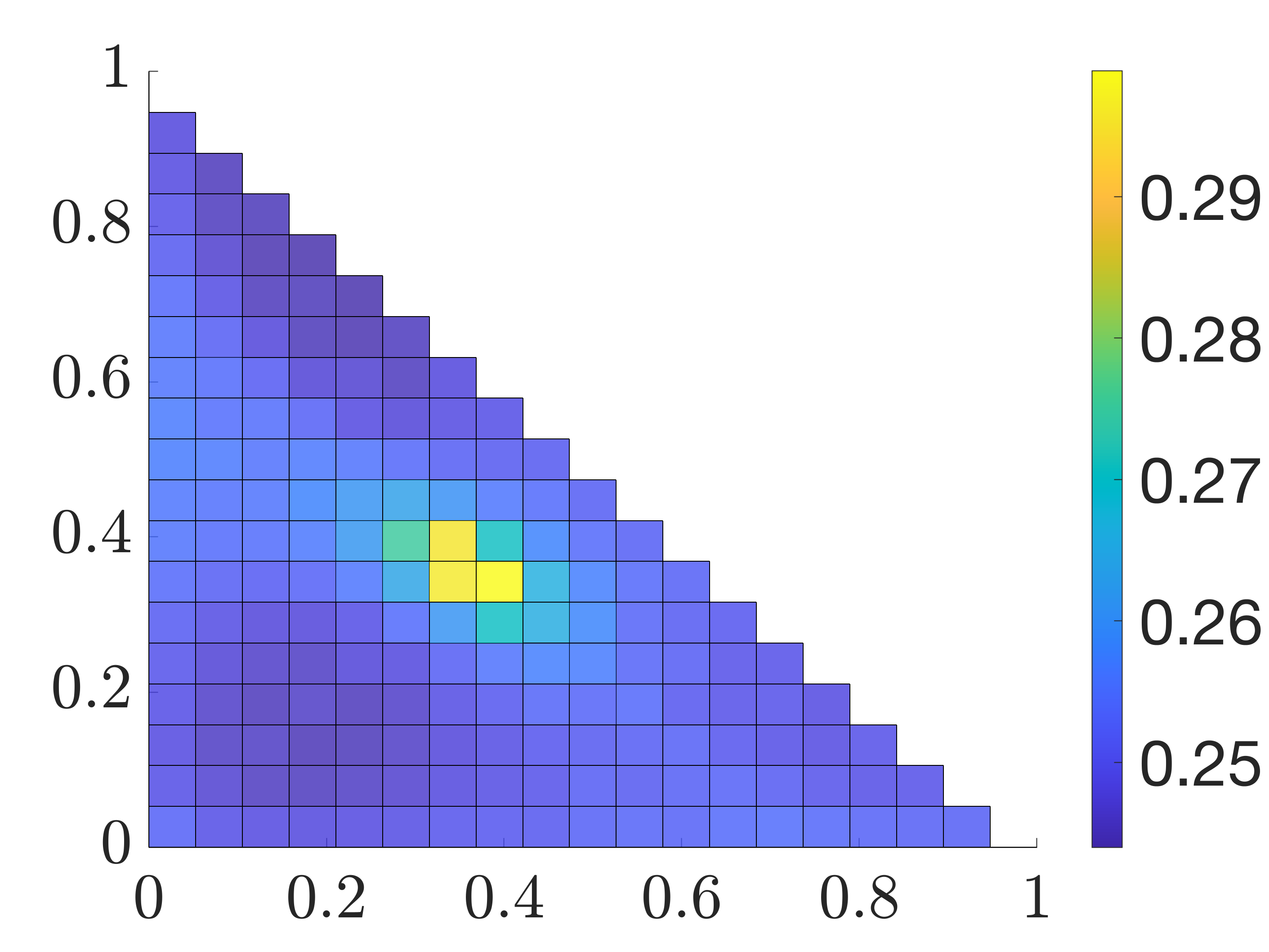}

\caption{Left to right: Distribution of the average symmetric epipolar error (top: 0.3337, 0.3327) (middle: 0.3319,    0.3308) (bottom: 0.3355, 0.3290); rotation error (top: 0.3373, 0.3349) (middle: 0.3373,    0.3347) (bottom: 0.3261, 0.3496); translation error (top: 0.3336, 0.3417) (middle: 0.3325,    0.3382) (bottom: 0.3213, 0.3515); and percentage of inliers gathered (top: 0.3266, 0.3434) (middle: 0.3377,    0.3354) (bottom: 0.3198, 0.3552), as a function of the barycentric coordinates of the triangle in the second image \wrt the mean point of the corresponding triangle in the first image %, for (a) Shop Facade from the Cambridge Landmark dataset and (b,c,d) 
on 485k four-tuples of correspondences from scenes (top) \textit{St.~Peter's Square}, (middle) \textit{Sacre Coeur}, and (bottom) \textit{Temple Nara Japan} from the PhotoTourism dataset~\cite{IMC2020}. For each metric, we fit a 2D Gaussian distribution and report the mean of the distribution in brackets.}
\label{fig:mean_stats_pt}
\end{figure*}

\subsection{ Accuracy of the mean point correspondence} 
\label{sec:bary}

Fig.~3 in the main paper showed results obtained by establishing correspondences between the mean of the triangle in one image and various points in the triangle in the second image. 
We expressed points in the second triangle via their barycentric coordinates and uniformly sample $19 \times 19$ barycentric coordinates $(a,b)\in [0,1]^2$, such that $a+b \leq 1$ (ensuring 
points inside the triangle). 
The %third barycentric 
3rd coordinate is given as $c = 1-a-b$. 
For each correspondence, we measured the symmetric epipolar error \wrt the ground truth %relative 
pose, %the 
translation and rotation errors, and the percentage of inliers.
%consistent with the pose obtained with the \texttt{5pt} solver applied on the approximate and the four real correspondences. 
Fig.~3 in the main paper showed the rotation error and percentage of inliers, as observed %gathered results 
for the \textit{St.~Peter's Square} scene from the PhotoTourism dataset~\cite{IMC2020}. 
Here,  Fig.~\ref{fig:mean_stats_pt} shows the same statistics, including translation and symmetric epipolar errors, for the \textit{St.~Peter's Square} scene already used in the main paper (Fig.~\ref{fig:mean_stats_pt} (top row)), and two more scenes from the PhotoTourism dataset: %\textit{St.~Peter's Square} (top row), 
\textit{Sacre Coeur} (Fig.~\ref{fig:mean_stats_pt} (middle row)), and \textit{Temple Nara Japan} (Fig.~\ref{fig:mean_stats_pt} (bottom row)).

As with Fig.~3 in the main paper, 
to suppress the effect of discrete sampling, 
%of points, 
for each metric, we fit a 2D Gaussian distribution and report the mean value (in barycentric coordinates) as numbers in brackets in the caption of the figure. 
As can be seen, the same conclusion can be drawn from Fig.~\ref{fig:mean_stats_pt} as from Fig.~3 in the main paper: 
The optima of the studied metrics are reached very close to the mean point of the triangles, which has barycentric coordinates $(0.\bar{3}, 0.\bar{3})$. 
This validates our approach of using the mean point correspondence as an approximate correspondence in our \sftm-based solvers. 

\begin{table}[!t]
    \centering
    \resizebox{\linewidth}{!}{
    \begin{tabular}{ l | l | c c | c c c}
\toprule
Estimator & \multicolumn{1}{|c|}{$\delta$} & AVG $(^\circ)$ $\downarrow$ & MED $(^\circ)$ $\downarrow$ & AUC@5 $\uparrow$ & @10 $\uparrow$ & @20 $\uparrow$ \\
\midrule

\multirow{10}{*}{\sftmd}
 & 0.2 & 4.03 & 2.09 & 57.81 & 73.48 & 84.67 \\
 & 0.1 & 3.99 & 2.03 & 58.29 & 73.71 & 84.78 \\
 & 0.09 & 3.99 & 2.02 & 58.52 & 73.95 & 84.93 \\
 & 0.08 & \textbf{3.95} & 2.04 & \underline{58.66} & \underline{74.11} & \textbf{85.04} \\
 & 0.07 & 4.02 & 2.03 & 58.63 & 74.01 & \underline{84.98} \\
 & 0.06 & 4.01 & \underline{2.01} & 58.62 & 73.92 & 84.90 \\
 & 0.05 & \underline{3.98} & \textbf{1.98} & \textbf{58.94} & \textbf{74.19} & \textbf{85.04} \\
 & 0.01 & 4.01 & 2.07 & 57.90 & 73.60 & 84.75 \\
 & 0.005 & 4.12 & 2.07 & 57.54 & 73.23 & 84.55 \\
 & 0.001 & 4.28 & 2.14 & 56.65 & 72.51 & 84.08 \\
\midrule
\multirow{10}{*}{\sftmdR}
 & 0.2 & 3.75 & \underline{1.87} & 61.36 & \underline{75.83} & \underline{86.07} \\
 & 0.1 & 3.75 & \underline{1.87} & 61.33 & 75.78 & 86.03 \\
 & 0.09 & 3.74 & \underline{1.87} & 61.39 & 75.81 & 86.02 \\
 & 0.08 & \textbf{3.71} & \underline{1.87} & \textbf{61.45} & \textbf{75.90} & \textbf{86.16} \\
 & 0.07 & 3.76 & \underline{1.87} & \underline{61.41} & 75.82 & 86.06 \\
 & 0.06 & 3.79 & \underline{1.87} & 61.40 & 75.81 & 86.04 \\
 & 0.05 & 3.75 & \textbf{1.86} & 61.38 & 75.79 & 86.03 \\
 & 0.01 & 3.75 & \underline{1.87} & 61.25 & 75.70 & 85.95 \\
 & 0.005 & \underline{3.73} & \underline{1.87} & 61.22 & 75.75 & 86.03 \\
 & 0.001 & 3.86 & 1.90 & 60.82 & 75.46 & 85.80 \\
\midrule
\multirow{10}{*}{\sftmdRC}
 & 0.2 & 3.78 & \underline{1.88} & 61.12 & 75.70 & 85.96 \\
 & 0.1 & 3.77 & \textbf{1.87} & 61.16 & 75.68 & 85.95 \\
 & 0.09 & 3.76 & \textbf{1.87} & 61.24 & 75.70 & 85.92 \\
 & 0.08 & \textbf{3.73} & \textbf{1.87} & \textbf{61.30} & \textbf{75.78} & \textbf{86.07} \\
 & 0.07 & 3.77 & \textbf{1.87} & \textbf{61.30} & 75.71 & \underline{85.99} \\
 & 0.06 & 3.80 & \textbf{1.87} & 61.27 & \underline{75.75} & \underline{85.99} \\
 & 0.05 & \underline{3.75} & \textbf{1.87} & \underline{61.28} & 75.73 & 85.98 \\
 & 0.01 & 3.78 & \underline{1.88} & 61.11 & 75.62 & 85.89 \\
 & 0.005 & 3.76 & 1.89 & 61.09 & 75.65 & 85.95 \\
 & 0.001 & 3.90 & 1.91 & 60.53 & 75.26 & 85.67 \\
\midrule
    \end{tabular}}
    \caption{Evaluation of the effects of the scale of the $\delta$ shift on the  \textit{St.~Peter's Square} scene from PhotoTourism~\cite{IMC2020}.}
    \label{tab:delta_ablation}
\end{table}

\section{Ablation studies}
%Making approximate solvers practical
%\subsubsection{Ablation studies}
\label{sec:exp:ablations}

This section contains ablation studies to validate our choices in modifications discussed in Sec. 3.2 of the main paper.

\PAR{Validation of $\delta$.} We tested our $\delta$-based solvers for different values of $\delta$ and measured their performance. In general, there is no common value of the $\delta$ shift that leads to the best results on all datasets. This is expected since the precision of the mean-point correspondence depends on many different factors, \eg, the viewing angles of the cameras, the type of the motion, the depth and spatial distributions of the 3D points, \etc 
We set the value for $\delta$ %and the total number of refinement iterations 
by evaluating their effects on the \textit{St.~Peter's Square} scene from the PhotoTourism dataset~\cite{IMC2020},  which we used for validation only and did not include it in the other results for PhotoTourism~\cite{IMC2020} in the paper.  
Tab.~\ref{tab:delta_ablation} shows how the different settings of the scale of the $\delta$ shift affect the accuracy of the $\delta$-based solvers. 
%Based on these experiments we use $\delta = 0.08$ as it provides the best results for \sftmd solver and is also close to the optimal value for its variants.
Based on these experiments, we use $\delta = 0.08$ as it typically provides the best or the second best %to the best 
results for all variants of the \sftmd-based solvers.
However, note that \sftmd-based solvers achieve a very similar accuracy even with different settings of $\delta$.
Thus, we can conclude that the choice of $\delta$ is not critical.
%, \ie, different values perform similarly.
In some scenarios, the choice of the optimal $\delta$ parameter may be  more scene-dependent and could potentially be set using learning-based approaches.

\begin{figure}
    \centering
    \includegraphics[width=0.8\linewidth]{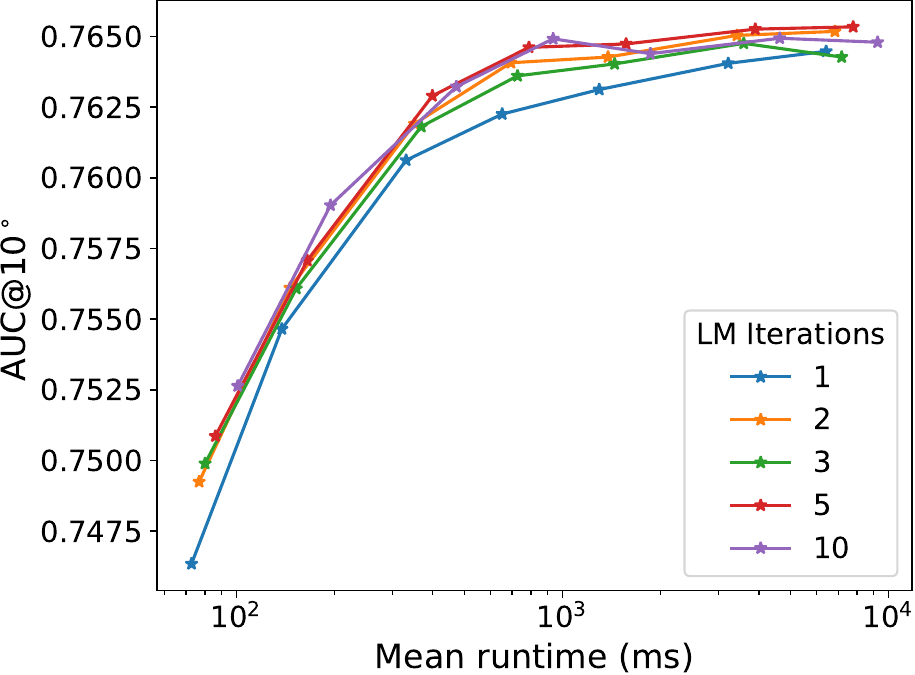}
    \caption{Evaluation of the effects of the number of inner refinement (\texttt{+R}) iterations within the \sftmdRC solver on the \textit{St.~Peter's Square} scene from the PhotoTourism~\cite{IMC2020} dataset. Shown is the speed-accuracy evaluation, where the curves are obtained by varying the number of Poselib RANSAC iterations.}
    \label{fig:r_validation}
\end{figure}

\begin{figure}[!h]
    \centering
    \resizebox{0.95\linewidth}{!}{
    \begin{tikzpicture} 
        \begin{axis}[%
        hide axis, xmin=0,xmax=0,ymin=0,ymax=0,
        legend style={draw=white!15!white, 
        line width = 1pt,
        legend  columns =3, % comment for column display
        /tikz/every even column/.append style={column sep=0.2cm}
        }
        ]
        
        \addlegendimage{Seaborn4}        \addlegendentry{\sftm};
        \addlegendimage{Seaborn7}
        \addlegendentry{\sftmd};
        \addlegendimage{Seaborn9}
        \addlegendentry{\sftmdR};
        % \addlegendimage{Seaborn4}
        % \addlegendentry{\sfah}; 
        \addlegendimage{Seaborn5}
        \addlegendentry{\sftmdRC}; 
        \addlegendimage{Seaborn6}
        \addlegendentry{\sftmdRCENM}; 
        \addlegendimage{Seaborn3}
        \addlegendentry{\sfafRCENM}; 

        \addlegendimage{black!30,dash pattern=on 1pt off 0.5pt on 1pt off 0.5pt}
        \addlegendentry{$t = 3\text{px}$};
        \addlegendimage{black!30}
        \addlegendentry{$t = 5\text{px}$};
        \addlegendimage{black!30,dash pattern=on 2pt off 1pt on 2pt off 1pt}
        \addlegendentry{$t = 10\text{px}$};      
        \end{axis}
    \end{tikzpicture}}

    % \begin{subfigure}{0.45\linewidth}
    % \centering
    \includegraphics[width=0.8\linewidth]{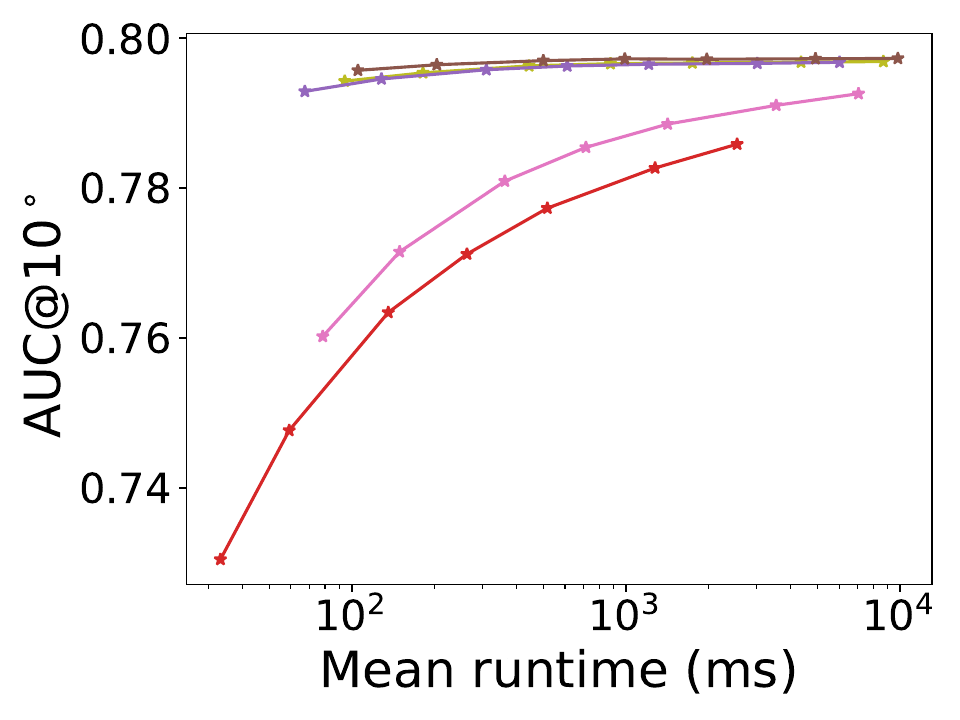}
    % \end{subfigure}
    % \hfill
    % \begin{subfigure}{0.45\linewidth}
    % \centering
    \includegraphics[width=0.8\linewidth]{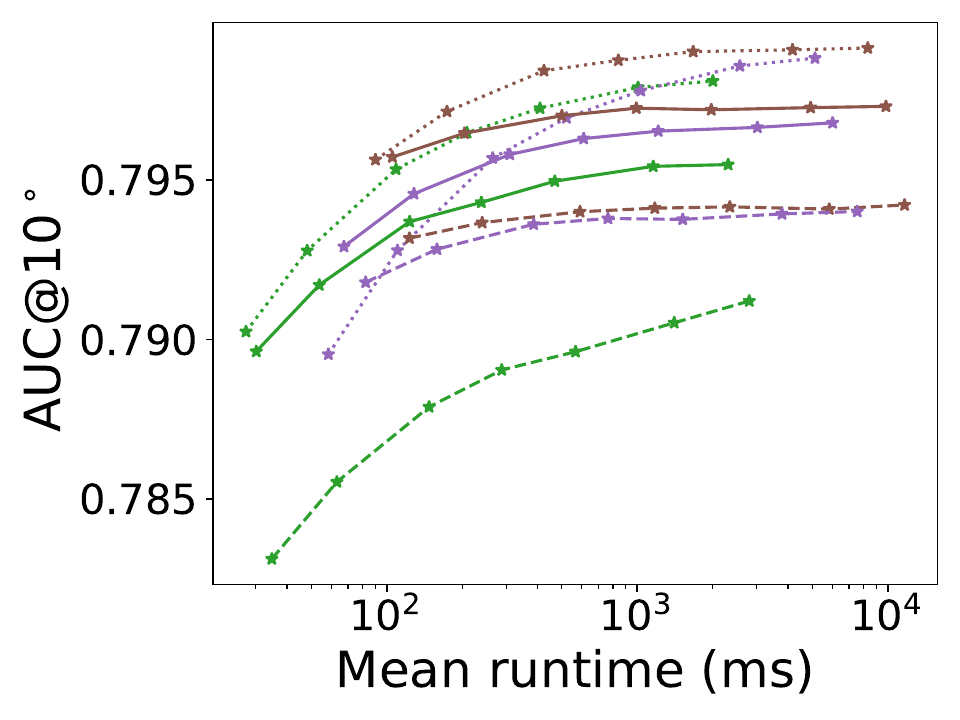}
    % \end{subfigure}
    \caption{%\newtext
    {Speed-accuracy trade-off on 12 scenes of Phototourism~\cite{IMC2020}. We show the impact of %Ablation study of 
    (\textbf{Top:}) 
    %We show how the different modifications presented in Sec.~\ref{sec:making_solvers_practical} affect the performance of the solvers. 
     different modifications presented in Sec.~3.2 in the main paper on the performance of the solvers
    and (\textbf{Bottom:}) the maximum epipolar threshold used in RANSAC  on the performance of the three best-performing methods.}
    %We show how performance changes when we change the maximum epipolar threshold used in RANSAC for the three best-performing methods.
    }
    \label{fig:poselib_ablation_threshold}
\end{figure}

\PAR{Refinement validation.} We also perform validation of the total number of LM steps in the refinement (\texttt{+R}). 
Again, we used the \textit{St.~Peter's Square} scene from the PhotoTourism dataset~\cite{IMC2020} for validation. 
The results of this experiment are shown in Fig.~\ref{fig:r_validation}. We chose the value of 2 for other experiments as it provides good speed-accuracy trade-off across a range of RANSAC iterations. However, we note that other settings provide very similar performance. 

\begin{figure*}
    \centering
    \resizebox{\linewidth}{!}{
    \begin{tikzpicture} 
        \begin{axis}[%
        hide axis, xmin=0,xmax=0,ymin=0,ymax=0,
        legend style={draw=white!15!white, 
        line width = 1pt,
        legend  columns =5, % comment for column display
        /tikz/every even column/.append style={column sep=0.2cm},
        }
        ]
        
        \addlegendimage{Seaborn4}        \addlegendentry{\sftm};
        \addlegendimage{Seaborn7}
        \addlegendentry{\sftmd};
        \addlegendimage{Seaborn9}
        \addlegendentry{\sftmdR};
        \addlegendimage{Seaborn5}
        \addlegendentry{\sftmdRC}; 
        \addlegendimage{Seaborn6}
        \addlegendentry{\sftmdRCENM}; 
        %\addlegendimage{Seaborn3}
        %\addlegendentry{\sfafRCENM};       
        \end{axis}
    \end{tikzpicture}}

    \centering

    \begin{subfigure}{0.48\linewidth}
    \includegraphics[width=\linewidth]{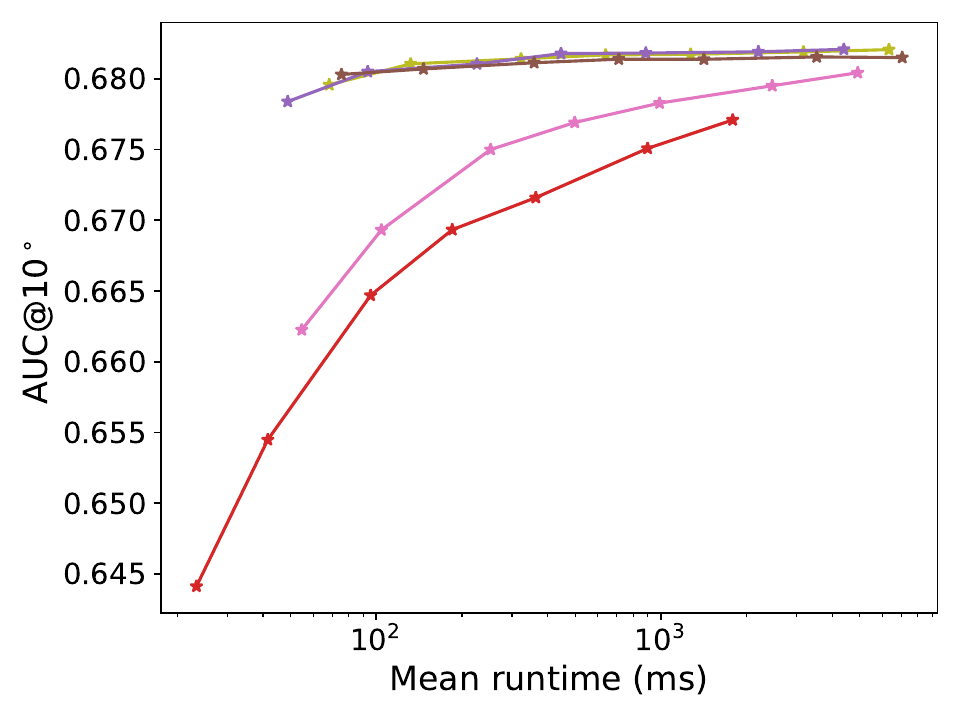}       \caption{Cambridge~\cite{kendall2015cambridge}}
    \end{subfigure}
    \begin{subfigure}{0.48\linewidth}   
    \includegraphics[width=\linewidth]{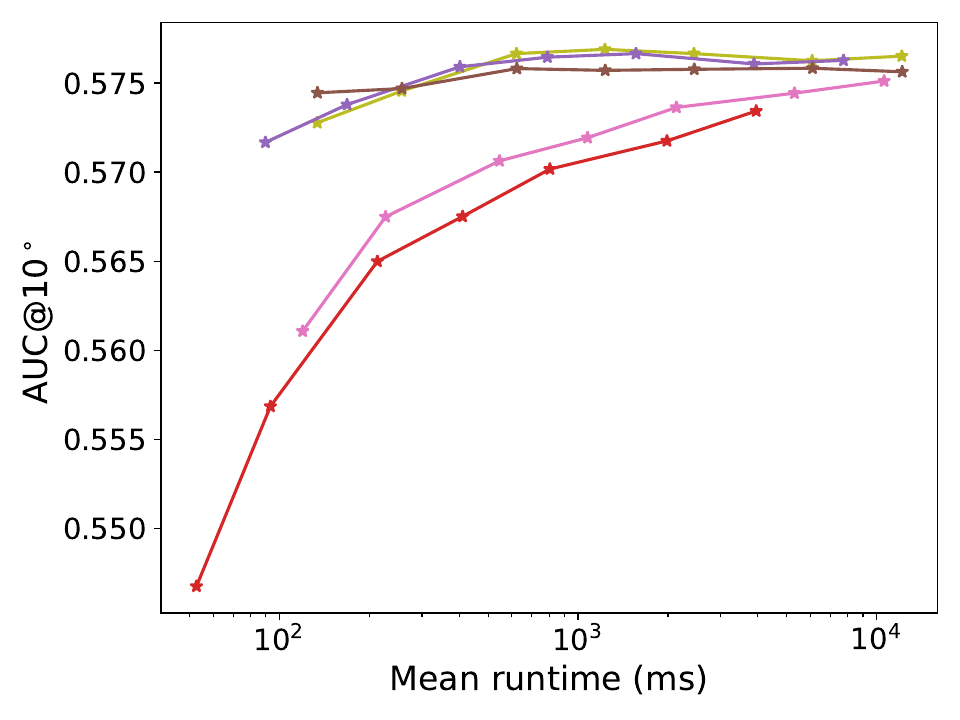}  
    \caption{Aachen~\cite{zhang2021aachen}}
\end{subfigure}
    
    \caption{We show the impact of the different strategies (\texttt{+F/+R/+ENM}) introduced in Sec.~3.2 of the main paper on the performance of our \sftm-based solvers on Cambridge Landmarks~\cite{kendall2015cambridge} and Aachen Day-Night v1.1~\cite{zhang2021aachen}. We report the AUC@10$^\circ$. We vary the number of Poselib RANSAC iterations ($\{100, 200, 500, 1000, 2000, 5000, 10000\}$). We use an epipolar threshold of 5px in RANSAC. Runtimes are averaged over all image triplets. }
    \label{fig:ablation_crenm_cambridge_aachen}
\end{figure*}

\PAR{Validation of \texttt{+F}/\texttt{+R}/\texttt{+ENM}.} 
Fig.~\ref{fig:poselib_ablation_threshold} ablates the impact of individual modifications (\texttt{+R}/\texttt{+C}/\texttt{+ENM}) proposed in Sec.~3.2 in the main paper on the speed-accuracy trade-off.
It especially highlights the importance of the refinement using the 4th point in the third view (\texttt{+R}).
Fig.~\ref{fig:poselib_ablation_threshold} also shows the performance of the top-performing solvers when different maximum epipolar thresholds are used within RANSAC. Compared to \sfafRCENM, the proposed \sftmd-based solvers are not as sensitive to the selection of the epipolar threshold.
The results presented in Fig.~\ref{fig:poselib_ablation_threshold} were obtained on the PhotoTourism dataset.
Fig.~\ref{fig:ablation_crenm_cambridge_aachen} shows results of the same ablation study, focused on the \sftm-based solvers, on the Cambridge Landmarks and Aachen Day-Night v1.1 datasets. 

\section{Experiments on real data}
\label{sec:exp}
In this section, we aim to further study the performance of the proposed methods, supplementing Sec.~4 (paragraph ``Experiments on real data") of main paper with more detailed evaluations. Section~\ref{sec:exp:details} presents results on individual scenes, extending the analysis in Fig.~4 of the main paper. Section~\ref{sec:threshold} investigates the impact of varying the RANSAC epipolar threshold on solver performance. These experiments extend Fig.~\ref{fig:poselib_ablation_threshold} (bottom) by comparing additional solvers across all three datasets. Section~\ref{sec:exp:timing} evaluates and compares the run-times of each of the proposed and state-of-the-art solvers. Section~\ref{sec:exp:outlier} explores the robustness of the solvers under varying inlier ratios using semi-synthetic data. 
In Section~\ref{sec:exp:measure}, we provide results using Poselib RANSAC~\cite{PoseLib} for an alternative pose error metric that considers errors across all three camera pairs. Section~\ref{sec:exp:gc_ransac} evaluates the solvers within the GC-RANSAC~\cite{barath2017graph} framework
%, offering a comprehensive comparison 
%across all three datasets.
for all three datasets.

\begin{figure*}[!t]
    \centering

\resizebox{1.0\linewidth}{!}{
\begin{tikzpicture} 

        \begin{axis}[%
        hide axis, xmin=0,xmax=0,ymin=0,ymax=0,
        legend style={draw=white!15!white, 
        line width = 1pt,
        legend  columns =9, % comment for column display
        /tikz/every even column/.append style={column sep=0.5cm},
        }
        ]
        
        \addlegendimage{Seaborn1}        \addlegendentry{\sfhc~\cite{Hruby_cvpr2022}};
        \addlegendimage{Seaborn2}
        \addlegendentry{\sft};
        \addlegendimage{Seaborn3}
        \addlegendentry{\sfafRC};
        % \addlegendimage{Seaborn4}
        % \addlegendentry{\sfah}; 
        \addlegendimage{Seaborn4}
        \addlegendentry{\sftmRC}; 
        \addlegendimage{Seaborn5}
        \addlegendentry{\sftmdRC};
        % \addlegendimage{white}
        % \addlegendentry{~};
        \addlegendimage{black!30}
        \addlegendentry{w/o \texttt{ENM}};
        \addlegendimage{black!30,dash pattern=on 2pt off 1pt on 2pt off 1pt}
        \addlegendentry{w/ \texttt{ENM}};
        
        \end{axis}
    \end{tikzpicture}}

    \begin{subfigure}{0.24\linewidth}
    \includegraphics[width=\linewidth]{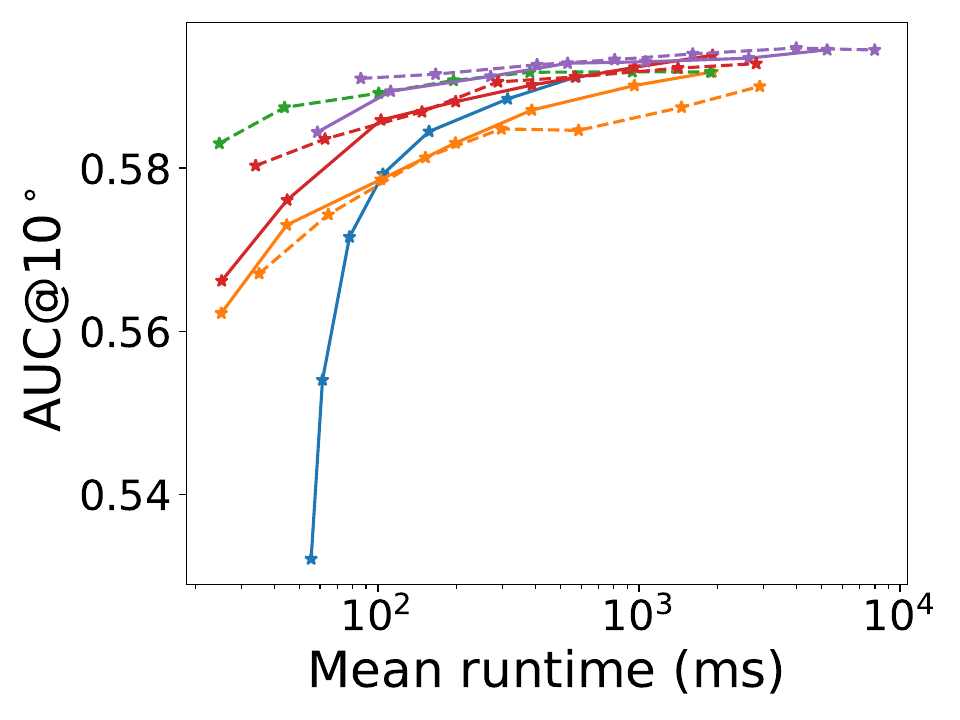}
    \caption{\textit{Great Court}}        
    \end{subfigure}
    \hfill
    \begin{subfigure}{0.24\linewidth}
    \includegraphics[width=\linewidth]{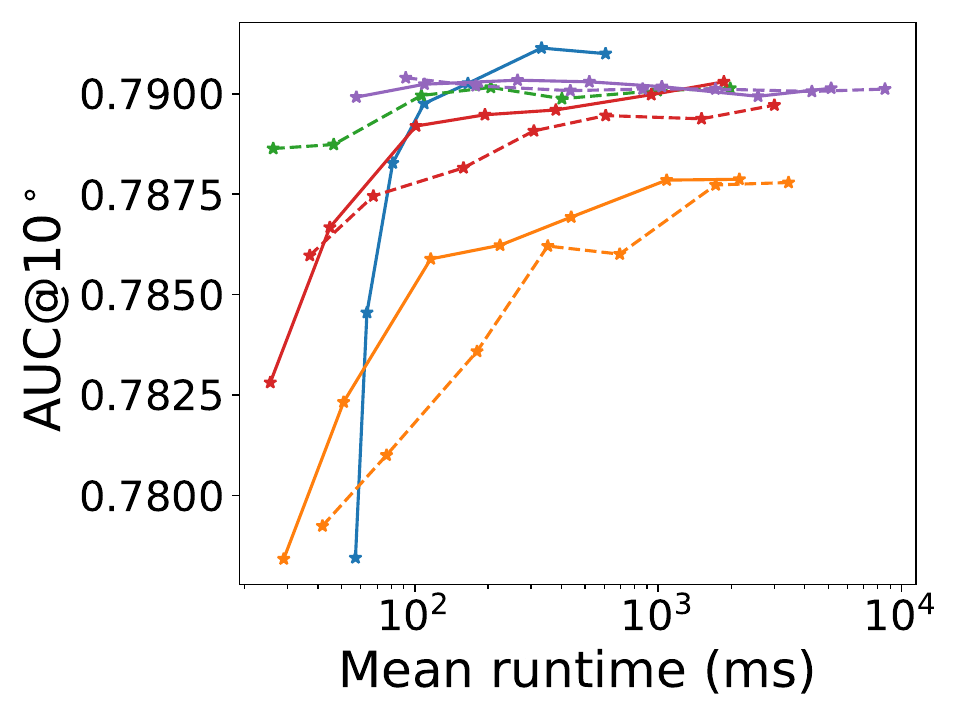}
    \caption{\textit{King's College}}        
    \end{subfigure}
    \hfill
    % \begin{subfigure}{0.24\linewidth}
    % \includegraphics[width=\linewidth]{figures/poselib_graphs/cambridge_StMarysChurch_graph-triplets-features_superpoint_noresize_2048-LG_pose.pdf}
    % \caption{\textit{St. Mary's Church}}        
    % \end{subfigure}
    \hfill
    \begin{subfigure}{0.24\linewidth}
    \includegraphics[width=\linewidth]{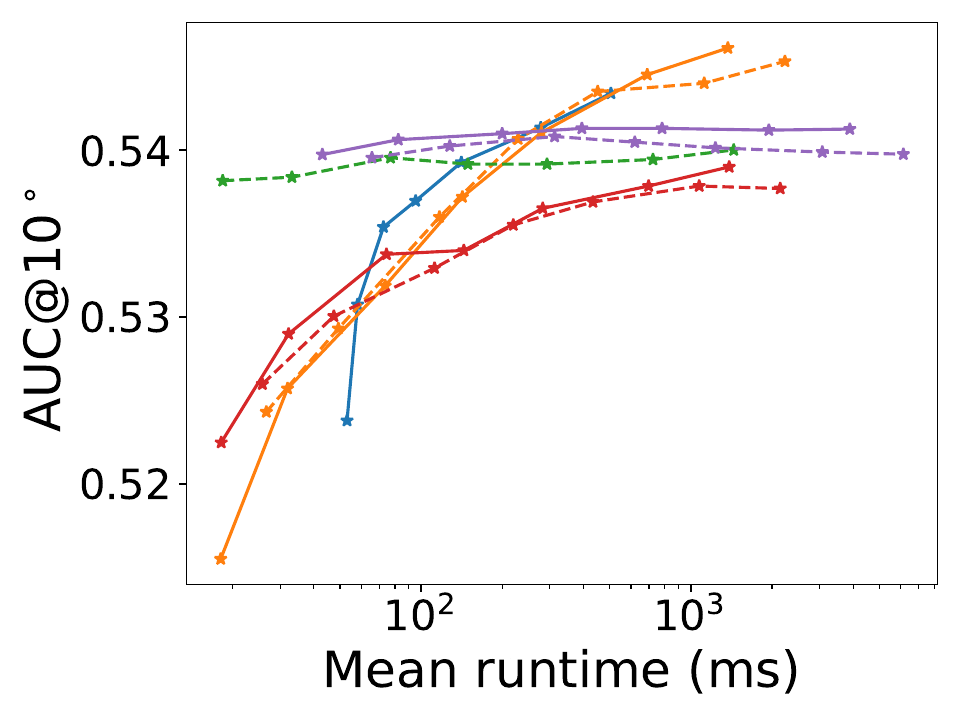}
    \caption{\textit{Old Hospital}}        
    \end{subfigure}
    \hfill        
    \begin{subfigure}{0.24\linewidth}
    \includegraphics[width=\linewidth]{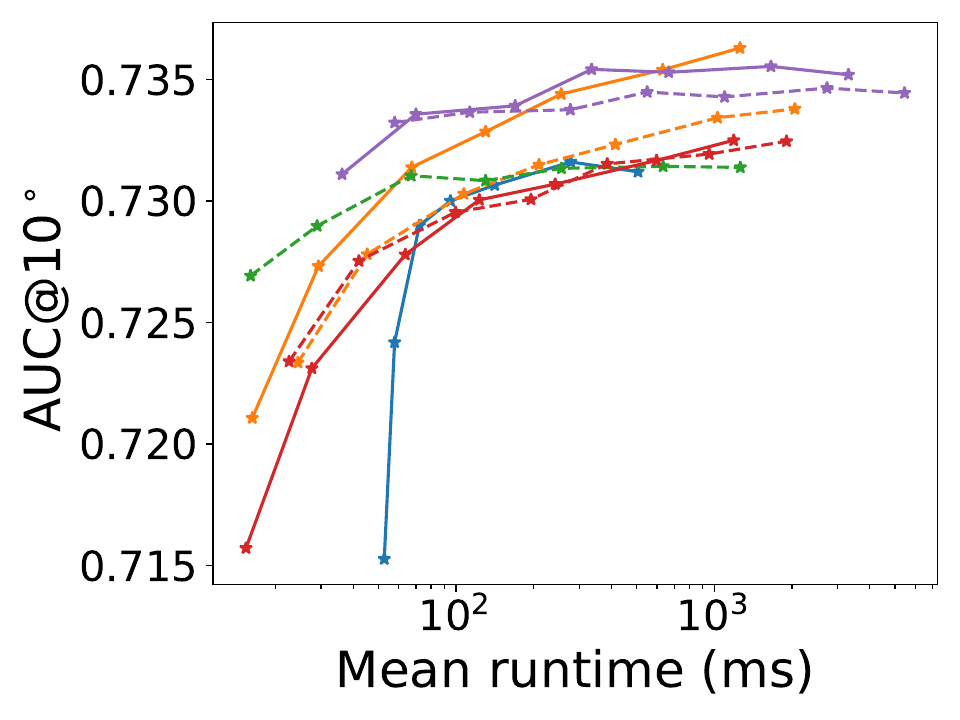}
    \caption{\textit{Shop Facade}}        
    \end{subfigure}

    \vspace{0.5ex}    
    
    \begin{subfigure}{0.24\linewidth}
    \includegraphics[width=\linewidth]{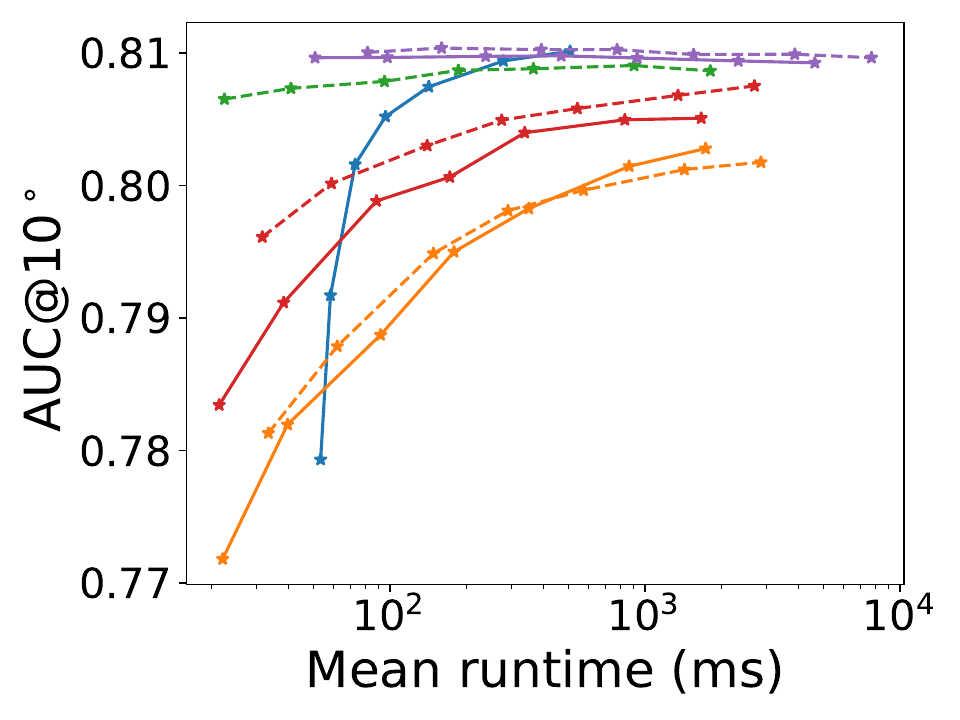}
    \caption{\textit{Brandenburg Gate}}        
    \end{subfigure}
    \hfill    
    \begin{subfigure}{0.24\linewidth}
    \includegraphics[width=\linewidth]{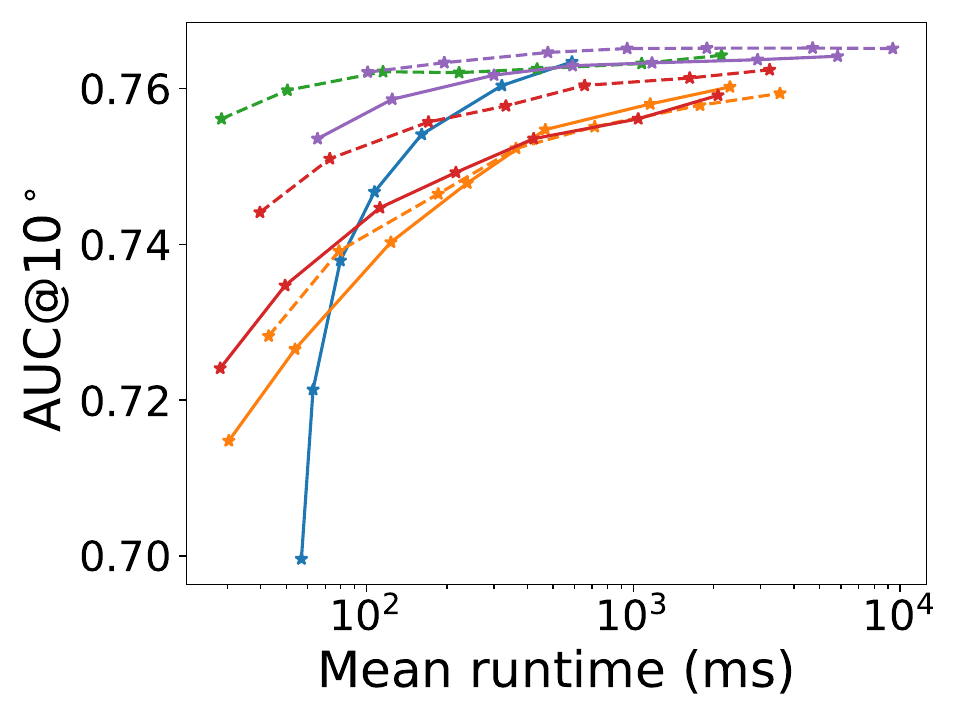}
    \caption{\textit{Buckingham Palace}}        
    \end{subfigure}
    \hfill
    \begin{subfigure}{0.24\linewidth}
    \includegraphics[width=\linewidth]{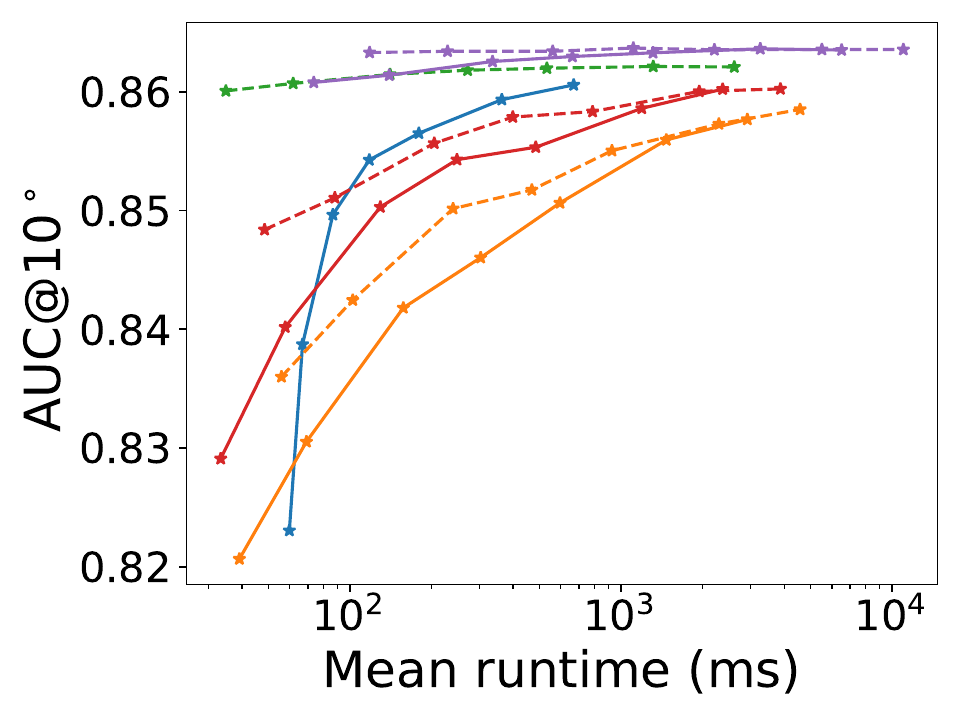}
    \caption{\textit{Colosseum Exterior}}        
    \end{subfigure}
    \hfill
    \begin{subfigure}{0.24\linewidth}
    \includegraphics[width=\linewidth]{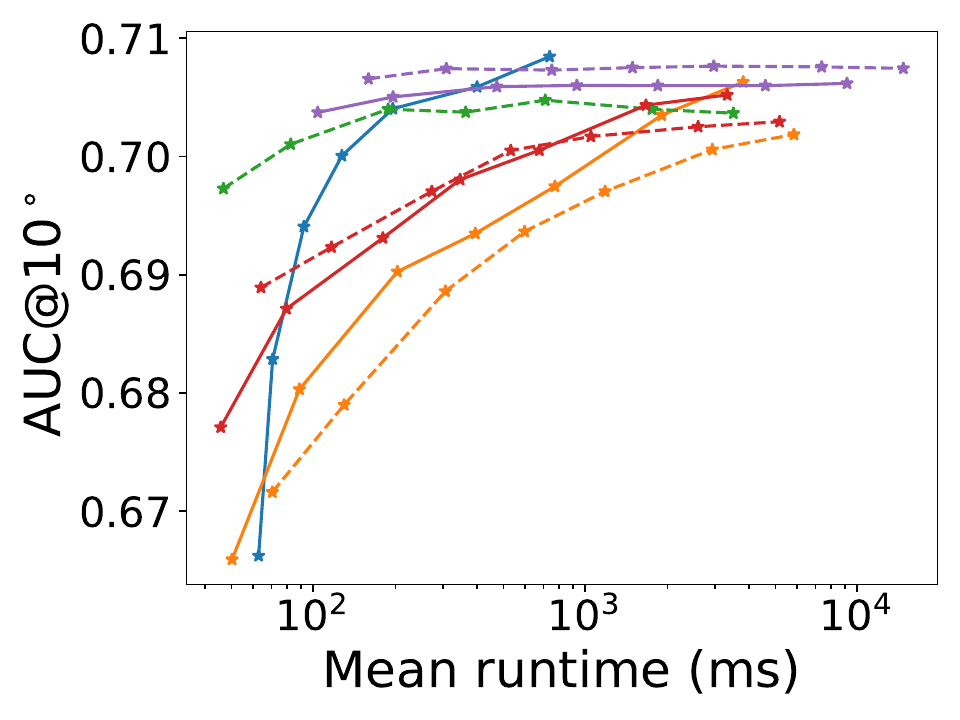}
    \caption{\textit{Grand Place Brussels}}       
    \end{subfigure}

    \vspace{0.5ex}
    
    \begin{subfigure}{0.24\linewidth}
    \includegraphics[width=\linewidth]{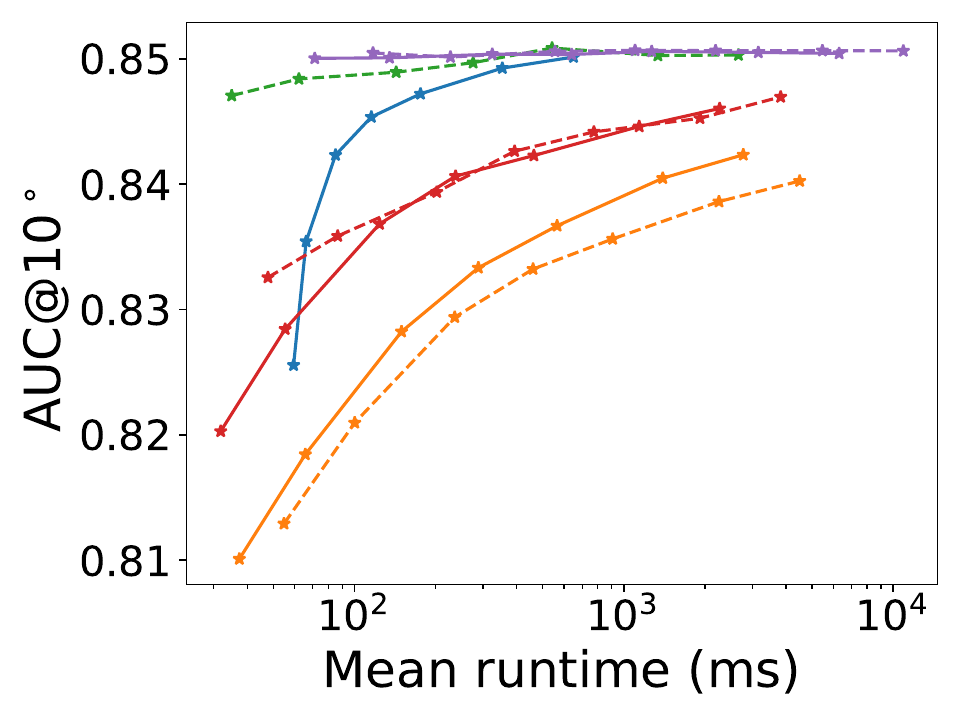}
    \caption{\textit{Notre Dame}}        
    \end{subfigure}
    \hfill
    \begin{subfigure}{0.24\linewidth}
    \includegraphics[width=\linewidth]{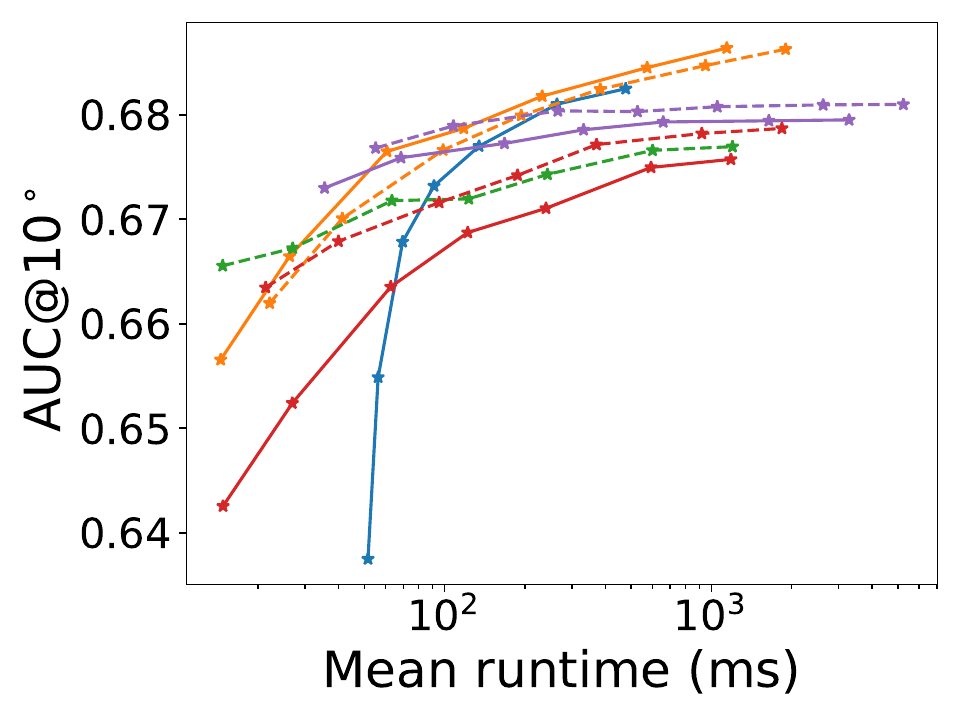}
    \caption{\textit{Palace of Westminster}} 
    \end{subfigure}
    \hfill        
    \begin{subfigure}{0.24\linewidth}
    \includegraphics[width=\linewidth]{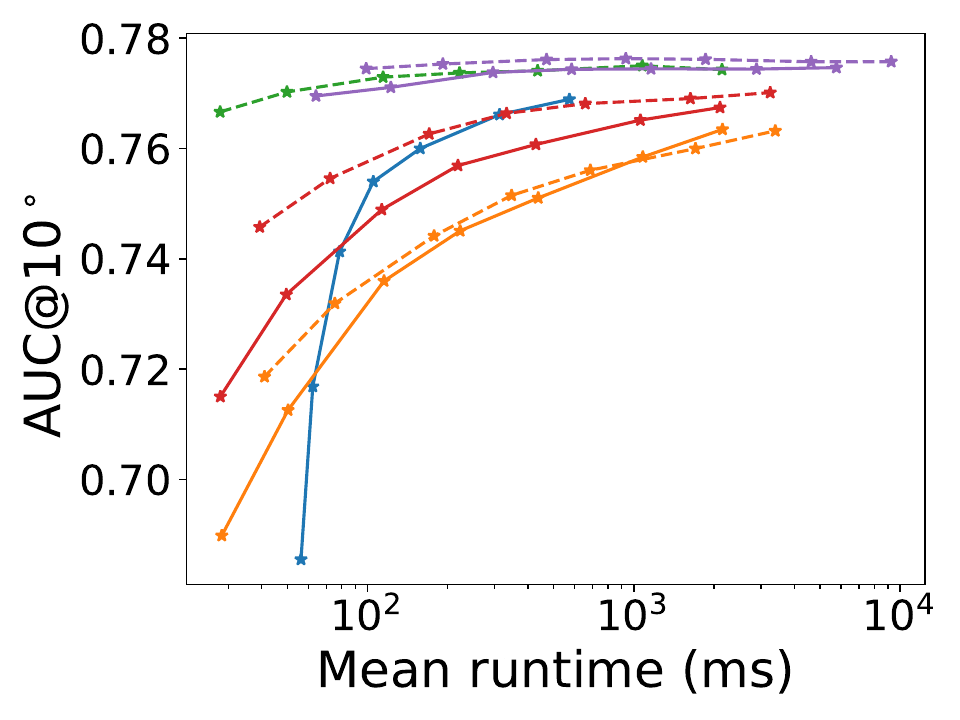}
    \caption{\textit{Pantheon Exterior}}        
    \end{subfigure}
    \hfill    
    \begin{subfigure}{0.24\linewidth}
    \includegraphics[width=\linewidth]{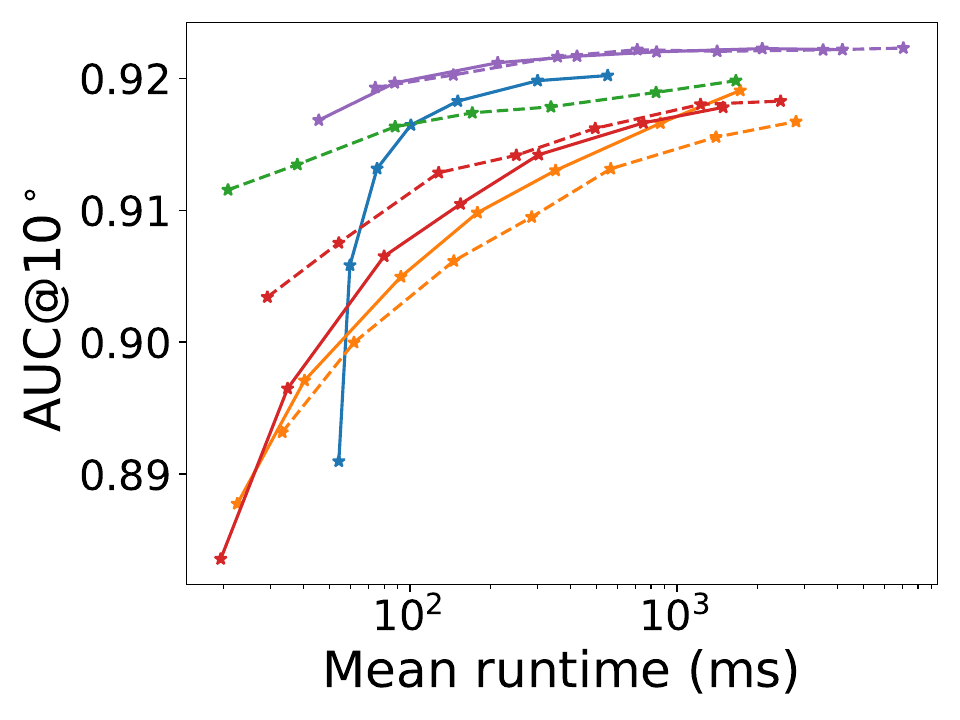}
    \caption{\textit{Sacre Coeur}}        
    \end{subfigure}

    \vspace{0.5ex}
    
    \begin{subfigure}{0.24\linewidth}
    \includegraphics[width=\linewidth]{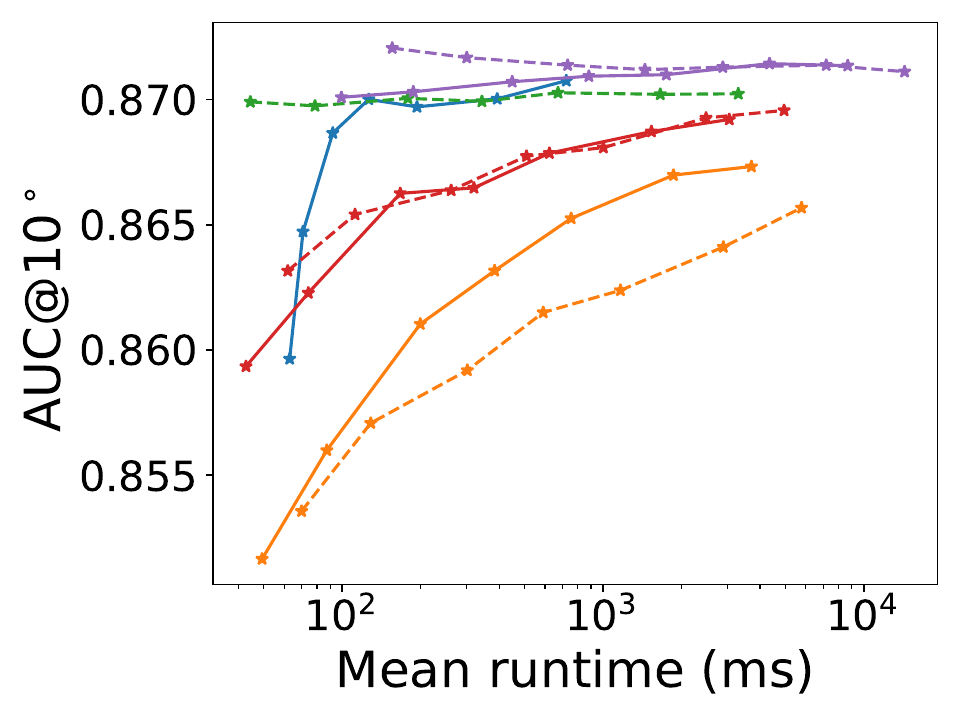}
    \caption{\textit{Reichstag}}        
    \end{subfigure}
    \begin{subfigure}{0.24\linewidth}
    \includegraphics[width=\linewidth]{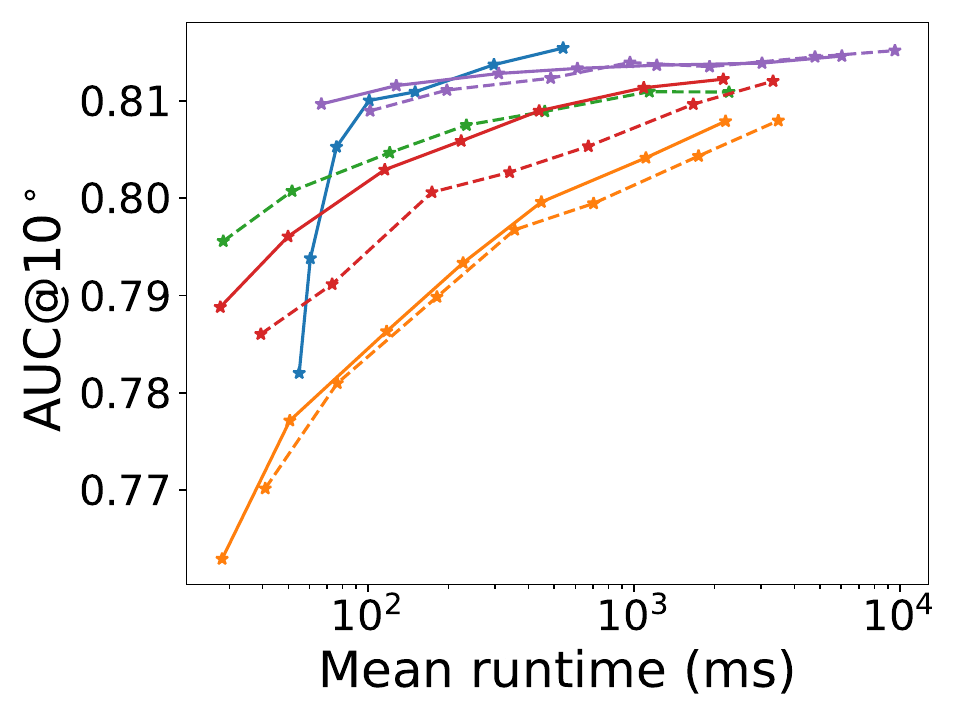}
    \caption{\textit{Taj Mahal}}        
    \end{subfigure}
    \hfill
    \begin{subfigure}{0.24\linewidth}
    \includegraphics[width=\linewidth]{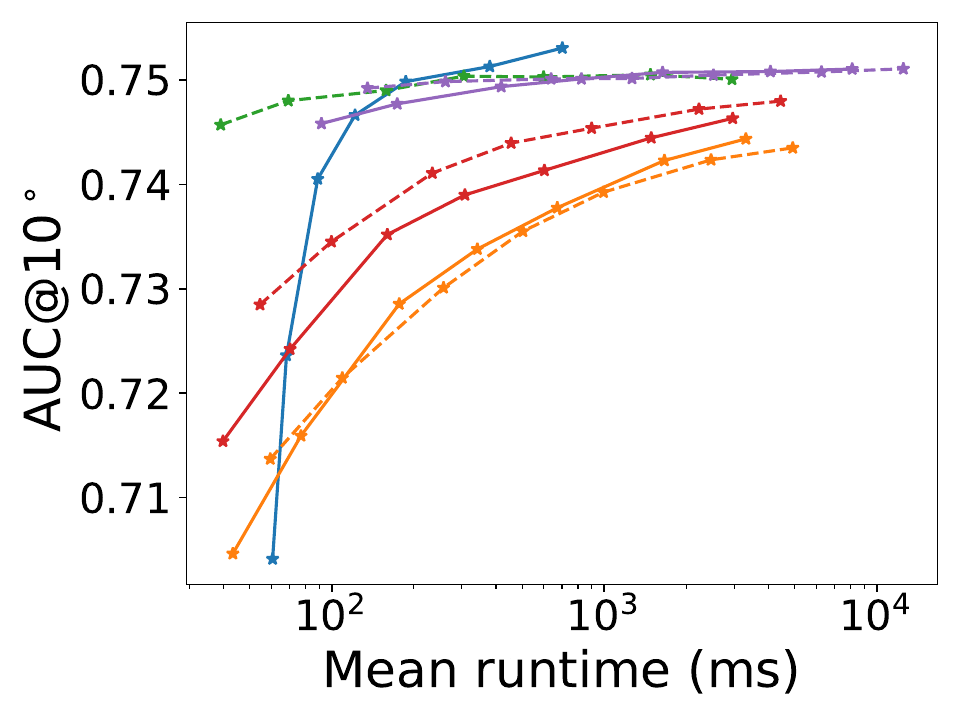}
    \caption{\textit{Temple Nara}}        
    \end{subfigure}
    \hfill
    \begin{subfigure}{0.24\linewidth}
    \includegraphics[width=\linewidth]{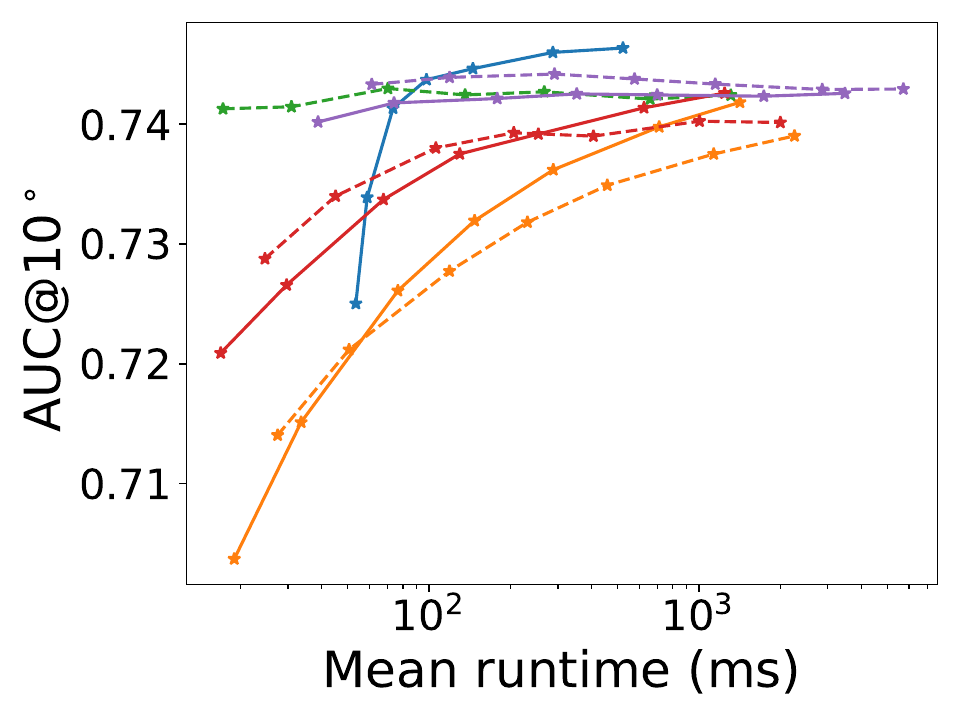}
    \caption{\textit{Trevi Fountain}}        
    \end{subfigure}
    
    \caption{Results for individual scenes from the Cambridge Landmarks~\cite{kendall2015cambridge} (a-d) and Phototourism~\cite{IMC2020} (e-p) scenes which were not presented in the main paper. We report the AUC@10$^\circ$ of the pose error and vary the number of Poselib RANSAC iterations ($\{100, 200, 500, 1000, 2000, 5000, 10000\}$). We use an epipolar threshold of 5px inside RANSAC. Runtimes are averaged over all image triplets.}
    \label{fig:poselib_graph_scenes}
\end{figure*}

\subsection{Results on individual scenes}
\label{sec:exp:details}
Fig.~4 in the main paper showed results jointly on all PhotoTourism~\cite{IMC2020} scenes (except \textit{St.~Peter's Square}), jointly on the 5 Cambridge Landmarks~\cite{kendall2015cambridge} scenes (except the Street scene, which is commonly not used due to issues with its ground truth), and Aachen Day-Night v1.1~\cite{zhang2021aachen}. 
It also showed results on one individual scene from~\cite{kendall2015cambridge}, \ie, the \textit{St. Mary's Church} scene.
In Fig.~\ref{fig:poselib_graph_scenes}, we provide results for the accuracy-speed trade-off evaluation for all remaining individual scenes of PhotoTourism~\cite{IMC2020} and Cambridge Landmarks~\cite{kendall2015cambridge}. 
%\TODO{discuss}

As discussed in the main paper, 
the accuracy of the proposed approximate solvers is scene-dependent. 
This %weakness 
also applies to the state-of-the-art \sfhc solver~\cite{Hruby_cvpr2022}, since in this solver the scene needs to be similar enough to the training scenes for the MLP-based classifier to work well. 
The proposed \texttt{ENM} refitting suppresses to some extent the scene dependency of the proposed \sftm-based and \sfaf-based solvers. 
It can be seen that the proposed \sftmdRC solver consistently provides the best speed-accuracy trade-off both with and without \texttt{ENM} accross all scenes. \sfafRCENM provides a similar performance, 
typically beating \sfhc~\cite{Hruby_cvpr2022}. However,
it may perform worse for some specific scenes, \eg, \textit{Shop Facade} and  \textit{Palace of Westminster}.
In general, the results on individual scenes are consistent with the results from the main paper.

\begin{figure*}[!t]
    \centering
\begin{tikzpicture} 

        \begin{axis}[%
        hide axis, xmin=0,xmax=0,ymin=0,ymax=0,
        legend style={draw=white!15!white, 
        line width = 1pt,
        legend  columns =9, % comment for column display
        /tikz/every even column/.append style={column sep=0.5cm},
        }
        ]
        
        \addlegendimage{Seaborn1}        \addlegendentry{\sfhc~\cite{Hruby_cvpr2022}};
        \addlegendimage{Seaborn2}
        \addlegendentry{\sft};
        \addlegendimage{Seaborn3}
        \addlegendentry{\sfafRC};
        % \addlegendimage{Seaborn4}
        % \addlegendentry{\sfah}; 
        \addlegendimage{Seaborn4}
        \addlegendentry{\sftmRC}; 
        \addlegendimage{Seaborn5}
        \addlegendentry{\sftmdRC};
        % \addlegendimage{white}
        % \addlegendentry{~};
        \end{axis}
    \end{tikzpicture}

   \begin{tabular}{cccc}
    \raisebox{0.4\height}{\rotatebox[origin=l]{90}{\small{Phototourism~\cite{IMC2020}}}} &    
    \includegraphics[width=0.3\textwidth]{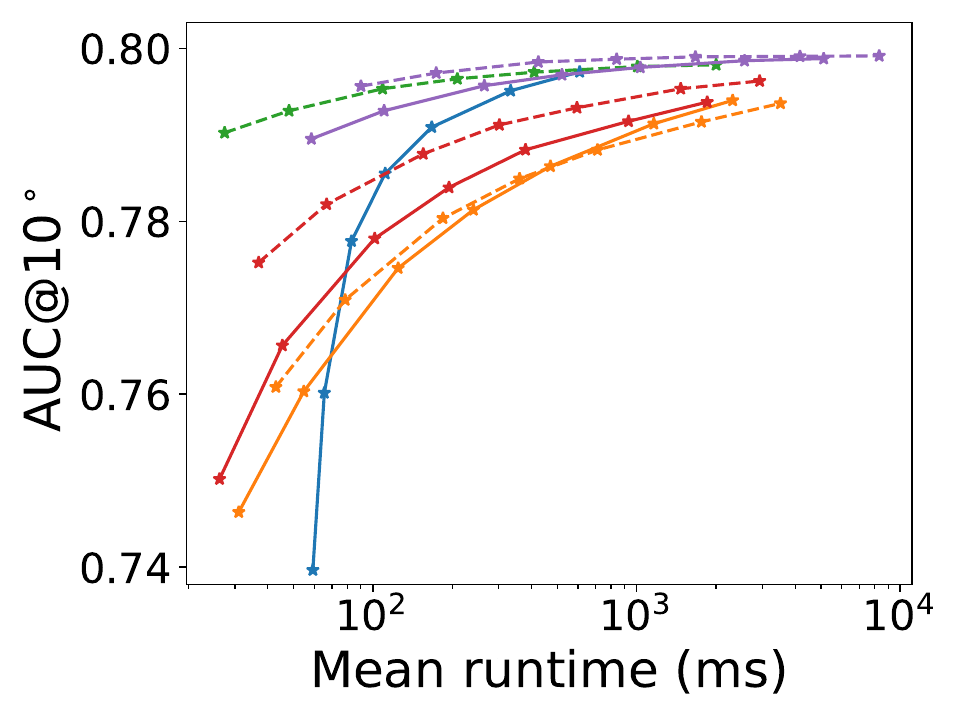}  
    &        
    \includegraphics[width=0.3\textwidth]{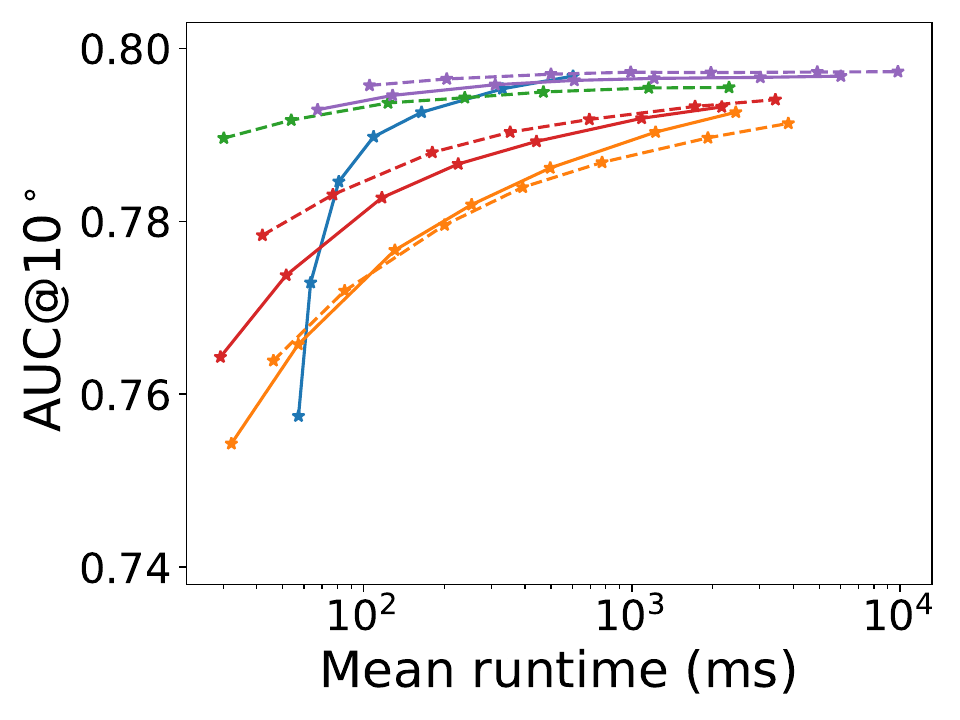}
    &
    \includegraphics[width=0.3\textwidth]{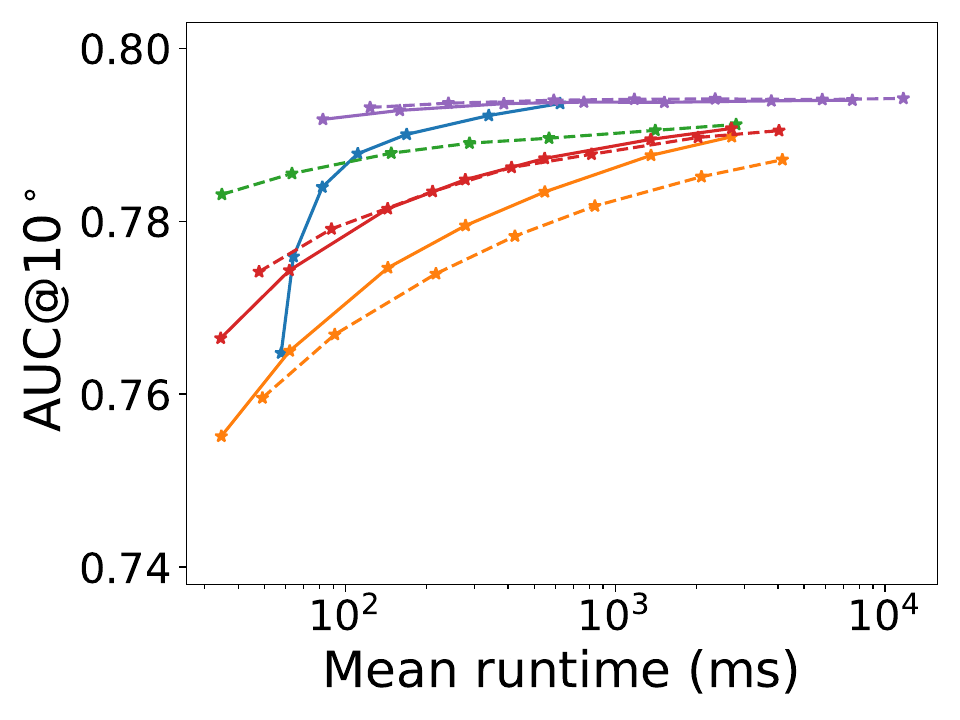}     \\

    \raisebox{0.55\height}{\rotatebox[origin=l]{90}{\small{Cambridge~\cite{kendall2015cambridge}}}} &
        \includegraphics[width=0.3\textwidth]{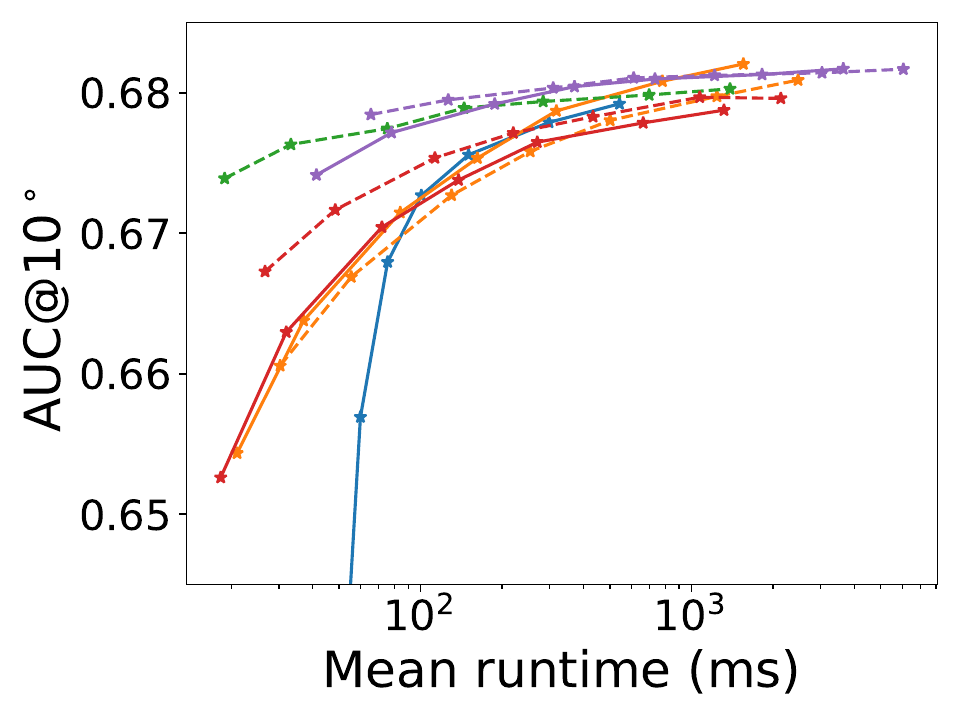}    
    &        
    \includegraphics[width=0.3\textwidth]{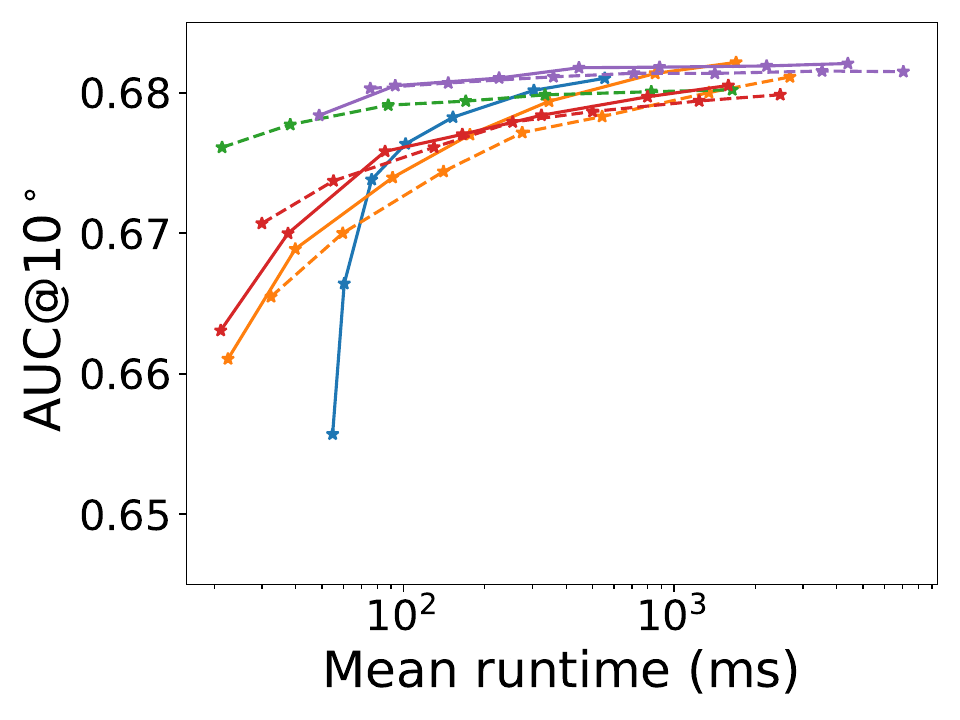}
    &
    \includegraphics[width=0.3\textwidth]{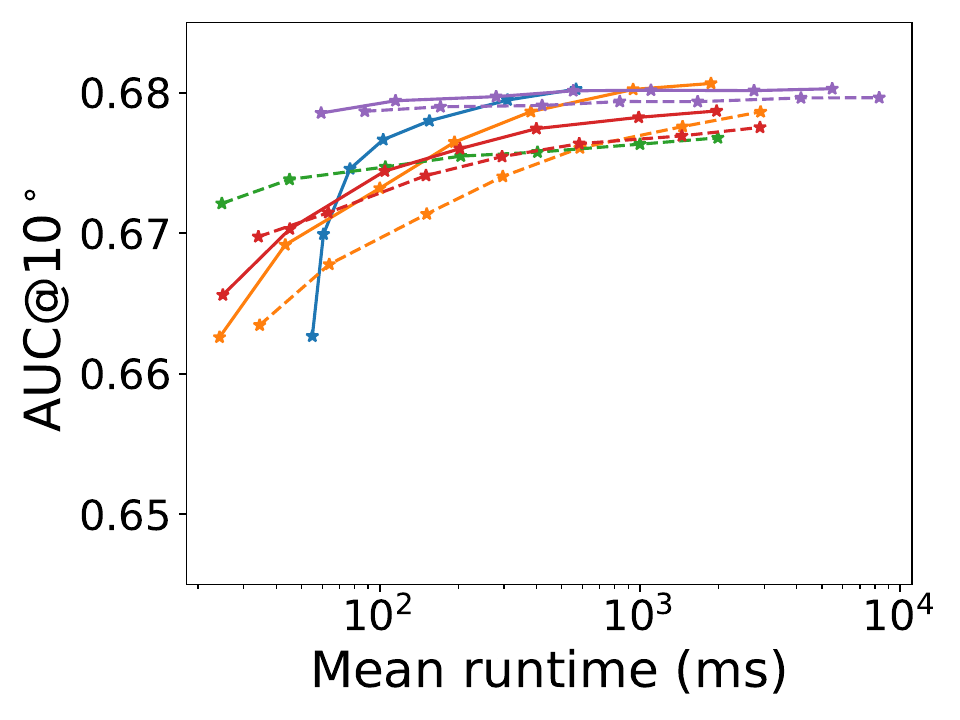} \\

    \raisebox{0.80\height}{\rotatebox[origin=l]{90}{\small{Aachen~\cite{zhang2021aachen}}}} &
    \includegraphics[width=0.3\textwidth]{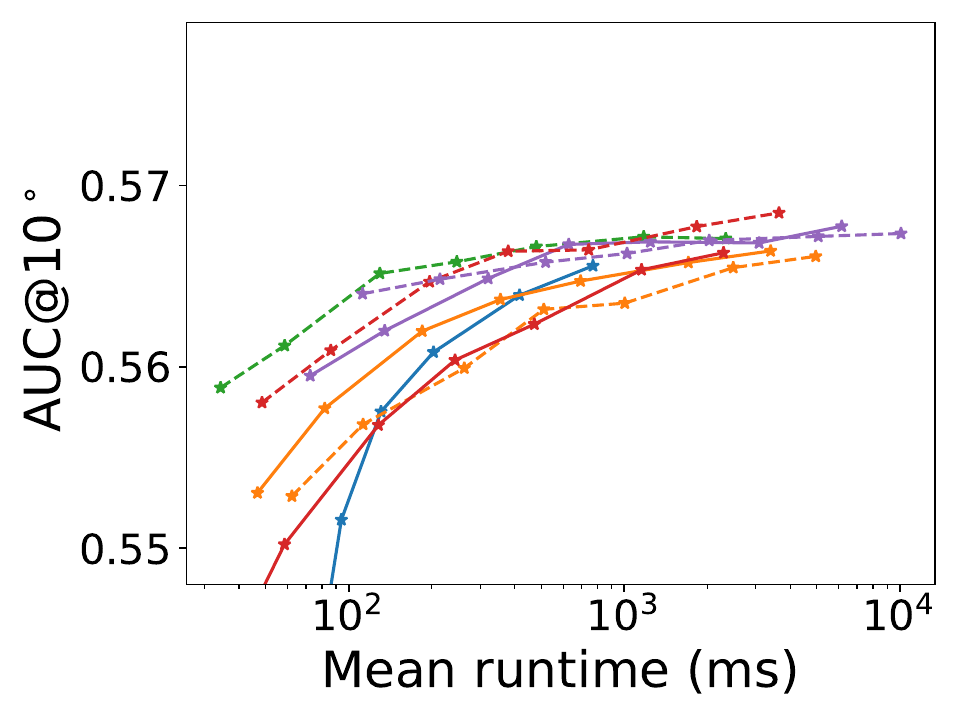}    
    &        
    \includegraphics[width=0.3\textwidth]{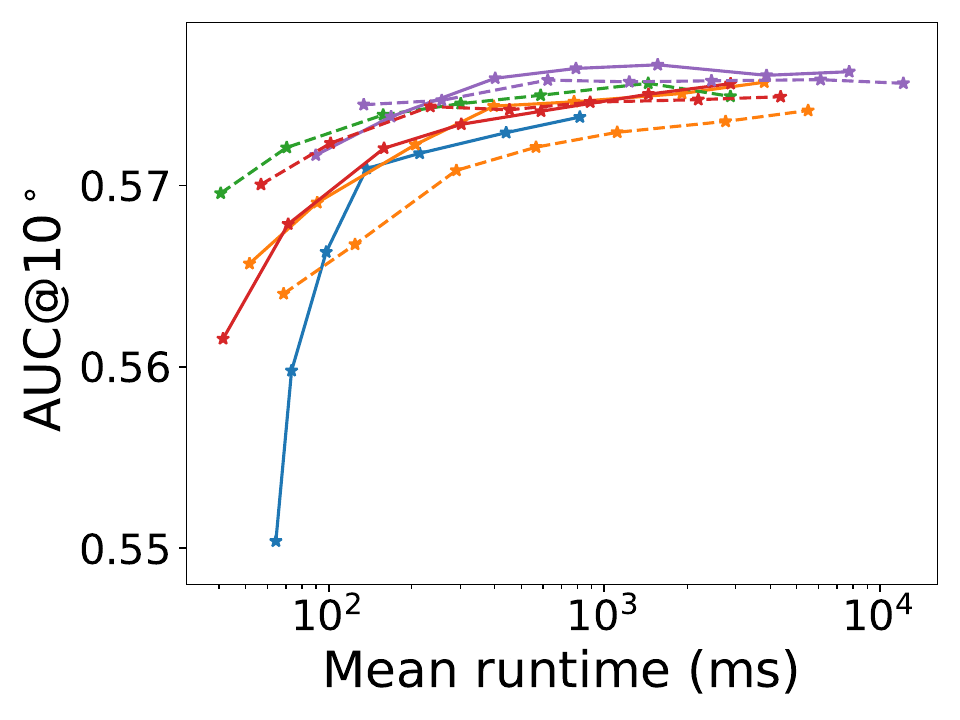}
    &
    \includegraphics[width=0.3\textwidth]{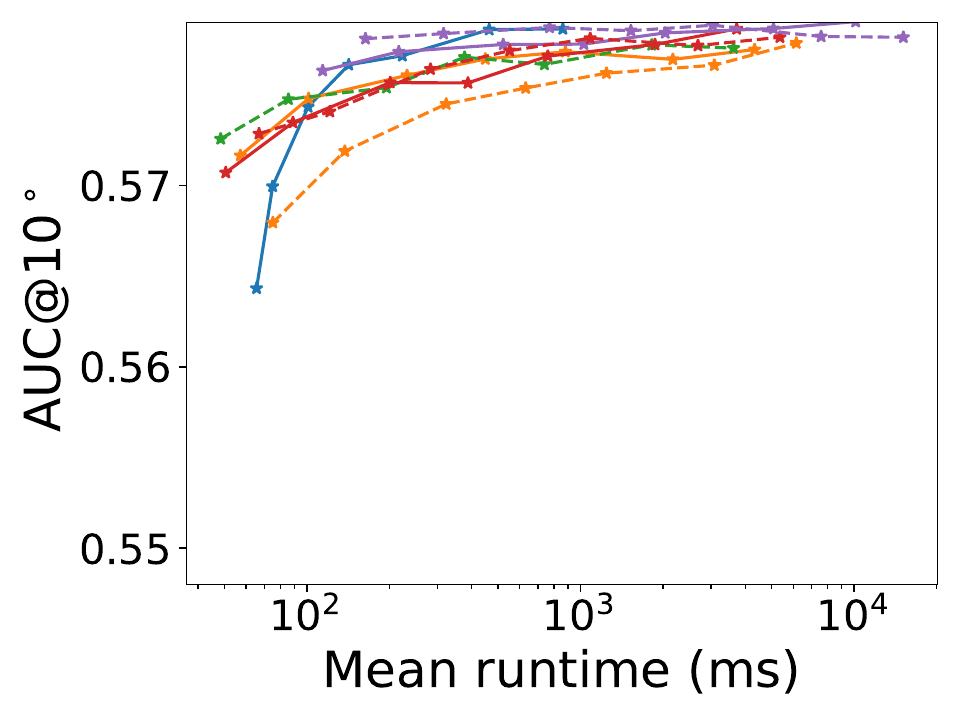}  \\

    ~&$t = 3\text{px}$&$t = 5\text{px}$&$t = 10\text{px}$
     
    \end{tabular}
    
    \caption{Speed-accuracy trade-off on all scenes from Phototourism~\cite{IMC2020}, except \textit{St.~Peter's Square}, 5 scenes from  Cambridge Landmarks~\cite{kendall2015cambridge}, and the Aachen Day-Night v1.1 dataset~\cite{zhang2021aachen}. 
    We report the AUC@10$^\circ$ of the pose error and vary the number of Poselib RANSAC iterations ($\{100, 200, 500, 1000, 2000, 5000, 10000\}$) for different maximum epipolar thresholds ($t$). Runtimes are averaged over all image triplets. }
    \label{fig:threshold_sensitivity}
\end{figure*}

% It especially highlights the importance of the re- 618
% finement using the 4th point in the third view (+R). Fur- 619
% ther ablation studies are presented in the SM.

\subsection{RANSAC threshold sensitivity}
\label{sec:threshold}

In Fig.~\ref{fig:poselib_ablation_threshold} (bottom), we provided experiments showing how the performance of the selected methods changes when we vary the RANSAC epipolar threshold. 
In Fig.~\ref{fig:threshold_sensitivity} we provide more extensive results comparing the methods with different thresholds on all three datsets. % for more methods.

Similar to the results presented in the main paper, %Here we show that 
\sftmdRC in both variants (with and without \texttt{ENM}) shows consistently good performance even when using a different threshold in RANSAC. In contrast, \sfafRCENM performs worse than \sftmdRC when considering a higher epipolar threshold in RANSAC.

\subsection{Solver run-times}
\label{sec:exp:timing}
In this section, we present run-times of the proposed solvers as well as the state-of-the-art solvers for the relative pose problem of three calibrated cameras.
While the main paper reports run-time results for full RANSAC-based estimation, we now report the run-times of the individual solvers outside of RANSAC. 
To measure the run-times of the solvers\footnote{Note that for \sfhc solver, in Tab~\ref{tab:calib_time}, the time needed to load the weights of the network (or any other required data) is not added to the runtime of the solver. The data are loaded once per RANSAC, and the loading takes on average 45ms.}, we calculated the average run-time of each solver on more than 50k instances of the \textit{Sacre Coeur} scene of the PhotoTourism dataset~\cite{snavely2006photo}.
The run-times are reported in Table~\ref{tab:calib_time}. The experiments were performed on an Intel(R) Core(TM) i9-10900X CPU @ 3.70GHz.
In general, the implementations of all proposed solvers are not optimized for speed, and we still see room for speeding them up. 

\begin{table}[t!]
 \begin{center}
 \resizebox{1\columnwidth}{!}{
 \begin{tabular}{|l| c | c | c | c | c |}% c | c |}
    \hline
    & \sft & \sfhc & \sftm & \sftmd & \sfaf \\ % \sftl & \sftld & \sftlid \\
    \hline
    Time ($\mu$s) & 77.90 & 66.06 & 83.92 & 218.71 & 61.12 \\%450.26 & 511.28 & 1130.31 \\
    \hline
\end{tabular}
 }
\end{center}
\caption{The average run-time, averaged over more than 50k instances of the \textit{Sacre Coeur} scene of the PhotoTourism dataset~\cite{snavely2006photo}, of the solvers for the 4p3v problem.}
\label{tab:calib_time}
\end{table}

\subsection{Outlier experiments}
\label{sec:exp:outlier}

\begin{figure*}
    \centering

\resizebox{1.0\linewidth}{!}{
\begin{tikzpicture} 

        \begin{axis}[%
        hide axis, xmin=0,xmax=0,ymin=0,ymax=0,
        legend style={draw=white!15!white, 
        line width = 1pt,
        legend  columns =9, % comment for column display
        /tikz/every even column/.append style={column sep=0.5cm},
        }
        ]
        
        \addlegendimage{Seaborn1}        \addlegendentry{\sfhc~\cite{Hruby_cvpr2022}};
        \addlegendimage{Seaborn2}
        \addlegendentry{\sft};
        \addlegendimage{Seaborn3}
        \addlegendentry{\sfafRC};
        % \addlegendimage{Seaborn4}
        % \addlegendentry{\sfah}; 
        \addlegendimage{Seaborn4}
        \addlegendentry{\sftmRC}; 
        \addlegendimage{Seaborn5}
        \addlegendentry{\sftmdRC};
        % \addlegendimage{white}
        % \addlegendentry{~};
        \addlegendimage{black!30}
        \addlegendentry{w/o \texttt{ENM}};
        \addlegendimage{black!30,dash pattern=on 2pt off 1pt on 2pt off 1pt}
        \addlegendentry{w/ \texttt{ENM}};
        
        \end{axis}
    \end{tikzpicture}}
   
    \begin{subfigure}{0.3\textwidth}
    \includegraphics[width=\textwidth]{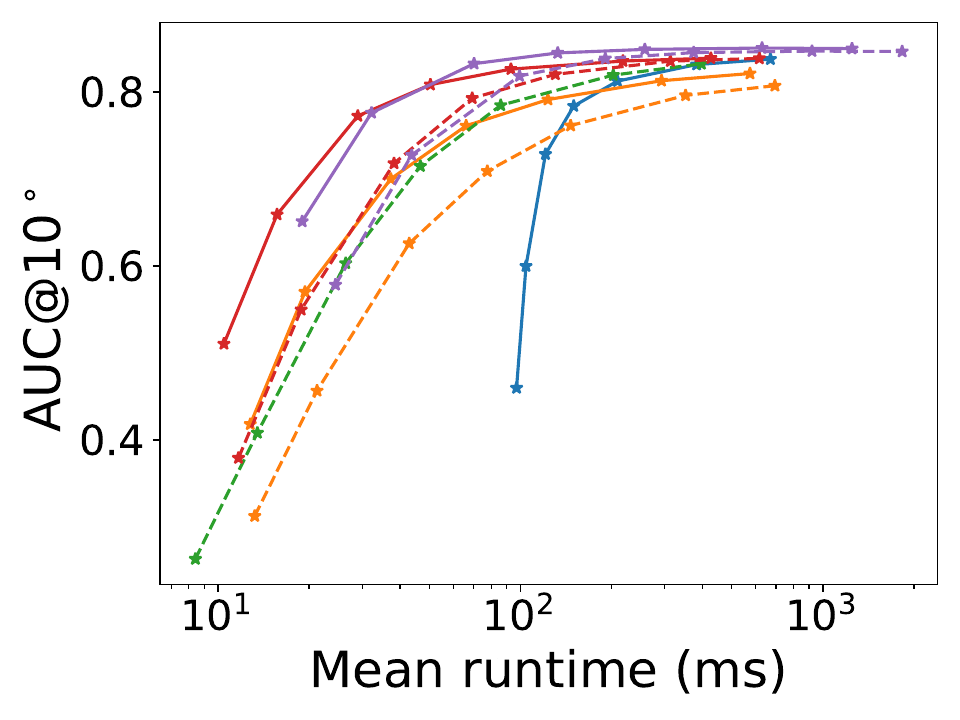}
    \caption{40\% Inlier Ratio}
    \end{subfigure}
    \hfill    
    \begin{subfigure}{0.3\textwidth}
    \includegraphics[width=\textwidth]{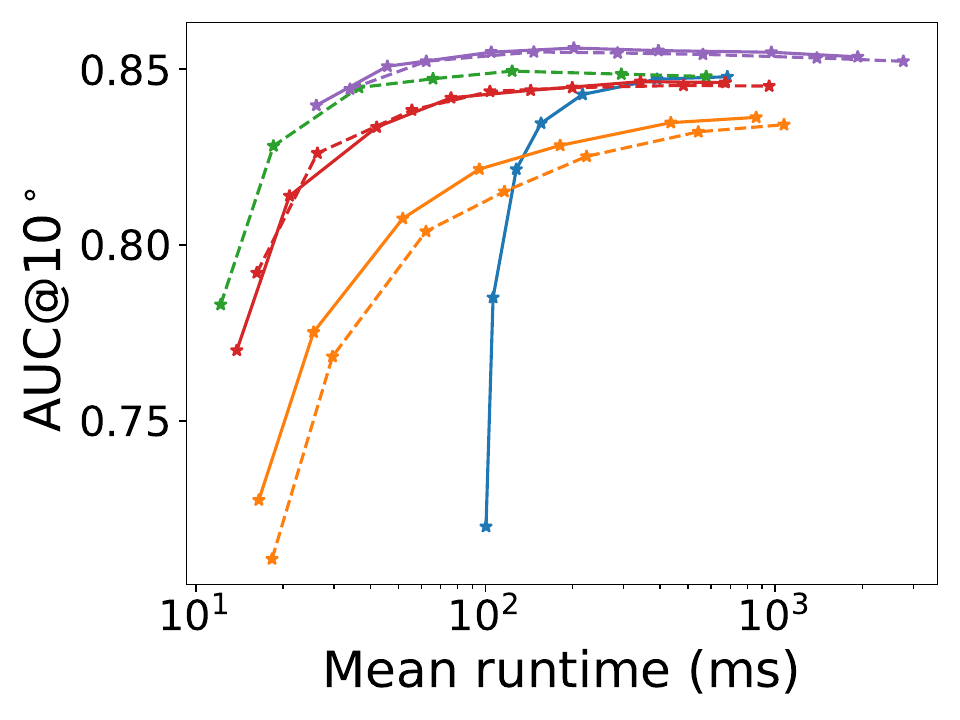}
    \caption{60\% Inlier Ratio}
    \end{subfigure}  
    \hfill
    \begin{subfigure}{0.3\textwidth}
    \includegraphics[width=\textwidth]{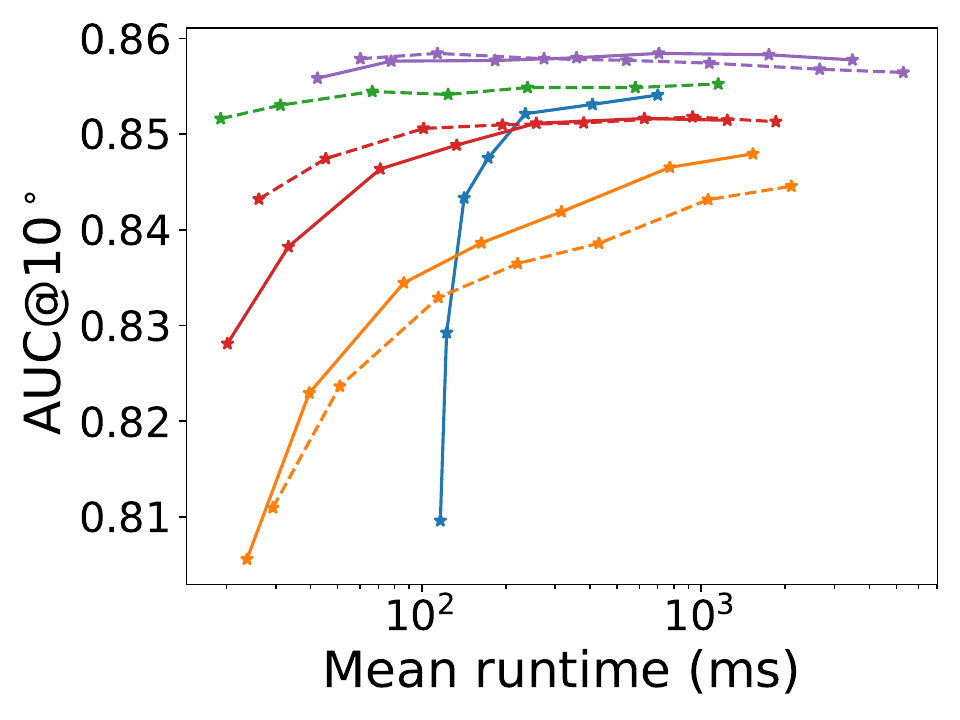}    
    \caption{80\% Inlier Ratio}
    \end{subfigure}
    
    \caption{Speed-accuracy trade-off on the \textit{Notre Dame} scene from Phototourism~\cite{IMC2020}. We perform semi-synthetic experiments where we add random outlier correspondences to modify the inlier ratios.
    %We vary the number of Poselib RANSAC iterations (100, 200, 500, 1000, 2000, 5000, 10000). We use RANSAC epipolar threshold of 5 px. Runtimes are averaged over all image triplets.
    }
    \label{fig:outliers}
\end{figure*}

To show how the different solvers perform even under varying inlier ratios, we perform a semi-synthetic experiment. 
We use the \textit{Notre Dame} scene from PhotoTourism~\cite{IMC2020}. We keep all inlier triplets \wrt a 5px epipolar threshold using the ground truth poses. 
We add additional synthetic outlier correspondences by generating random points in all three views. 
This allows us to study how the different methods perform when the inlier ratio changes. 
The results are shown in Fig.~\ref{fig:outliers}. 
As expected, the performance of all solvers decreases with lower inlier ratios. 
We also observe that \sftmdRC performs well even with a low inlier ratio. 
In contrast, the relative performance of \sfafRCENM worsens with a decreased inlier ratio. However, we note that even with an inlier ratio of 40\%, it still results in performance comparable to the baseline \sft solver. This suggests that a high inlier ratio is not necessary for the \texttt{ENM} to work well in conjunction with the solver \sfaf.

\begin{table*}[t]
    \centering
    \resizebox{1.0\linewidth}{!}{
\begin{tabular}{ l | c c | c c c | c | c c | c c c | c| c c | c c c | c}
    \toprule
    \multicolumn{1}{c |}{~} & \multicolumn{6}{c |}{Phototourism~\cite{IMC2020}} & \multicolumn{6}{c |}{Cambridge Landmarks~\cite{kendall2015cambridge}} & \multicolumn{6}{c}{Aachen Day-Night v1.1~\cite{zhang2021aachen}} \\
    \toprule
    Estimator & AVG $(^\circ)$ $\downarrow$ & MED $(^\circ)$ $\downarrow$ & AUC@5 $\uparrow$ & @10 $\uparrow$ & @20 $\uparrow$ & Runtime (ms) $\downarrow$& AVG $(^\circ)$ $\downarrow$ & MED $(^\circ)$ $\downarrow$ & AUC@5 $\uparrow$ & @10 $\uparrow$ & @20 $\uparrow$ & Runtime (ms) $\downarrow$& AVG $(^\circ)$ $\downarrow$ & MED $(^\circ)$ $\downarrow$ & AUC@5 $\uparrow$ & @10 $\uparrow$ & @20 $\uparrow$ & Runtime (ms) $\downarrow$\\
    \midrule

\sfhc~\cite{Hruby_cvpr2022} & 10.28 & \phantom{1}2.81 & 46.97 & 61.37 & 73.16 & \phantom{1}64.10 & 13.95 & \phantom{1}4.58 & 29.54 & 48.68 & 65.12 & 60.11 & 23.46 & \phantom{1}6.58 & 28.56 & 41.20 & 53.12 & \phantom{1}67.26 \\
\midrule\sft & \phantom{1}9.02 & \phantom{1}2.81 & 47.06 & 61.89 & 74.12 & \phantom{1}33.77 & 12.39 & \phantom{1}4.57 & 29.56 & 49.03 & 65.94 & 24.04 & 21.17 & \phantom{1}6.33 & 29.07 & 41.90 & 54.13 & \phantom{1}53.34 \\
\sftENM & \phantom{1}8.77 & \phantom{1}2.73 & 47.93 & 62.68 & 74.73 & \phantom{1}48.79 & 12.00 & \phantom{1}4.52 & 29.76 & 49.35 & 66.25 & 34.82 & 21.39 & \phantom{1}6.42 & 28.91 & 41.73 & 53.78 & \phantom{1}71.61 \\
\midrule \sfaf & 58.51 & 46.32 & 16.16 & 21.93 & 27.71 & \phantom{1}16.58 & 59.75 & 43.50 & 13.59 & 22.93 & 31.44 & 13.33 & 53.53 & 41.38 & 17.05 & 23.62 & 30.10 & \phantom{1}32.04 \\
\sfafENM & \phantom{1}8.44 & \phantom{1}2.60 & 49.46 & 64.10 & 75.85 & \phantom{1}40.45 & 11.86 & \phantom{1}4.46 & 30.17 & 49.89 & 66.74 & 28.35 & 21.00 & \phantom{1}6.23 & 29.08 & 42.05 & 54.32 & \phantom{1}62.44 \\
\sfafR & 53.59 & 38.24 & 18.31 & 24.91 & 31.34 & \phantom{1}\underline{16.32} & 54.17 & 24.26 & 16.60 & 27.32 & 36.63 & \underline{12.48} & 51.81 & 38.69 & 17.55 & 24.35 & 31.05 & \phantom{1}\underline{29.56} \\
\sfafRC & 56.31 & 43.55 & 16.89 & 23.07 & 29.15 & \phantom{1}\textbf{11.10} & 55.99 & 30.64 & 15.71 & 26.07 & 35.10 & \phantom{1}\textbf{9.35} & 54.12 & 42.69 & 16.64 & 23.04 & 29.50 & \phantom{1}\textbf{19.75} \\
\sfafRCENM & \phantom{1}8.45 & \phantom{1}2.59 & 49.59 & 64.22 & 75.92 & \phantom{1}32.37 & 11.78 & \phantom{1}4.41 & 30.41 & 50.12 & 66.93 & 23.43 & 21.11 & \phantom{1}6.29 & 29.02 & 41.97 & 54.18 & \phantom{1}42.36 \\
\midrule\sftm & \phantom{1}9.98 & \phantom{1}3.01 & 45.16 & 60.02 & 72.55 & \phantom{1}35.18 & 13.63 & \phantom{1}4.73 & 28.66 & 47.83 & 64.70 & 25.15 & 23.44 & \phantom{1}6.82 & 27.90 & 40.53 & 52.69 & \phantom{1}54.89 \\
\sftmENM & \phantom{1}8.86 & \phantom{1}2.75 & 47.68 & 62.54 & 74.67 & \phantom{1}48.83 & 12.10 & \phantom{1}4.52 & 29.80 & 49.31 & 66.17 & 34.93 & 21.51 & \phantom{1}6.40 & 28.91 & 41.73 & 53.83 & \phantom{1}70.87 \\
\sftmR & \phantom{1}9.24 & \phantom{1}2.74 & 47.74 & 62.44 & 74.47 & \phantom{1}41.52 & 12.76 & \phantom{1}4.53 & 29.77 & 49.27 & 66.12 & 30.76 & 22.06 & \phantom{1}6.39 & 28.92 & 41.69 & 53.70 & \phantom{1}60.40 \\
\sftmRC & \phantom{1}9.19 & \phantom{1}2.71 & 48.10 & 62.73 & 74.69 & \phantom{1}30.88 & 12.84 & \phantom{1}4.53 & 29.74 & 49.29 & 66.11 & 22.72 & 22.06 & \phantom{1}6.42 & 28.70 & 41.53 & 53.62 & \phantom{1}42.38 \\
\sftmRCENM & \phantom{1}8.52 & \phantom{1}2.62 & 49.16 & 63.82 & 75.60 & \phantom{1}44.51 & 11.79 & \phantom{1}4.44 & 30.27 & 49.91 & 66.72 & 32.48 & 20.93 & \phantom{1}6.21 & 29.16 & 42.16 & 54.46 & \phantom{1}58.08 \\
\midrule\sftmd & \phantom{1}9.28 & \phantom{1}2.94 & 45.92 & 61.05 & 73.66 & \phantom{1}83.70 & 12.68 & \phantom{1}4.59 & 29.35 & 48.88 & 65.89 & 59.23 & 22.09 & \phantom{1}6.42 & 28.60 & 41.55 & 53.76 & 125.02 \\
\sftmdENM & \phantom{1}8.44 & \phantom{1}2.73 & 48.01 & 63.02 & 75.20 & 125.66 & \underline{11.61} & \phantom{1}4.47 & 30.04 & 49.75 & 66.68 & 89.05 & \underline{20.80} & \phantom{1}6.22 & 29.26 & 42.16 & 54.32 & 175.54 \\
\sftmdR & \phantom{1}\underline{8.32} & \phantom{1}\underline{2.58} & \underline{49.79} & \underline{64.53} & \underline{76.30} & 100.61 & 11.93 & \phantom{1}\underline{4.40} & \underline{30.47} & \underline{50.35} & \underline{67.22} & 73.94 & 21.15 & \phantom{1}\underline{6.12} & \underline{29.35} & \underline{42.31} & \underline{54.53} & 138.53 \\
\sftmdRC & \phantom{1}8.39 & \phantom{1}2.58 & 49.69 & 64.41 & 76.18 & \phantom{1}71.73 & 12.06 & \phantom{1}4.41 & 30.42 & 50.25 & 67.08 & 52.84 & 21.38 & \phantom{1}6.15 & 29.24 & 42.22 & 54.43 & \phantom{1}92.89 \\
\sftmdRCENM & \phantom{1}\textbf{7.99} & \phantom{1}\textbf{2.56} & \textbf{49.93} & \textbf{64.67} & \textbf{76.45} & 112.60 & \textbf{11.36} & \phantom{1}\textbf{4.39} & \textbf{30.54} & \textbf{50.40} & \textbf{67.30} & 81.98 & \textbf{20.64} & \phantom{1}\textbf{6.08} & \textbf{29.40} & \textbf{42.48} & \textbf{54.67} & 139.19 \\
\midrule
    \end{tabular}}
    \caption{Experiments with the alternative evaluation measure described in Sec.~\ref{sec:exp:measure}. Results for different solvers implemented in the PoseLib framework~\cite{PoseLib} on all scenes from the PhotoTourism~\cite{IMC2020}, 5 scenes from the Cambridge Landmarks~\cite{kendall2015cambridge}, and the Aachen Day-Night v1.1~\cite{zhang2021aachen} datasets. 
    We mark the \textbf{best} and \underline{second best} results. Reported runtimes are for the whole RANSAC.}
    \label{tab:max_error}
\end{table*}

\subsection{Alternative evaluation measure}
\label{sec:exp:measure}
For the evaluation in the main paper, we defined the pose error as $\text{max}\left(0.5 (\M R_{err}^{12} + \M R_{err}^{13}), 0.5 (\V t_{err}^{12} + \V t_{err}^{13})\right)$, where $\M R_{err}^{ij}$ and $\V t_{err}^{ij}$ are the angular errors of rotation and translation (both in degrees) for camera pair $ij$. % in degrees. 
The 4p3v problem also includes the estimation of $\M R_{23}$ and $\V t_{23}$ since the relative scale of $\V t_{12}$ and $\V t_{13}$ is recovered. We therefore also present results for the pose error defined as 
\begin{equation}
P_{err} = \text{max} \left(\M R_{err}^{12}, \M R_{err}^{13}, \M R_{err}^{23}, \V t_{err}^{12}, \V t_{err}^{13}, \V t_{err}^{23}\right) \enspace.    
\label{eq:max_error}
\end{equation}
The results equivalent to Tab.~2 from the main paper using this pose error definition are presented in Tab.~\ref{tab:max_error}. 
A speed-accuracy comparison equivalent to Fig.~4 in the main paper is presented in Fig.~\ref{fig:max_error}. The overall ranking of the methods remains the same under both the metric used in the main paper and the alternative described in this section. %even when using the alternative error metric.

\subsection{Comparison with \cite{DBLP:journals/ijcv/NisterS06}}
% \newtext
{
The authors of
% We have contacted the authors of
~\cite{DBLP:journals/ijcv/NisterS06} kindly shared their source code with us.   
% Given an accurate epipole estimate, their approach performs comparable to our M-based solvers. 
Unfortunately, we were not able to run the part of the code that samples epipole candidates from a 10-degree polynomial curve (appropriately sampling the curve is hard as the epipole can be arbitrarily far from the image center). At the same time, the authors were also not able to run it. 

Based on the working parts of the code, we tested an oracle version of~\cite{DBLP:journals/ijcv/NisterS06}, where instead of sampling the 10-degree polynomial curve, the oracle gives us the correct epipole. Given a sample close to the correct epipole, \cite{DBLP:journals/ijcv/NisterS06} performs comparable to our M-based solvers. In practice it is hard to find good samples (the epipole can be arbitrarily far from the image center). \cite{DBLP:journals/ijcv/NisterS06} report using 40-1,000 samples with additional local optimization for robust estimation. 
Even then, \cite{DBLP:journals/ijcv/NisterS06} show that this approach performs worse than \texttt{5pt+P3P} on synthetic data. In contrast, %w
% We observe that 
two additional samples in our $\delta$-based %the 
solvers %based on $\delta$ 
already lead to better %improve performance compared to 
accuracy than \texttt{5pt+P3P}.}

\begin{figure*}
    \centering

\resizebox{1.0\linewidth}{!}{
\begin{tikzpicture} 

        \begin{axis}[%
        hide axis, xmin=0,xmax=0,ymin=0,ymax=0,
        legend style={draw=white!15!white, 
        line width = 1pt,
        legend  columns =9, % comment for column display
        /tikz/every even column/.append style={column sep=0.5cm},
        }
        ]
        
        \addlegendimage{Seaborn1}        \addlegendentry{\sfhc~\cite{Hruby_cvpr2022}};
        \addlegendimage{Seaborn2}
        \addlegendentry{\sft};
        \addlegendimage{Seaborn3}
        \addlegendentry{\sfafRC};
        % \addlegendimage{Seaborn4}
        % \addlegendentry{\sfah}; 
        \addlegendimage{Seaborn4}
        \addlegendentry{\sftmRC}; 
        \addlegendimage{Seaborn5}
        \addlegendentry{\sftmdRC};
        % \addlegendimage{white}
        % \addlegendentry{~};
        \addlegendimage{black!30}
        \addlegendentry{w/o \texttt{ENM}};
        \addlegendimage{black!30,dash pattern=on 2pt off 1pt on 2pt off 1pt}
        \addlegendentry{w/ \texttt{ENM}};
        
        \end{axis}
    \end{tikzpicture}}
   
    \begin{subfigure}{0.3\textwidth}
    \includegraphics[width=\textwidth]{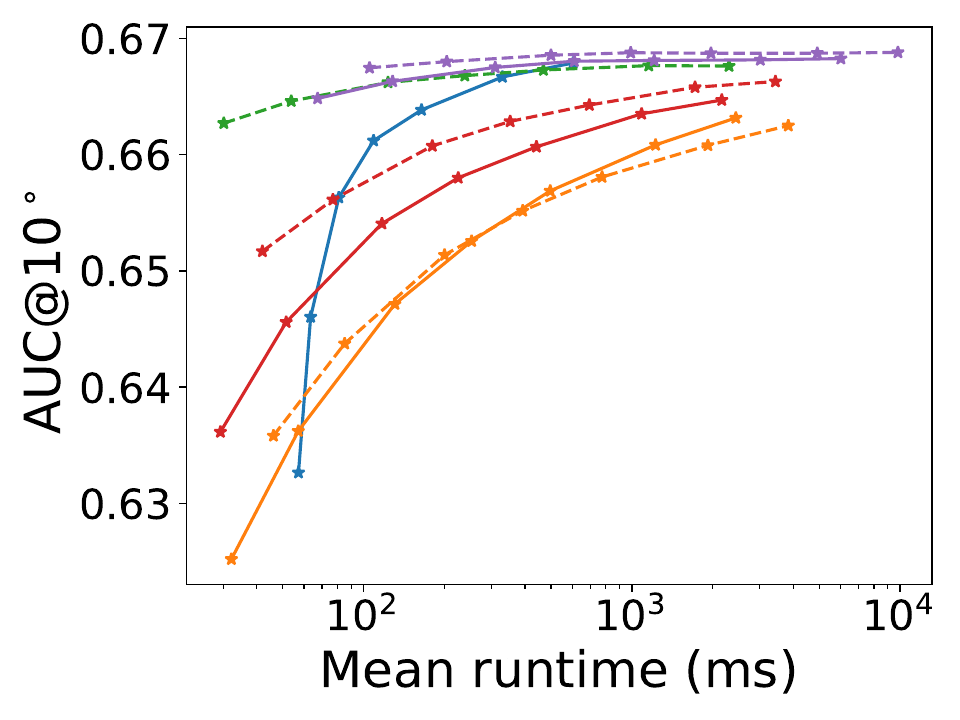}
    \caption{Phototourism~\cite{IMC2020}}
    \end{subfigure}
    \hfill    
    \begin{subfigure}{0.3\textwidth}
    \includegraphics[width=\textwidth]{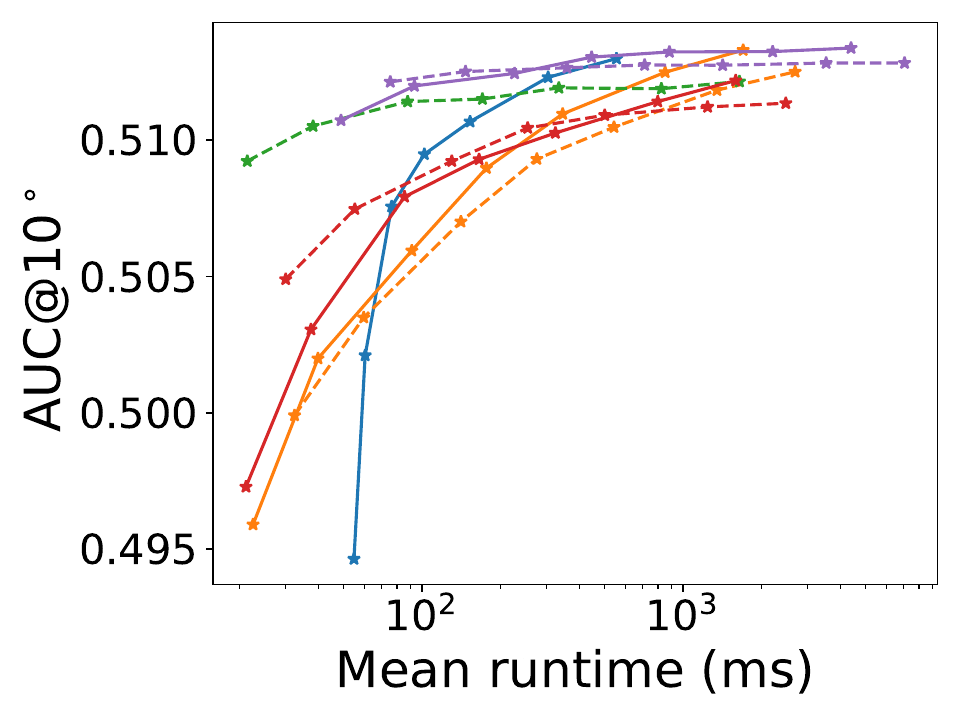}
    \caption{Cambridge Landmarks~\cite{kendall2015cambridge}}
    \end{subfigure}  
    \hfill
    \begin{subfigure}{0.3\textwidth}
    \includegraphics[width=\textwidth]{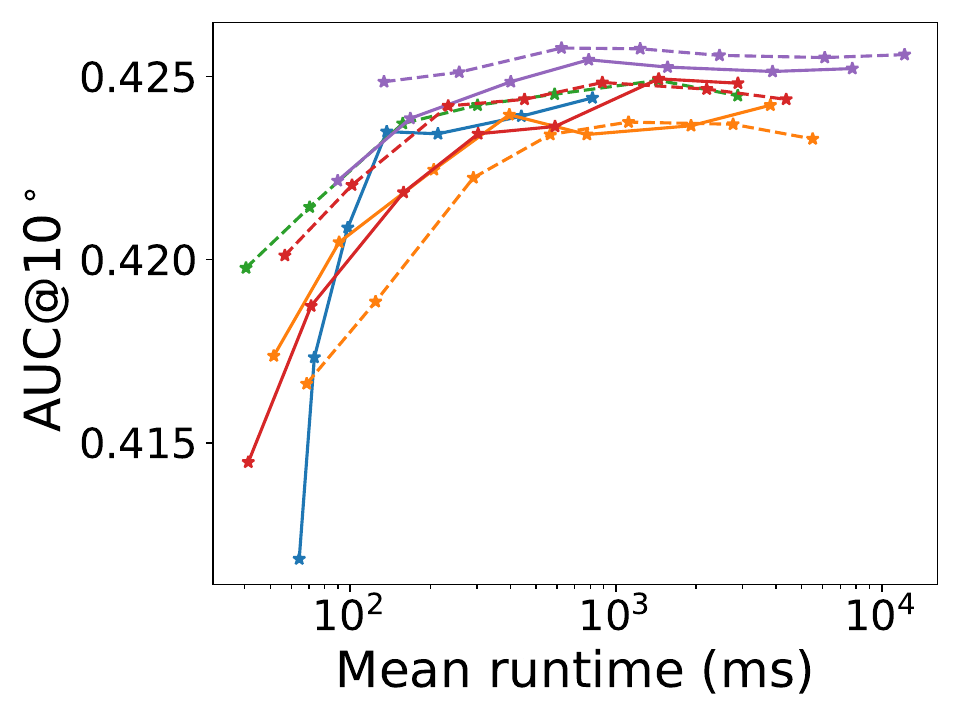}    
    \caption{\textit{Aachen Day-Night v1.1}~\cite{zhang2021aachen}}
    \end{subfigure}
    
    \caption{Experiments with the alternative evaluation measure described in Sec.~\ref{sec:exp:measure}: Speed-accuracy trade-off on (a) all scenes from PhotoTourism~\cite{IMC2020}, except \textit{St.~Peter's Square}, (b) 5  Cambridge Landmarks~\cite{kendall2015cambridge} scenes, and (c) the Aachen Day-Night v1.1~\cite{zhang2021aachen} dataset.
    We report the AUC@10$^\circ$ using the alternative definition of the pose error \eqref{eq:max_error}. We vary the number of Poselib RANSAC iterations ($\{100, 200, 500, 1000, 2000, 5000, 10000\}$). We use an epipolar threshold of 5px in RANSAC. Runtimes are averaged over all image triplets.}
    \label{fig:max_error}
\end{figure*}

\begin{table}[t]
    \centering
    \resizebox{1.0\linewidth}{!}{
\begin{tabular}{ l | c c | c c c | c}
    \toprule
    \multicolumn{7}{c}{Phototourism~\cite{IMC2020}} \\
    \midrule
    Estimator & AVG $(^\circ)$ $\downarrow$ & MED $(^\circ)$ $\downarrow$ & AUC@5 $\uparrow$ & @10 $\uparrow$ & @20 $\uparrow$ & Runtime (s) $\downarrow$\\
    \midrule
\sfhc~\cite{Hruby_cvpr2022} & 5.34 & 1.89 & 59.19 & 72.25 & 82.10 & 2.95 \\ \hline
\sft & 5.18 & 1.86 & 59.48 & 72.35 & 82.16 & \bf{1.99} \\
\sftENM & 5.15 & 1.86 & 59.53 & 72.39 & 82.20 & 2.05 \\
% \hline
% \sfah & 5.27 & 1.95 & 42.94 & 63.12 & 77.35 & 2.10\\
% \sfah \texttt{ENM} & 5.30 & 1.94 & 43.10 & 63.24 & 77.39 & 2.14 \\
%\texttt{3pt$\texttt{H}_{\texttt{A}}$+P3P ENM} & 7.70 & 2.12 & 40.34 & 59.08 & 72.86 & 3.72 \\
\hline
\sfaf & 5.75 & 1.90 & 59.02 & 71.87 & 81.61 & 2.34 \\
\sfafR & 5.69 & \underline{1.72} & 61.59 & 73.83 & 82.88 & 2.98 \\
\sfafRC & 5.76 & 1.73 & 61.53 & 73.79 & 82.83 & 3.00 \\
\sfafRCENM & 5.34 & \underline{1.72} & 61.71 & 74.00 & 83.09 & 2.87 \\
% \hline
% \sfae & 6.93 & 1.97 & 42.56 & 62.51 & 76.43 & 2.46 \\
% \sfae \texttt{ENM} & 6.86 & 1.97 & 42.64 & 62.55 & 76.47 & 2.32\\
\hline
\sftm & 5.21 & 1.88 & 59.35 & 72.29 & 82.15 & \bf{1.99} \\
%\sftm \texttt{ENM} & 5.08 & 1.93 & 43.03 & 62.95 & 77.27 & 2.21 \\
\sftmR & 4.94 & \underline{1.72} & \underline{61.91} & 74.21 & 83.36 & 2.71 \\ 
\sftmRC & 4.94 & \underline{1.72} & 61.88 & 74.22 & 83.35 & 2.71\\ 
\sftmRCENM & 4.93 & \underline{1.72} & 61.84 & 74.21 & 83.38 & 2.73 \\
\hline
\sftmd & 5.09 & 1.89 & 59.41 & 72.50 & 82.38 & \underline{2.01}\\
\sftmdR & 4.90 & \bf{1.71} & 61.90 & 74.26 & 83.42 & 2.76\\
\sftmdRC & \underline{4.88} & \bf{1.71} & \bf{61.95} & \bf{74.31} & \underline{83.47} & 2.75\\ 
\sftmdRCENM & \bf{4.86} & \underline{1.72} & 61.90 & \underline{74.29} & \bf{83.48} & 2.84 \\ 
% \hline
% \sftl & 5.20 & 1.89 & 43.69 & 63.47 & 77.46 & 2.05\\
% \sftlR &  4.92 & \underline{1.72} & 45.83 & 65.29 & 78.69 & 2.77\\
% \sftlRC & 4.92 & \bf{1.71} & \underline{45.94} & \underline{65.35} & \underline{78.73} & 2.71\\
% \sftlRC \texttt{ENM} & 4.92 & \underline{1.72} & 45.89 & 65.32 & 78.70 & 2.76\\
% \sftlidR & \underline{5.97} & \underline{1.89} & \underline{58.65} & \underline{71.43} & \underline{81.35} & 189.24 \\
% \midrule
%\sfto & 4.87 & 1.82 & 44.44 & 63.96 & 77.91 & 2.77\\
% \sftoR & 4.87 & 1.70 & 45.95 & 65.36 & 78.76 & 2.71 \\
%\sfepo &  4.19 & 1.69 & 46.55 & 66.37 & 79.99 & 2.40\\
% \sfepoR & 4.24 & 1.74 & 46.01 & 65.97 & 79.74 & 2.39\\ 
%5.64 & 2.42 & 37.58 & 58.66 & 74.51 & 0.09\\

\midrule
\toprule
    \multicolumn{7}{c}{Cambridge Landmarks~\cite{kendall2015cambridge}} \\
    \midrule

\sfhc~\cite{Hruby_cvpr2022} & 8.13 & 3.05 & 43.75 & 60.73 & 73.93 & 2.37 \\ \hline
\sft & 8.01 & 3.09 & 43.17 & 60.17 & 73.67 & \bf{2.31} \\
\sftENM & 8.09 & 3.11 & 43.15 & 60.02 & 73.50 & 2.48\\
% \hline
% \sfah & 5.27 & 1.95 & 42.94 & 63.12 & 77.35 & 2.10\\
% \sfah \texttt{ENM} & 5.30 & 1.94 & 43.10 & 63.24 & 77.39 & 2.14 \\
%\texttt{3pt$\texttt{H}_{\texttt{A}}$+P3P ENM} & 7.70 & 2.12 & 40.34 & 59.08 & 72.86 & 3.72 \\
\hline
\sfaf & 8.59 & 3.11 & 43.16 & 59.98 & 73.15 & 2.62\\
\sfafR & 7.95 & \bf{2.80} & 46.27 & 63.20 & 75.86 & 2.96 \\
\sfafRC & 8.05 & \underline{2.81} & 46.28 & 63.17 & 75.75 & 2.98 \\
\sfafRCENM & 7.75 & \underline{2.81} & 46.38 & 63.20 & 75.86 & 2.98 \\
% \hline
% \sfae & 6.93 & 1.97 & 42.56 & 62.51 & 76.43 & 2.46 \\
% \sfae \texttt{ENM} & 6.86 & 1.97 & 42.64 & 62.55 & 76.47 & 2.32\\
\hline
\sftm & 7.95 & 3.08 & 43.32 & 60.37 & 73.75 & \underline{2.34}\\
%\sftm \texttt{ENM} & 5.08 & 1.93 & 43.03 & 62.95 & 77.27 & 2.21 \\
\sftmR & 7.22 & \bf{2.80} & \underline{46.48} & \bf{63.52} & \underline{76.30} & 2.86 \\ 
\sftmRC & 8.05 & \underline{2.81} & 46.28 & 63.17 & 75.75 & 2.98 \\ 
\sftmRCENM & 7.75 & \underline{2.81} & 46.38 & 63.20 & 75.86 & 2.98\\
\hline
\sftmd & 7.82 & 3.06 & 43.58 & 60.60 & 73.99 & 2.41 \\
\sftmdR & \underline{7.16} & \bf{2.80} & \bf{46.52} & \underline{63.51} & 76.25 & 3.01 \\
\sftmdRC & \bf{7.12} & \underline{2.81} & 46.33 & 63.47 & \bf{76.35} & 2.95 \\ 
\sftmdRCENM & 7.19 & 2.80 & 46.42 & 63.44 & 76.24 & 3.29 \\ 

\midrule
\toprule
    \multicolumn{7}{c}{Aachen Day-Night v1.1~\cite{zhang2021aachen}} \\
    \midrule

\sfhc~\cite{Hruby_cvpr2022} & 10.73 & 3.84 & 39.94 & 53.01 & 64.77 & 1.90 \\ \hline
\sft & 10.76 & 3.91 & 39.54 & 52.67 & 64.56 & \bf{1.77} \\
\sftENM & 10.78 & 3.79 & 39.86 & 53.06 & 64.79 & 1.89\\
\hline
\sfaf & 11.06 & 3.88 & 39.54 & 52.63 & 64.19 & 2.24\\
%\sfafENM & 11.04 & 3.89 & 39.43 & 52.48 & 64.12 & 2.68 \\
\sfafR & 10.09 & 3.50 & 42.64 & 55.75 & 67.01 & 3.32 \\
\sfafRC & 10.22 & 3.49 & 42.58 & 55.75 & 66.87 & 3.38 \\
\sfafRCENM & 10.23 & 3.48 & 42.58 & 55.75 & 66.88 & 3.32 \\
% \hline
% \sfae & 6.93 & 1.97 & 42.56 & 62.51 & 76.43 & 2.46 \\
% \sfae \texttt{ENM} & 6.86 & 1.97 & 42.64 & 62.55 & 76.47 & 2.32\\
\hline
\sftm & 10.62 & 3.83 & 39.62 & 52.92 & 64.74 & \underline{1.87}\\
%\sftm \texttt{ENM} & 5.08 & 1.93 & 43.03 & 62.95 & 77.27 & 2.21 \\
%\sftmENM & 10.52 & 3.84 & 39.88 & 53.21 & 64.98 & 2.57 \\
\sftmR & 10.11 & \underline{3.46} & \underline{42.69} & \underline{55.90} & \underline{67.11} & 3.17 \\ 
\sftmRC & 10.22 & 3.53 & 42.58 & 55.62 & 66.71 & 3.18 \\ 
\sftmRCENM & \underline{10.06} & 3.52 & 42.55 & 55.76 & 67.00 & 3.20\\
\hline
\sftmd & 10.55 & 3.84 & 39.86 & 53.30 & 65.15 & 1.89 \\
%\sftmdENM & 10.40 & 3.74 & 40.46 & 53.65 & 65.37 & 3.28 \\
\sftmdR & \bf{10.04} & \bf{3.45} & \bf{42.91} & \bf{56.02} & \bf{67.15} & 3.24 \\
\sftmdRC & 10.15 & 3.47 & 42.65 & 55.76 & 66.94 & 3.17 \\ 
\sftmdRCENM & \bf{10.04} & 3.50 & 42.67 & 55.82 & 66.95 & 3.33 \\ 

\bottomrule

    \end{tabular}}
    \caption{Results for different solvers and strategies implemented in the GC-RANSAC framework~\cite{barath2017graph} for all scenes from the PhotoTourism~\cite{IMC2020}, the Cambridge Landmarks~\cite{kendall2015cambridge} and Aachen Day-Night v1.1~\cite{zhang2021aachen} datasets. % are presented in Tab.~\ref{tab:gcr_phototourism}.
    %and the Aachen Day-Night V1.1 dataset~\cite{zhang2021aachen}. 
    We mark the \textbf{best} and \underline{second best} results. Runtimes are reported in seconds for the whole RANSAC with early termination (0.9999 confidence, minimum 100 iterations) and the epipolar threshold set to 5px.}%Best and second best results (exluding oracle solvers) are highlighted with bold and underline font.}
    \label{tab:gcr_phototourism}
\end{table}

\begin{figure*}[!t]
    \centering
    \resizebox{0.7\linewidth}{!}{
\begin{tikzpicture} 
        \begin{axis}[%
        hide axis, xmin=0,xmax=0,ymin=0,ymax=0,
        legend style={draw=white!15!white, 
        line width = 1pt,
        legend  columns =6, % comment for column display
        /tikz/every even column/.append style={column sep=0.2cm},
        }
        ]
        
        % \addlegendimage{Seaborn1}        \addlegendentry{\sfhc~\cite{Hruby_cvpr2022}};
        % \addlegendimage{Seaborn2}
        % \addlegendentry{\sft};
        \addlegendimage{Seaborn3}
        \addlegendentry{\sfaf};
        % \addlegendimage{Seaborn4}
        % \addlegendentry{\sfah}; 
        \addlegendimage{Seaborn4}
        \addlegendentry{\sftm}; 
        \addlegendimage{Seaborn5}
        \addlegendentry{\sftmd};
        % \addlegendimage{white}
        % \addlegendentry{~};
        \addlegendimage{black!30,dash pattern=on 2pt off 1pt on 2pt off 1pt}
        \addlegendentry{\texttt{+R}};
        \addlegendimage{black!30,dash pattern=on 2pt off 1pt on 0.66pt off 1pt}
        \addlegendentry{\texttt{+R+F}};
        \addlegendimage{black!30,dash pattern=on 0.5pt off 1pt on 0.5pt off 1pt}
        \addlegendentry{\texttt{+R+F+ENM}};
        \end{axis}
    \end{tikzpicture}}
    \begin{subfigure}{0.3\textwidth}
        \includegraphics[width=\linewidth]{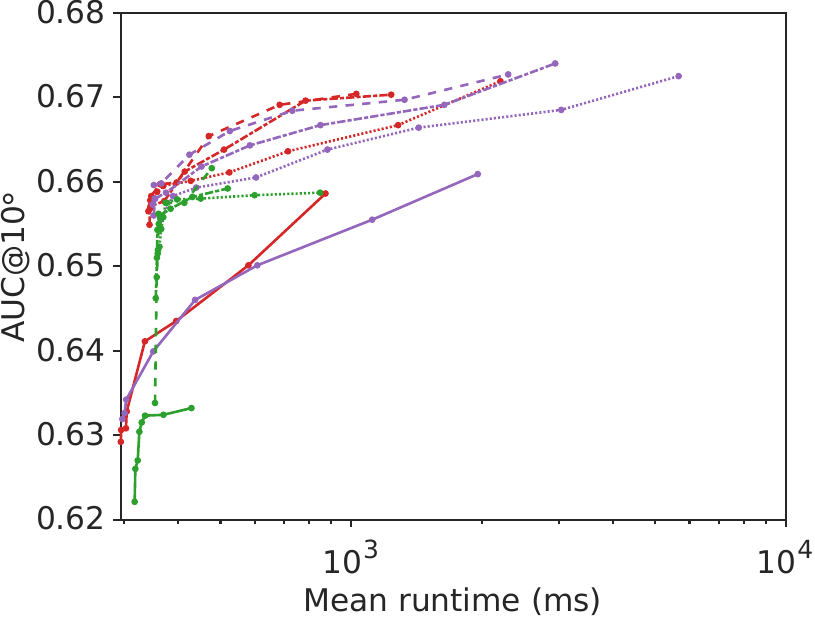}
    \end{subfigure}
    \hfill
    \begin{subfigure}{0.3\textwidth}
        \includegraphics[width=\linewidth]{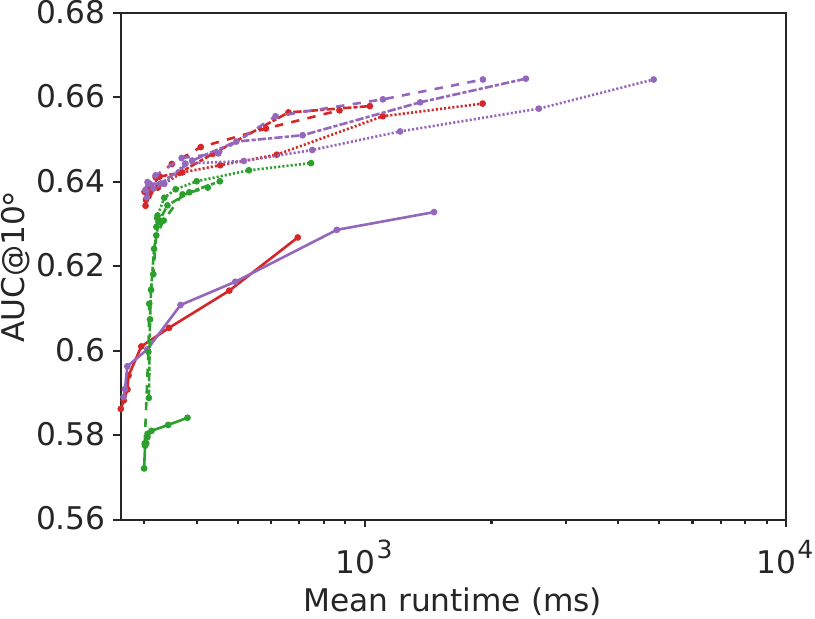}
    \end{subfigure}
    \hfill
    \begin{subfigure}{0.3\textwidth}
        \includegraphics[width=\linewidth]{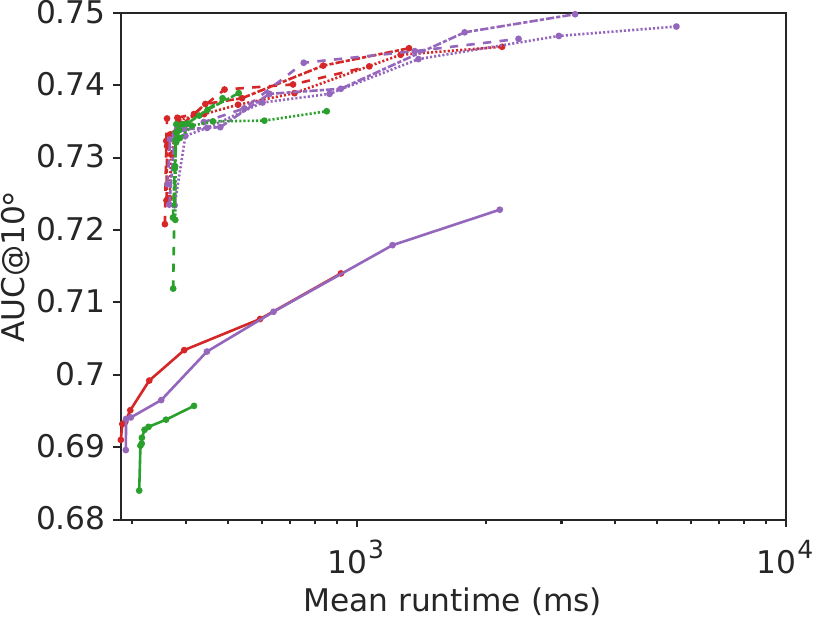}
    \end{subfigure}
    \caption{Speed-accuracy evaluation of various solvers for three view relative pose estimation, evaluated using GC-RANSAC~\cite{barath2017graph} on the (left) \textit{St. Mary's Church}, (middle) \textit{Shop Facade}, and (right) \textit{King's College} scenes from the Cambridge Landmarks dataset~\cite{kendall2015cambridge}. We report the AUC@10$^\circ$ of the pose error and vary the number of  RANSAC iterations ($\{5, 10, 20, 50, 100, 200, 500, 1000\}$) with fixed 5px epipolar threshold.}
    \label{fig:gcr_graphs}
\end{figure*}

\subsection{GC-RANSAC}
\label{sec:exp:gc_ransac}
% \PAR{Experiments on GC-RANSAC} 
Besides PoseLib's RANSAC implementation, we also evaluated and compared our proposed solvers with the state-of-the-art solvers inside the GC-RANSAC~\cite{barath2017graph} framework. 
% Results for all scenes of the PhotoTourism~\cite{IMC2020}, the Cambridge Landmarks~\cite{kendall2015cambridge} and Aachen Day-Night v1.1~\cite{zhang2021aachen} datasets are presented in Tab.~\ref{tab:gcr_phototourism}.

In GC-RANSAC, local optimization (LO) is performed using non-minimal solvers that fit models to larger-than-minimal samples. We use the non-minimal version\footnote{The non-minimal version of the \texttt{5pt} solver~\cite{Nister-5pt-PAMI-2004}  uses the last four vectors from the SVD/QR decomposition of a $n \times 9$ matrix instead of the 4-dim null space of a $5\times 9$ matrix to parameterize the unknown essential matrix.} of the \texttt{5pt} solver~\cite{Nister-5pt-PAMI-2004} and the non-minimal absolute pose \texttt{DLSPnP}~\cite{dlspnp} solver.\footnote{This may not be the most efficient way how to perform non-minimal refitting. However, since all methods use the same LO, it is sufficient for a fair comparison.}
In contrast to LO used in Poselib RANSAC, where the estimated model is used as an initialization of the Levenberg–Marquardt algorithm, in GC-RANSAC, the estimated model is used only to score inliers. 
% Notice the similarity in terms of model refitting between LO in GC-RANSAC and the \texttt{ENM} strategy. 

Tab.~\ref{tab:gcr_phototourism} shows the results 
for GC-RANSAC with PROSAC sampling~\cite{prosac2005chum} and a 5px epipolar threshold for all scenes from the PhotoTourism~\cite{IMC2020} dataset, 5 scenes from the Cambridge Landmarks~\cite{kendall2015cambridge} dataset, and the Aachen Day-Night v1.1~\cite{zhang2021aachen} dataset. 
Similarly to what was observed for Poselib RANSAC (see Table 2 in the main paper), with the suggested modifications, all the proposed
solvers outperform the state-of-the-art %HC-based 
\sfhc solver~\cite{Hruby_cvpr2022} and the baseline \sft solver in terms of pose accuracy with comparable runtimes. 
Again, the $\delta$-based solvers provide, in general, the best speed-accuracy trade-off.

Due to a different LO, there are several differences compared to the results from Poselib RANSAC. Since in GC-RANSAC the estimated model is used only to score inliers, it does not need to be as precise as in Poselib RANSAC. Thus, even \sfaf solver without any modification provides reasonably precise results.\footnote{Note that the model of \sfaf is refitted in the LO step with the non-minimal \texttt{5pt} solver. This is similar to the refitting used in \texttt{ENM}, \ie, GC-RANSAC's local optimization includes some form of \texttt{ENM}, explaining why the \sfaf performs quite well.} 
In contrast to this, in Poselib RANSAC, the \sfaf solver without any modification results in large errors (see Table 2 in the main paper). 
Without refitting using \texttt{ENM}, the affine model estimated for the first two views in the \sfaf solver is 
%without refitting using \texttt{ENM} 
not sufficiently precise to provide a good initialization for Levenberg–Marquardt-based optimization in Poselib's LO.
Still, even for GC-RANSAC, the pure \sfaf solver performs %much 
worse than the remaining variants of the proposed \sfaf-based and \sftm-based solvers.

Another difference is in refitting using \texttt{ENM}. For GC-RANSAC, the effect of \texttt{ENM} is not as significant as for Poselib RANSAC. 
When applied without refinement (\texttt{+R}), the early non-minimal refitting (\texttt{ENM}), in general, increases the precision of solvers. 
When combined with \texttt{+R}, the improvement is not very visible. 
The reason is that the model returned after refining the initial, approximate model estimated by the \sftm-based and \sfaf-based solvers on the $4^{th}$ correspondence % in \texttt{+R}, 
is usually sufficiently accurate to score inliers.
Moreover, in the LO of GC-RANSAC, this approximate model is refitted using the non-minimal \texttt{5pt} solver (which is similar to the refitting that is used in \texttt{ENM}), and the non-minimal \texttt{DLSPnP} solver~\cite{dlspnp}. Similarly to \texttt{ENM}, the filtering (\texttt{+F}) that uses the $4^{th}$ correspondence does not bring an improvement that is as visible as for Poselib RANSAC. This is because for GC-RANSAC, the speedup obtained using the filtering \texttt{+F} is not as significant, compared to the longer running times of the LO part of GC-RANSAC. 

On the other hand, the remaining two suggested modifications, \ie, the $\delta$-based solvers and the refinement (\texttt{+R}) using the $4^{th}$ correspondence bring visible improvements. 
This behavior is also visible in Figure~\ref{fig:gcr_graphs}. Here we present an ablation study on the effects of the various modifications ($\delta$ and \texttt{+F/+R/+ENM}, which were introduced in Sec. 3.2 of the main paper) on the \sftm-based and \sfaf-based solvers. 
The results are reported on the \textit{St.~Mary's Church}, \textit{Shop Facade}, and \textit{King's College} scenes from the Cambridge Landmarks dataset~\cite{kendall2015cambridge}. These results especially highlight the importance of the refinement using the $4^{th}$ point in the third view (\texttt{+R}).
On the other hand, the benefits of \sftmd-based solvers over the \sftm-based solvers that are also visible in Table~\ref{tab:gcr_phototourism} are not so significant as in Poselib RANSAC. 
Still, \sftmd-based solvers lead to improved pose accuracy. %improve the performance. %there is still some improvement. 

{\small
\bibliographystyle{ieeenat_fullname}
\bibliography{bibliography}
}

\end{document}